\newtheorem{theorem} {Theorem}
\newtheorem{proposition} {Proposition}
\newtheorem{lemma} {Lemma}
\newtheorem{corollary} {Corollary}
\newtheorem*{th:distortion}{Theorem~\ref{th:distortion}}
\newtheorem*{th:asymptotic_distortion}{Theorem~\ref{th:asymptotic_distortion}}
\newtheorem*{th:consistent}{Theorem~\ref{th:consistent}}
\newcommand\argmax{\mathop{\mbox{{\rm argmax}}}}
\newcommand\E{\mathop{\mbox{{\rm E}}}}
\newcommand{\ow}{\overline{w}}
\newcommand{\ox}{\overline{x}}
\newcommand{\os}{\overline{s}}
\newcommand{\oS}{\overline{S}}
\newcommand{\hw}{\hat{w}}
\newcommand{\hx}{\hat{x}}
\newcommand{\hL}{\hat{L}}
\newcommand{\heta}{\hat{\eta}}
\newcommand{\htheta}{\hat{\theta}}
\newcommand{\hq}{\hat{q}}
\newcommand{\tx}{\tilde{x}}
\newcommand{\cx}{\check{x}}
\newcommand{\Z}{\mathcal{Z}}
\newcommand{\hd}{\hat{d}^{\RMS}}
\newcommand{\hcE}{\hat{\cal E}^{\RMS}}
\def\veryTiny{\fontsize{4pt}{4pt}\selectfont}
\newcommand\B{\mbox{\veryTiny {\rm B}}}
\newcommand\KL{\mbox{\veryTiny {\rm KL}}}
\newcommand\RMS{\mbox{\veryTiny {\rm RMS}}}
\newcommand{\hX}{\hat{X}}
\newcommand{\jointD}{\psi}
\newcommand{\typeD}{\psi_{\oS}}
\newcommand{\ratingsD}{\psi_{S}}
\newcommand{\jointDSet}{\Psi}
\newcommand{\typeDSet}{\Psi_{\oS}}
\newcommand{\jointDH}{\hat{\psi}}
\newcommand{\typeDH}{\hat{\psi}_{\oS}}
\newcommand{\mJointD}{\pi}
\newcommand{\mTypeD}{\pi_{\oS}}
\newcommand{\mRatingsD}{\pi_{S}}
\newcommand{\hJointD}{\mu}
\newcommand{\hTypeD}{\mu_{\oS}}
\newcommand{\hRatingsD}{\mu_{S}}
\newcommand{\aJointD}{\phi}
\newcommand{\aTypeD}{\phi_{\oS}}
\newcommand{\aRatingsD}{\phi_{S}}
\newcommand{\nbJointD}{\phi}
\newcommand{\nbTypeD}{\phi_{\oS}}
\newcommand{\nbTypeDSet}{\Phi}
\newcommand{\Kernel}{{\cal K}}
\newcommand{\kernel}{k}
\newcommand{\osMax}{{\os_{\mbox{\veryTiny {\rm max}}}}}
\newcommand{\osMin}{{\os_{\mbox{\veryTiny {\rm min}}}}}
\newcommand{\emailhref}[1]{\href{mailto:#1}{\tt #1}} 
\begin{document}

\bibliographystyle{plain}

\title{Manipulation Robustness of Collaborative Filtering Systems}

\author{
Xiang Yan \\
Stanford University \\
\emailhref{xyan@stanford.edu} \\
\and
Benjamin Van Roy \\
Stanford University \\
\emailhref{bvr@stanford.edu} \\
}

\maketitle

\singlespacing

\begin{abstract}
A collaborative filtering system recommends to
users products that similar users like.
Collaborative filtering systems influence purchase decisions,
and hence have become targets of manipulation by unscrupulous
vendors. We provide theoretical and empirical results demonstrating
that while common nearest neighbor algorithms, which are widely used in commercial systems,
can be highly susceptible to manipulation, two classes of collaborative filtering
algorithms which we refer to as {\it linear} and {\it asymptotically linear}
are relatively robust. These results provide guidance for the design of future collaborative
filtering systems.
\end{abstract}

\doublespacing

\section{Introduction}

While the expanding universe of products available via Internet commerce
provides consumers with valuable options, sifting through the numerous alternatives to
identify desirable choices can be challenging.  Collaborative filtering (CF) systems aid
this process by recommending to users products desired by similar individuals.

At the heart of a CF system is an algorithm that predicts whether a given user will
like various products based on his past behavior and that of other users.  Nearest neighbor
(NN) algorithms, for example, have enjoyed wide use in commercial
CF systems, including those of Amazon, Netflix, and Youtube \cite{Bennett, Linden, Ryan}.
A prototypical NN algorithm stores each user's history, which may include, for instance, his product ratings and
purchase decisions.  To predict whether a particular user will like a particular product, the algorithm
identifies a number of other users with similar histories.
A prediction is then
generated based on how these so-called neighbors have responded to the product.  This prediction could
be, for example, a weighted average of past ratings supplied by neighbors.

Because purchase decisions are influenced by CF systems, they have become targets of
manipulation by unscrupulous vendors.  For instance, a vendor can create multiple online
identities and use each to rate his own product highly and competitors' products poorly.
As an example, Amazon's CF system was manipulated
so that users who viewed a spiritual guide written by a well-known Christian
evangelist were subsequently recommended a sex manual for gay men \cite{Olsen}.
Although this incident may not have been driven by commercial motives,
it highlights the vulnerability of CF systems.
The research literature
offers further empirical evidence that NN algorithms are susceptible to manipulation \cite{Burke2005a, Lam, Mehta2008, Mobasher2005, Mobasher2006, OMahony, Sandvig, Zhang}.

In order to curb manipulation, one might consider authenticating each user by asking for, say, a credit card number
to limit the number of fake identities. This may be effective in some situations.
However, in web services that do not facilitate financial transactions, such as Youtube,
requiring authentication would intrude privacy and drive users away.
One might also consider using only customer purchase data, when they are available, as a basis for recommendations
because they are likely generated by honest users. Recommendation quality may be improved, however, if higher-volume data
such as page views are also properly utilized.

In this paper, we seek to understand the extent to which manipulators can hurt the performance
of CF systems and how CF algorithms should be designed to abate their influence.
We find that, while NN algorithms can be quite sensitive to manipulation, CF algorithms that carry
out predictions based on a particular class of probabilistic models are surprisingly robust.
For reasons that we will explain in the paper, we will refer to algorithms of this kind as {\it linear CF algorithms}.

We find that as a user rates an increasing number of products, the average accuracy of
predictions made by a linear CF algorithm becomes insensitive to manipulated data.
For instance, even if half of all ratings are provided by manipulators who
try to promote half of the products, predictions
for users with long histories will barely be distorted, on average. To provide some intuition for
why our results should hold, we now offer an informal argument.
A robust CF algorithm should learn from its mistakes. In particular, differences between its predictions
and actual ratings should help improve predictions on future ratings.
A linear CF algorithm generates predictions based on a probability distribution
that is a convex combination of two distributions: one that it would learn
given only data generated by honest users
and one that it would learn given only manipulated data. As a user
whose ratings we wish to predict provides more ratings,
it becomes increasingly clear which of these two distributions
better represents his preferences.
As a result, the weight placed on manipulated data diminishes and
distortion vanishes.

The main theoretical result of this paper formalizes the above argument.  In particular,
we will define a notion of distortion induced by manipulators and establish an upper bound on distortion,
which takes a particularly simple form:
$${\rm distortion} \leq \frac{1}{n}\ln\frac{1}{1-r}.$$
Here $r$ is the fraction of data that is generated by manipulators and $n$ is the number
of products that have already been rated by a user whose future ratings we wish to predict.
The bound is very general. First, it applies to all linear CF algorithms. Second, it applies
to all manipulation strategies even if manipulators coordinate their actions and produce data
with knowledge of all data generated by honest users.
The bound demonstrates that as the number of prior ratings $n$ increases, distortion vanishes.
It also identifies the number required to limit distortion to a certain level.
This offers guidance for the design of a recommendation system: the system may, for example,
assess and inform users about the confidence of each recommendation.
The system may also require a new user to rate a set number of products before
making recommendations to him. To put this in perspective, consider the following numerical example.
Suppose a CF system that accepts binary ratings
predicts future ratings correctly $80\%$ of the time in the absence of manipulation.
If $10\%$ of all ratings are provided by manipulators, according to our bound, the system can maintain a $75\%$
rate of correct predictions by requiring each new user to rate at least $21$ products
before receiving recommendations.

To broaden the scope of our analysis, we will also study CF algorithms that
behave like linear CF algorithms asymptotically as the size of the training
set grows.  This class of algorithms, which we refer to as {\it asymptotically linear},
is more flexible in accommodating modeling assumptions that may improve
prediction accuracy.  We will establish that a relaxed version of our distortion bound for linear CF algorithms
applies to asymptotically linear CF algorithms.

We will also show that our distortion bound does not generally hold for NN algorithms.
Intuitively, this is because prediction errors do not always improve the selection of neighbors.
In particular, as a user provides more ratings,
manipulated data that contribute to inaccurate predictions of his future ratings
may remain in the set of neighbors while
data generated by honest users may be eliminated from it. As a result,
distortion of predictions may not decrease.
We will later provide an example to illustrate this.

In addition to theoretical results, this paper provides an empirical analysis using a publicly available set of
movie ratings generated by users of Netflix's recommendation system. We produce a distorted version
of this data set by injecting manipulated ratings generated using a manipulation technique studied in prior
literature.  We then compare results from application of three CF algorithms:
an NN algorithm, a linear CF algorithm called the kernel density estimation algorithm,
and an asymptotically linear CF algorithm called the naive Bayes algorithm.
Results demonstrate that while performance of the NN algorithm
is highly susceptible to manipulation, those of kernel density estimation and naive Bayes algorithms
are relatively robust. In particular, the latter two experience distortion lower than the theoretical bound
we provide, whereas the distortion for the former exceeds it by far.

One might also wonder whether manipulation robustness of a CF algorithm comes at the expense of
its prediction accuracy. As an example, consider an algorithm that fixes predictions for all ratings
to be a constant, without regard to the training data. This algorithm is uninfluenced by manipulation
but is likely to yield poor predictions, and is therefore not useful.
In our experiments, the accuracy demonstrated by the three algorithms all seems reasonable.
This suggests that accuracy of a CF algorithm may be achieved alongside robustness.

Our theoretical and empirical results together suggest that commercial recommendation systems
using NN algorithms can be made more robust by adopting approaches that we describe.
Note that we are not proposing that real-world systems should implement the specific algorithms
we present in this paper. Rather, our analysis highlights properties of CF algorithms that lead
to robustness and practitioners may benefit from taking these properties into consideration when
designing CF systems.

This paper is organized as follows. In the next section, we discuss some related work.
In Section \ref{se:model}, we formulate a simplified model that serves as a context for studying
alternative CF algorithms.  We then establish results concerning the manipulation robustness
of NN, linear, and asymptotically linear CF algorithms in Section \ref{se:cf}.
In Section \ref{se:empirical}, we present our empirical study.
We make some closing remarks in a final section.

\section{Related Work}

Early research on CF systems focused on their performance in the absence of manipulation
\cite{Breese, Drineas, Herlocker, Kleinberg, Motwani, Moon, Sarwar, Schafer}.
Almost all work on manipulation robustness has been empirical.  For example,
\cite{Burke2005a, Lam, Mehta2008, Mobasher2005, Mobasher2006, OMahony, Sandvig, Zhang}
present studies on product ratings made publicly available by Internet commerce
sites.  In each case, manipulated ratings were injected, and CF algorithms were tested on the
altered data sets.  The results point out that NN algorithms and their variants are susceptible to manipulation.
This line of work identifies an effective manipulation scheme, which is to
create multiple identities and with each identity, provide positive ratings
on products to be promoted while rating other products in a manner indistinguishable from that of honest users.
In \cite{Mehta2008, Mobasher2006, Zhang},
algorithms based on probabilistic latent semantic analysis and
principal component analysis were tested.  It turns out that these algorithms are
asymptotically linear under certain assumptions about the data, and indeed,
empirical results in these papers suggest
that they are relatively robust to manipulation.  These prior results support the conclusions of our work.

To the best of our knowledge, the only prior theoretical work on manipulation robustness of
CF algorithms is reported in \cite{OMahony}.
This work analyzed an NN algorithm that uses the majority rating among a set of neighbors as
the prediction of a user's rating in an asymptotic regime of many users,
each of whom rates all products. Manipulators rate as honest users would
except on one fixed product. A bound is established on the algorithm's prediction error on this product's
rating as a function of the percentage of ratings provided by manipulators.
In our work, we do not require users to rate all products and do not constrain manipulators
to any particular strategies. Further, we study the performance distortion on average, rather than for
a single product.  Finally, a primary contribution of our work is in establishing manipulation robustness
of linear and asymptotically linear CF algorithms, which turn out to be superior to NN algorithms in this dimension.

Several researchers have proposed alternative approaches to abating the influence of manipulators.
In \cite{Resnick}, a mechanism is proposed where users accumulate reputations while
providing ratings that are later validated by observed product quality, and a user's influence
on ratings predictions is limited by his reputation.
In this mechanism, a bound is established on the distortion induced by any finite number of manipulators.
In \cite{Massa2006,Odonovan}, researchers propose leveraging trust relationships among users to weight
recommendations and fend off manipulation. \cite{Mehta2007a, Mobasher2007a, Sandvig, Williams}
suggest detecting manipulated ratings based on their patterns and discounting their impact.
Our work complements this growing literature.  First, additional sources of information can be integrated
into the probabilistic framework that we introduce in this paper to further enhance manipulation robustness.  Second, the
analytical methods that we develop may be useful for studying the benefits
of incorporating such information.

Distortion due to manipulation may also be viewed as a loss of utility in a sequential decision problem
induced by errors in initial beliefs.  Our analysis is based on ideas similar to those that have been used to
study the latter topic, which is discussed in \cite{Gossner}.

More broadly speaking, apart from collaborative filtering, there are other ways to aggregate users'
response to products in order to provide recommendations. Research has been performed on the manipulation
robustness of these systems as well.
To get a flavor of this line of work, see \cite{Bhattacharjee, Dellarocas, Friedman, Miller}.

\section{Model}\label{se:model}

We now formulate a simplified model that will serve as a context for assessing performance of
alternative CF algorithms. We will first define the product ratings that we work with and then introduce
measures of distortion induced by manipulators.
For the convenience of the reader, we summarize our mathematical notation in a table in Appendix \ref{app:notation}.

\subsection{Ratings Vectors}
In our model, a user selects ratings from a set $\oS$.
To simplify our discussion, we let $\oS$ be a finite subset of $[0,1]$.
For example, $\oS$ could be $\{0,1\}$ with $0$ representing a negative
rating and $1$ representing a positive rating. Note that all the results in this paper
can be easily generalized to accommodate any finite set $\oS$.
There are $N$ products, and a user's type is identified
by a vector in $\oS^N$.  Each $n$th component of this vector reflects how the user would
rate the $n$th product after inspecting it.

The CF system has access to ratings provided by $M$ identities who have rated products in the past.
The data from each $m$th identity
takes the form of a ratings vector $w^m \in S^N$, where $S = \oS \cup \{?\}$.  Here, an element of $\oS$ represents a product rating
whereas a question mark indicates that a product has not been rated.  We refer to $W = (w^1, \ldots, w^M) \in S^{N\times M}$
as the {\it training data}.  This data is used by a CF algorithm to predict future ratings.

Consider a user who is distinct from identities that generated the training data and for whom we will generate recommendations.
We will refer to such a user as an {\it active user}.  We will think of a CF algorithm as providing a probability mass function (PMF)
$p_{n,x,W}$ over $\oS$ for each triplet
$(n,x,W) \in \{1,\ldots,N\} \times S^N \times  S^{N \times M}$.  The PMF $p_{n,x,W}$ represents beliefs about
how an active user who has so far provided ratings $x$ would rate product $n$ after inspecting it.
Such an algorithm can be used to guide recommendations; for example,
the CF system might recommend to the active user the product he is most likely to rate highly among those
that he has not already rated.

\subsection{Distortion Measures}\label{su:distortion_measures}

To study the influence of manipulation, we consider a situation where a fraction $r$ of the identities are created by manipulators,
while the remaining fraction $1-r$ correspond to distinct honest users.  We denote the honest ratings vectors by
$y^1,\ldots,y^{(1-r)M} \in S^N$ and the manipulated ratings vectors by $z^1,\ldots,z^{rM} \in S^N$.
Let $Y = (y^1, \ldots, y^{(1-r)M})$ and $Z = (z^1,\ldots,z^{rM})$ so that the training data is $W = (Y, Z)$.

To assess distortion of predictions made by a CF algorithm, we consider the following thought experiment.  A
hypothetical active user begins with a ratings vector $x^0$, with each $n$th component set to $x^0_n =\ ?$,
and inspects products in an order  $\nu = (\nu_1, \ldots, \nu_N) \in \sigma_N$, where $\sigma_N$
denotes the set of permutations of $\{1,\ldots,N\}$.  After inspecting product $\nu_k$, the user rates
it by sampling from the PMF $p_{\nu_k, x^{k-1}, Y}$.  An updated ratings vector $x^k$ is generated
by incorporating this new rating in $x^{k-1}$.  This stochastic process reflects how we would think honest
users behave based on the CF algorithm and uncorrupted data set $Y$.  We
introduce the following measure of distortion, which we refer to as {\it Kullback-Leibler (KL) distortion}:
$$d_n^{\KL}(p, \nu, Y, Z) = \frac{1}{n} \sum_{k=1}^n \E\left[D\left(p_{\nu_k,x^{k-1},Y} \mid\mid p_{\nu_k,x^{k-1},(Y,Z)}\right)\right],$$
where $D$ denotes Kullback-Leibler divergence with the natural log. That is, for any two PMFs
$p$ and $q$ over support $U$, $D(p \mid\mid q) = \sum_{u \in U} p(u) \ln \left( p(u) / q(u) \right)$.
This measure of the difference between PMFs is commonly used in information theory.

For each $k$, the PMF $p_{\nu_k,x^{k-1},Y}$ represents the prediction that would be made in the absence of manipulators, whereas
$p_{\nu_k,x^{k-1},(Y,Z)}$ is what it becomes as a consequence of manipulation.  Hence,
$D\left(p_{\nu_k,x^{k-1},Y} \mid\mid p_{\nu_k,x^{k-1},(Y,Z)}\right)$ measures the extent to which the manipulated data $Z$
influences the prediction.  We take the expectation of this quantity, with $x^{k-1}$ distributed as the CF algorithm
would have predicted if the data set were not corrupted by manipulated data.
KL distortion $d_n^{\KL}(p, \nu, Y,Z)$ averages these terms over the first $n$ inspected products.

Some algorithms such as NN algorithms generate predictions not in the form of PMFs, but as scalars that
may be interpreted as the means of PMFs.
For these algorithms,
it may be more suitable to measure manipulation impact in terms of {\it root-mean-squared (RMS) distortion}:
$$d_n^{\RMS}(p, \nu, Y, Z) = \sqrt{\frac{1}{n} \sum_{k=1}^n \E\left[\left(\tx_{\nu_k,x^{k-1},Y} - \tx_{\nu_k,x^{k-1},(Y,Z)}\right)^2 \right]},$$
where $\tx_{\nu_k,x^{k-1},Y}$ and $\tx_{\nu_k,x^{k-1},(Y,Z)}$ denote the scalar predictions of $\ox_{\nu_k}$
by the algorithm based on ratings history $x^{k-1}$ and data sets $Y$ and $(Y,Z)$, respectively.
Note that if the algorithm generates PMFs as predictions, $\tx_{\nu_k,x^{k-1},Y}$ and $\tx_{\nu_k,x^{k-1},(Y,Z)}$
would be expectations of $\ox_{\nu_k}$ taken with respect to
$p_{\nu_k,x^{k-1},Y}$ and $p_{\nu_k,x^{k-1},(Y,Z)}$, respectively.
The expectation in the definition of RMS distortion is taken with $x^{k-1}$ distributed as the CF
algorithm would have predicted based on $Y$.
RMS distortion may offer a more
transparent assessment than KL distortion because the former computes
how much scalar predictions change in the same unit as the predictions themselves.
RMS distortion is bounded by a function of KL distortion:
$$d_n^{\RMS}(p, \nu, Y, Z) \leq \sqrt{\frac{1}{2} d_n^{\KL}(p, \nu, Y,Z)}.$$
This is shown in Proposition \ref{pr:error_bound} in Appendix \ref{app:distortion_measures}.

To offer an intuitive interpretation for RMS distortion, we consider a setting where users provide binary
ratings and the CF system offers binary predictions based on the PMFs that it generates.
That is, we set $\oS=\{0,1\}$. Given training data $Y$, for a user with ratings history $x^{k-1}$,
the system generates a prediction of $\cx_{\nu_k,x^{k-1},Y} = 1$ for product $\nu_k$ if $p_{\nu_k,x^{k-1},Y}(1) \ge 1/2$
and generates a prediction of $\cx_{\nu_k,x^{k-1},Y} = 0$ otherwise.
Similarly, we denote $\cx_{\nu_k,x^{k-1},(Y,Z)}$ as the binary prediction based on $(Y,Z)$.
We define the following {\it binary prediction distortion}:
$$d_n^{\B}(p,\nu,Y,Z) = \frac{1}{n} \sum_{k=1}^n \left( \Pr \left(\ox_{\nu_k} = \cx_{\nu_k,x^{k-1},Y} \right) - \Pr \left(\ox_{\nu_k} = \cx_{\nu_k,x^{k-1},(Y,Z)}\right) \right),$$
where each $x_{\nu_k}$ is distributed according to $p_{\nu_k,x^{k-1},Y}$ and $x^{k-1}$ is distributed
as the CF algorithm would have predicted based on $Y$.
This quantity captures the average decrease in the probability of correct predictions, induced by
manipulation. It turns out that binary prediction distortion is bounded by RMS distortion:
\begin{equation}\label{eq:binary_bound}
d_n^{\B}(p,\nu,Y,Z) \le d_n^{\RMS}(p, \nu, Y, Z).
\end{equation}
Proved in Proposition \ref{pr:binary_bound} in Appendix \ref{app:distortion_measures}, this result offers an
interpretation of RMS distortion as an upper bound on the drop in the probability of correct predictions
in a binary setting.

One might wonder why we choose our particular distortion measures over other candidates. For instance,
one option is to consider the top $n$ most desirable products based on predictions, and define as distortion
some measure of their quality change due to the manipulated samples.
One reason why we prefer KL and RMS distortions is that they are convex functions of predictions while
this measure is not. As such, this measure is difficult to analyze. Further, as will be discussed in
Section \ref{su:results}, in a recent competition of CF algorithms,
Netflix uses RMS error to assess their prediction accuracies \cite{Netflix}. This suggests that commercial
CF algorithms are typically designed to minimize convex measures of error. Our choice of distortion
measures is in line with this approach.

Another option one might consider is to measure the worst distortion over all products.
This may be a reasonable choice if there is one manipulator interested in distorting ratings
of one product. Since we model a situation where the CF system does not know the number or
the objectives of manipulators, however, we would like to characterize the overall distortion experienced
by all products, and KL and RMS distortions capture that better than the worst distortion does.
Note that one implication of our choice is that our robustness results will
pertain to the overall distortion, rather than distortion on individual product ratings.
As such, our algorithms will not provide guarantees on whether any particular individual product's ratings
will be influenced significantly by manipulators.

\section{Collaborative Filtering Algorithms}\label{se:cf}

In this section, we first introduce the notion of {\it probabilistic CF algorithms}.
We then describe two classes of such algorithms, namely
linear and asymptotically linear CF algorithms, and analyze their robustness to manipulation.
Finally, we discuss nearest neighbor algorithms and their susceptibility to manipulation.

\subsection{Probabilistic Collaborative Filtering Algorithms}

A probabilistic CF algorithm carries out predictions based on a probabilistic model of how the training data is
generated.  We will model training data as being generated in the following way.  First,
user types $\ow^m \in \oS^N$ are sampled i.i.d. from some PMF.  Then, $w^m \in S^N$ is sampled from a
conditional PMF, conditioned on $\ow^m$, which for each $n$ assigns either $w^m_n = ?$ or
$w^m_n = \ow^m_n$.  Note that this model
allows for dependence between the type of a user and the products he chooses to rate.  This accommodates, for example,
systems in which users tend to inspect and rate only products that they care for.
Given a PMF $\jointD$ over $\oS^N \times S^N$, we denote by $\typeD$ and $\ratingsD$ the marginal
PMFs over $\oS^N$ and $S^N$, respectively.

We will call a CF algorithm $p$ {\it probabilistic} if for each $W$ there exists a PMF $\jointDH^{p,W}$ over $\overline{S}^N \times S^N$
such that for each $n$ and $x$, $p_{n,x,W}$ is the marginal PMF of $\ox_n$ conditioned on $x$,
with respect to the joint PMF $\jointDH^{p,W}$.
From here on, we will denote by $\typeDH^{p,W}$ the marginal PMF over $\oS^N$ of the joint PMF $\jointDH^{p,W}$
corresponding to a probabilistic CF algorithm $p$ and training set $W$.

\subsection{Linear Collaborative Filtering Algorithms}

We say that a probabilistic CF algorithm $p$ is {\it linear} if for any $W_1 \in S^{N \times M_1}$ and $W_2 \in S^{N \times M_2}$,
$$\typeDH^{p,(W_1,W_2)} = \frac{M_1}{M_1 + M_2} \typeDH^{p,W_1} + \frac{M_2}{M_1 + M_2} \typeDH^{p,W_2}.$$
This definition states that the PMF $\typeDH^{p,(W_1,W_2)}$ that a linear CF algorithm $p$
generates based on training data $(W_1,W_2)$ is a convex combination of two PMFs: namely,
the PMF $\typeDH^{p,W_1}$ that it generates based on $W_1$ and the PMF $\typeDH^{p,W_2}$ that it
generates based on $W_2$.

We now examine the KL distortion that manipulators can induce on a linear CF algorithm.
Consider training data $W = (Y,Z)$ consisting of ratings vectors $Y$ from honest users and $Z$ from manipulators, with the
latter making up a fraction $r$ of the training data.  The following theorem, which is the main theoretical contribution
of this paper, establishes a bound on the resulting KL distortion.

\begin{theorem}\label{th:distortion}
Fix the number of products $N$ and let $p$ be a linear CF algorithm.
Then, for all $M$, $r \in \{0,1/M,\ldots,(M-1)/M\}$, $Y \in S^{N \times (1-r) M}$, $Z \in S^{N \times r M}$, and $\nu \in \sigma_N$,
\begin{equation*}
d_n^{\KL}(p,\nu, Y, Z) \le \frac{1}{n} \ln \frac{1}{1-r}.
\end{equation*}
\end{theorem}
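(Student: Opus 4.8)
The plan is to reduce the average of per-step conditional divergences defining $d_n^{\KL}$ to a single relative entropy between two joint distributions over ratings, and then exploit the convex-combination structure forced by linearity. Write $\pi = \typeDH^{p,Y}$, $\mu = \typeDH^{p,(Y,Z)}$, and $\phi = \typeDH^{p,Z}$ for the type PMFs the algorithm produces from the honest data, the full data, and the manipulated data. Since $Y$ has $(1-r)M$ columns and $Z$ has $rM$ columns, the defining property of a linear CF algorithm gives $\mu = (1-r)\,\pi + r\,\phi$. Because $\phi \ge 0$ pointwise, this yields the key domination $\mu \ge (1-r)\,\pi$ on all of $\oS^N$, and the same inequality is inherited by every marginal obtained by summing out coordinates. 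Moreover, since $p$ is probabilistic, each prediction $p_{\nu_k,x^{k-1},Y}$ and $p_{\nu_k,x^{k-1},(Y,Z)}$ is the conditional PMF of the coordinate $\ox_{\nu_k}$ given the already-revealed ratings recorded in $x^{k-1}$, taken under $\pi$ and under $\mu$ respectively.

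First I would identify the distribution of the history $x^{k-1}$. In the thought experiment the active user samples each new rating $\ox_{\nu_k}$ from $p_{\nu_k,x^{k-1},Y}$, i.e.\ from the $\pi$-conditional of $\ox_{\nu_k}$ given $\ox_{\nu_1},\dots,\ox_{\nu_{k-1}}$; by the tower property the revealed coordinates $(\ox_{\nu_1},\dots,\ox_{\nu_n})$ are therefore jointly distributed exactly as the $\pi$-marginal on those coordinates. Denote by $\pi^{(n)}$ and $\mu^{(n)}$ the joint PMFs of $(\ox_{\nu_1},\dots,\ox_{\nu_n})$ under $\pi$ and $\mu$. The summand $\E[D(p_{\nu_k,x^{k-1},Y}\mid\mid p_{\nu_k,x^{k-1},(Y,Z)})]$ is then precisely the expectation, over the $\pi$-distributed prefix, of the divergence between the $\pi$- and $\mu$-conditionals of the $k$th revealed coordinate. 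Summing over $k=1,\dots,n$ and applying the chain rule for relative entropy collapses this telescoping sum into a single term, giving $n\,d_n^{\KL}(p,\nu,Y,Z) = D(\pi^{(n)}\mid\mid\mu^{(n)})$.

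It then remains to bound this joint relative entropy using the domination $\mu^{(n)} \ge (1-r)\,\pi^{(n)}$. For every realization $a$ of $(\ox_{\nu_1},\dots,\ox_{\nu_n})$ we have $\ln(\pi^{(n)}(a)/\mu^{(n)}(a)) \le \ln(1/(1-r))$, whence $D(\pi^{(n)}\mid\mid\mu^{(n)}) = \sum_a \pi^{(n)}(a)\ln(\pi^{(n)}(a)/\mu^{(n)}(a)) \le \ln\frac{1}{1-r}$. Dividing by $n$ gives the claimed bound, uniformly over $M$, $r$, $Y$, $Z$, and the inspection order $\nu$, since none of these enter the final estimate except through $r$. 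The domination also guarantees $\mu^{(n)}(a) > 0$ wherever $\pi^{(n)}(a) > 0$ (as $r < 1$), so every divergence is finite and every conditioning occurs on a positive-probability prefix.

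I expect the main obstacle to be the bookkeeping in the second paragraph: one must verify carefully that the measure governing the expectation in $d_n^{\KL}$ is exactly $\pi$ (the honest-data process), so that it lines up with the outer expectation in the chain rule for relative entropy, and that conditioning on the partially revealed vector $x^{k-1}$ really corresponds to conditioning the type marginals $\pi$ and $\mu$ on the revealed coordinates rather than on additional information about which products were rated. Once this identification is in place the domination step is immediate, and the surprising strength of the bound---its independence of $N$, of the manipulators' strategy $Z$, and of the order $\nu$---comes entirely from the fact that linearity forces $\mu$ to dominate $(1-r)\,\pi$ no matter how the manipulated data is chosen.
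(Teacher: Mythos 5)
Your proof is correct and takes essentially the same route as the paper's: the paper's Lemma \ref{le:probabilistic_distortion} uses the chain rule for KL divergence to reduce $n\,d_n^{\KL}$ to a single divergence between type PMFs (padding the sum of $n$ nonnegative conditional terms up to all $N$ coordinates, giving $D(\typeDH^{p,Y} \mid\mid \typeDH^{p,(Y,Z)})$), and its Lemma \ref{le:linear_divergence} applies exactly your ratio bound $\typeDH^{p,Y}(\ox)/\typeDH^{p,(Y,Z)}(\ox) \le 1/(1-r)$ forced by linearity. Your only variation---applying the chain rule as an equality on just the $n$ revealed coordinates and bounding the divergence of the $n$-coordinate marginals, using that the convex-combination structure survives marginalization---is a cosmetic tightening of the same two-step argument, and your flagged concern about what ``conditioning on $x^{k-1}$'' means is an implicit assumption the paper's own proof makes as well.
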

\noindent This result is proved in Appendix \ref{app:linear}.

Note that the bound only depends on the number of active user ratings $n$ and the fraction of
data $r$ generated by manipulators.  Hence, it represents a worst case bound over
all linear CF algorithms $p$, the number of products $N$, the quantity $M$
and values $(Y,Z)$ of the training data,
and the order $\nu$ in which the active user rates products.
This means, for example, that it applies even if manipulators coordinate with each other and select ratings with knowledge
of the specific CF algorithm $p$, the honest ratings $Y$, and the ordering $\nu$.  This also makes the bound relevant for
realistic models of how a recommendation
system might sequence products for a user; for example, each $\nu_k$ could be the product that the CF algorithm predicts as being
most desirable among remaining ones after the user has inspected products $\nu_1, \ldots, \nu_{k-1}$.

Note that KL distortion vanishes as the number $n$ of products rated by the active user increases.
To develop intuition for why this happens, we now offer an informal argument.
Observe that $\typeD^{p,(Y,Z)} = (1-r) \typeD^{p,Y} + r\typeD^{p,Z}$.
If $\typeD^{p,Y}$ is identical to $\typeD^{p,Z}$, then $\typeD^{p,(Y,Z)}$ is equal to $\typeD^{p,Y}$
and distortion will be zero.
Otherwise, if $\typeD^{p,Y}$ and $\typeD^{p,Z}$ are different,
as an active user inspects and rates products in the manner that we define, his ratings will tend
to be distinguished as sampled from $\typeD^{p,Y}$ rather than from $\typeD^{p,Z}$.
As such, the influence of $Z$ on predictions diminishes as $n$ grows.

The bound depends on $r$ through the term $\ln(1/(1-r))$.  This term captures the dependence of KL distortion
on the fraction of data produced by manipulators.  As one would expect, this term vanishes when $r$ is set to zero.

As a corollary of Theorem \ref{th:distortion} and Proposition \ref{pr:error_bound} we have the following bound on RMS
distortion.
\begin{corollary}
\label{co:error}
Fix the number of products $N$ and let $p$ be a linear CF algorithm.
Then, for all $M$, $r \in \{0,1/M,\ldots,(M-1)/M\}$, $Y \in S^{N \times (1-r) M}$, $Z \in S^{N \times r M}$, and $\nu \in \sigma_N$,
\begin{equation*}
d_n^{\RMS}(p,\nu, Y, Z) \le \sqrt{\frac{1}{2n} \ln \frac{1}{1-r}}.
\end{equation*}
\end{corollary}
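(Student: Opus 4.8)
The plan is to derive the bound by directly composing the two results cited in the sentence preceding the corollary, namely Theorem~\ref{th:distortion} and Proposition~\ref{pr:error_bound}; no fresh machinery is required. Before chaining the inequalities I would verify that both prior results apply to the same object. The hypotheses of the corollary (a linear CF algorithm $p$, fixed $N$, and admissible $M$, $r$, $Y$, $Z$, $\nu$) are precisely those of Theorem~\ref{th:distortion}, so that bound applies verbatim. Moreover, a linear CF algorithm is by definition probabilistic and hence generates PMFs, so its scalar predictions $\tx_{\nu_k,x^{k-1},\cdot}$ are the expectations of $\ox_{\nu_k}$ under the generated PMFs; this is exactly the regime in which Proposition~\ref{pr:error_bound} relates RMS to KL distortion. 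Thus both prior bounds hold simultaneously for the same data, and no separate applicability argument is needed.

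With applicability settled, the argument is a two-step substitution. First I would invoke Proposition~\ref{pr:error_bound} to write
\[
d_n^{\RMS}(p,\nu,Y,Z) \le \sqrt{\tfrac{1}{2}\, d_n^{\KL}(p,\nu,Y,Z)}.
\]
Next I would apply Theorem~\ref{th:distortion}, which gives $d_n^{\KL}(p,\nu,Y,Z) \le \frac{1}{n}\ln\frac{1}{1-r}$, and substitute this into the expression under the radical. Simplifying $\sqrt{\tfrac12\cdot\tfrac1n\ln\frac{1}{1-r}} = \sqrt{\tfrac{1}{2n}\ln\frac{1}{1-r}}$ yields the claim.

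The single point requiring care — and what I would flag as the only nontrivial step — is the monotonicity of $t \mapsto \sqrt{t}$ on $[0,\infty)$, which is what licenses replacing $d_n^{\KL}(p,\nu,Y,Z)$ by its upper bound inside the square root without reversing the inequality. This in turn rests on $d_n^{\KL}(p,\nu,Y,Z) \ge 0$, so that both the argument and its bound lie in the domain of the square root; that nonnegativity is immediate, since KL divergence is nonnegative and $d_n^{\KL}$ is a nonnegative-weighted average of expected KL divergences. Everything else is routine arithmetic, so I would confine the write-up to these few lines.
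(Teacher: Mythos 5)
Your proposal is correct and is exactly the paper's argument: the paper derives Corollary~\ref{co:error} by chaining Theorem~\ref{th:distortion} with Proposition~\ref{pr:error_bound}, just as you do, and offers no further proof. Your added checks (that a linear algorithm is probabilistic so scalar predictions are expectations of the generated PMFs, and that nonnegativity of $d_n^{\KL}$ licenses the substitution under the square root) are sound and consistent with what the paper leaves implicit.
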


Figure \ref{fig:error_bound} illustrates how this bound depends on $r$ and $n$.
The bound can offer useful guidance.  For example, it ensures that if an active user has rated $22$ products
and no more than $10\%$ of the training data is manipulated, then the RMS distortion induced by manipulators is
less than $0.05$. In a setting where users provide binary ratings and the system generates binary predictions,
according to our bound on binary prediction distortion in (\ref{eq:binary_bound}) in Section \ref{su:distortion_measures},
the average probability of correct predictions decreases by at most $0.05$.
Hence, if a binary CF system predicts ratings correctly $80\%$ of the time in the absence of manipulation,
it can maintain this probability at $75\%$ in the presence of manipulation if it requires active users
to rate $21$ products before receiving recommendations.

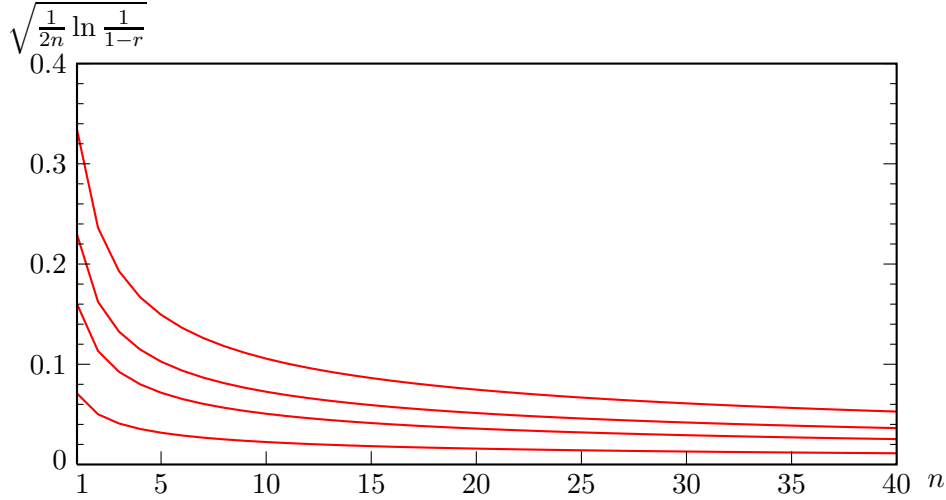
\begin{figure}[h]
  \centering
  \begin{tikzpicture}[x=0.11in,y=5.25in]
    \node[coordinate] at (1,0) (origin) {};
    \node[coordinate] at (40,0) (x) {};
    \node[coordinate] at (1,0.4) (y) {};
    \draw[red,thick] plot file {data/bnd_0.01.table};
    \draw[red,thick] plot file {data/bnd_0.05.table};
    \draw[red,thick] plot file {data/bnd_0.1.complete.table};
    \draw[red,thick] plot file {data/bnd_0.2.table};
    \draw (1,0) node[left,yshift=2pt] {$0$};
    \foreach \y in {0.1,0.2,0.3,0.4}
    \draw (1,\y) node[left] {$\y$};
    \draw (1,0.37) node[yshift=25pt] {$\sqrt{\frac{1}{2n} \ln \frac{1}{1-r}}$};
    \foreach \y in {0.1,0.2,0.3,0.4}
    {
      \draw[shift={(1,\y -| y)}] (5pt,0pt) -- (0pt,0pt);
      \draw[shift={(1,\y -| x)}] (-5pt,0pt) -- (0pt,0pt);
    }
    \foreach \y in {0.1,0.2,0.3,0.4}
    {
      \foreach \z in {0.02,0.04,0.06,0.08} {
        \draw[shift={(1,\y -| y)},shift={(0,-0.1)},shift={(0,\z)}]
        (2.5pt,0pt) -- (0pt,0pt);
        \draw[shift={(1,\y -| x)},shift={(0,-0.1)},shift={(0,\z)}]
        (-2.5pt,0pt) -- (0pt,0pt);
      }
    }
    \draw (1,0) node[below,xshift=2pt] {$1$};
    \foreach \x in {5, 10, 15, 20, 25, 30, 35, 40}
    \draw (\x,0) node[below] {$\x$};
    \draw (40,0) node[below,xshift=15pt] { $n$};
    \foreach \x in {5, 10, 15, 20, 25, 30, 35, 40}
    \draw[shift={(\x,0)}] (0pt,5pt) -- (0pt,0pt);
    \draw[thick] (origin) -- (x)
    -- (y -| x) -- (y) -- (origin);
  \end{tikzpicture}
  \caption{Bound on $d_n^{\RMS}(p,\nu,Y,Z)$ as a function of $n$. The four curves from bottom to top are for cases where $r=0.01$, $0.05$, $0.1$, and $0.2$, respectively.\label{fig:error_bound}}
\end{figure}

We will introduce examples of linear CF algorithms in Section \ref{su:kde}.

\subsection{Asymptotically Linear Collaborative Filtering Algorithms}\label{su:asymptotic_linear}

We say that a probabilistic CF algorithm $p$ is {\it asymptotically linear} if for all
PMFs $\jointD$, $\aJointD$ over $\oS^N \times S^N$, $r \in [0,1]$, and $\epsilon > 0$,
$$ \lim_{m \rightarrow \infty} \Pr \left( D\left( \left( (1-r) \typeDH^{p,U_{m}} + r \typeDH^{p,V_{m}} \right)
\mid\mid \typeDH^{p,\left( U_{m}, V_{m} \right)} \right) \ge \epsilon \right) = 0,$$
where for each $m$, $U_{m} = (u^1,\ldots,u^{m-l}) \in S^{N \times {(m-l)}}$ and
$V_{m} = (v^1,\ldots,v^l) \in S^{N \times l}$, $l \sim \mbox{Binomial}(m,r)$,
and $u^1,\ldots,u^{m-l} \sim \ratingsD$ and $v^1,\ldots,v^l \sim \aRatingsD$ are i.i.d. sequences.

To understand the preceding definition, we think of training data $(U_{m}, V_{m})$ for each $m$ as
generated in the following way: with probability $1-r$, a ratings vector is sampled from $\ratingsD$,
which we denote as $u_i$ for an appropriate $i$, and with probability $r$, a ratings vector is sampled from $\aRatingsD$,
which we denote as $v_i$ for an appropriate $i$. As $m$ grows, an asymptotically linear CF algorithm behaves like a
linear CF algorithm in that the PMF $\typeDH^{p,\left( U_{m}, V_{m} \right)}$ that it generates based on data
$(U_{m}, V_{m})$ converges in probability to a convex combination of two PMFs: namely, the PMF $\typeDH^{p,U_{m}}$
that it would generate based on the $\typeD-$sampled set $U_{m}$ and the PMF $\typeDH^{p,V_{m}}$ that it would
generate based on the $\aTypeD-$sampled set $V_{m}$.
By an application of the weak law of large numbers,
it can be shown that all linear CF algorithms are asymptotically linear.

We can also show that asymptotically linear CF algorithms are asymptotically robust, in a sense to be made
precise later. It turns out that this result applies to a broader range of
practical algorithms that are asymptotically linear in a more restricted sense, which we now define.
Consider a set $\jointDSet$ of joint PMFs over $\oS^N \times S^N$. We say that a probabilistic CF algorithm
$p$ is {\it asymptotically linear with respect to $\jointDSet$} if for all
PMFs $\jointD$, $\aJointD \in \jointDSet$, $r \in [0,1]$, and $\epsilon > 0$,
$$ \lim_{m \rightarrow \infty} \Pr \left( D\left( \left( (1-r) \typeDH^{p,U_{m}} + r \typeDH^{p,V_{m}} \right)
\mid\mid \typeDH^{p,\left( U_{m}, V_{m} \right)} \right) \ge \epsilon \right) = 0,$$
where for each $m$, $U_{m} = (u^1,\ldots,u^{m-l}) \in S^{N \times {(m-l)}}$ and
$V_{m} = (v^1,\ldots,v^l) \in S^{N \times l}$, $l \sim \mbox{Binomial}(m,r)$,
and $u^1,\ldots,u^{m-l} \sim \ratingsD$ and $v^1,\ldots,v^l \sim \aRatingsD$ are i.i.d. sequences.

The following theorem and corollary characterize the robustness of asymptotically linear CF algorithms.

\begin{theorem}\label{th:asymptotic_distortion}
Fix the number of products $N$ and a set $\jointDSet$ of joint PMFs over $\oS^N \times S^N$.
Let $p$ be a CF algorithm asymptotically linear with respect to $\jointDSet$.
Then, for all $\hJointD^*, \mJointD^* \in \jointDSet$, $r \in [0,1)$, $\nu \in \sigma_N$,
$n \in \Z_+$, and $\epsilon > 0$,
\begin{equation*}
\lim_{m \rightarrow \infty} \Pr \left( d_n^{\KL}(p,\nu, Y_{m}, Z_{m}) \ge \frac{1}{n} \ln \frac{1}{1-r} + \epsilon \right) = 0,
\end{equation*}
where, for each $m$, $Y_{m} = (y^1,\ldots,y^{m-l}) \in S^{N \times {(m-l)}}$,
$Z_{m} = (z^1,\ldots,z^l) \in S^{N \times l}$, $l \sim \mbox{Binomial}(m,r)$,
and $y^1,\ldots,y^{m-l} \sim \hRatingsD^*$ and $z^1,\ldots,z^l \sim \mRatingsD^*$ are i.i.d. sequences.
\end{theorem}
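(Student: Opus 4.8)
The plan is to transfer the worst-case bound of Theorem~\ref{th:distortion} to the PMF that the algorithm actually learns, via a continuity argument in relative entropy. Throughout, abbreviate the type PMFs produced by $p$ as $\mu_m := \typeDH^{p,Y_m}$ (from the honest data, according to which the active user's ratings are sampled), $\pi_m := \typeDH^{p,Z_m}$ (from the manipulated data), their convex combination $\psi_m := (1-r)\mu_m + r\pi_m$, and the actually-learned PMF $\phi_m := \typeDH^{p,(Y_m,Z_m)}$. The defining property of asymptotic linearity with respect to $\jointDSet$, instantiated by identifying the generic PMFs $\jointD,\aJointD$ of the definition with $\hJointD^*,\mJointD^*$, is exactly that $D(\psi_m \mid\mid \phi_m) \to 0$ in probability as $m \to \infty$. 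Since $\psi_m$ is an exact convex combination, the argument behind Theorem~\ref{th:distortion} already bounds the distortion it would induce; because $\phi_m$ is close to $\psi_m$ in relative entropy, the distortion induced by $\phi_m$ should exceed this bound by only a vanishing amount.

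First I would record the ideal bound. Following the proof of Theorem~\ref{th:distortion}, the chain rule for relative entropy represents $n\,d_n^{\KL}$ as the divergence $D(\cdot\mid\mid\cdot)$ between the law of the active user's ratings $(\ox_{\nu_1},\dots,\ox_{\nu_n})$ under the honest predictions and its law under predictions formed from any given type PMF. Applying this with the predictions formed from $\psi_m$, and using $\psi_m \ge (1-r)\mu_m$ pointwise, the very computation that proves Theorem~\ref{th:distortion} shows that the hypothetical distortion $\tilde d_n^{\KL}$ obtained by substituting $\psi_m$ for $\phi_m$ satisfies $\tilde d_n^{\KL} \le \frac1n\ln\frac{1}{1-r}$. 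This is the only step that uses the convex-combination structure.

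The crux is a deterministic stability estimate comparing the distortion built from $\phi_m$ with the one built from $\psi_m$. Writing $\mu_{1:n},\psi_{1:n},\phi_{1:n}$ for the marginals on the inspected coordinates $\nu_1,\dots,\nu_n$, the chain-rule representation gives $n(d_n^{\KL}-\tilde d_n^{\KL}) = \sum_\omega \mu_{1:n}(\omega)\ln\frac{\psi_{1:n}(\omega)}{\phi_{1:n}(\omega)}$. I would discard the (nonpositive) terms where the logarithm is negative, bound $\mu_{1:n}\le\frac{1}{1-r}\psi_{1:n}$ on the rest, and then use $\ln t\le t-1$ together with Pinsker's inequality to control what remains by $D(\psi_{1:n}\mid\mid\phi_{1:n})$. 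Since marginalization cannot increase relative entropy, $D(\psi_{1:n}\mid\mid\phi_{1:n})\le D(\psi_m\mid\mid\phi_m)=:\delta_m$, and this yields the deterministic bound
$$ d_n^{\KL}(p,\nu,Y_m,Z_m) \;\le\; \frac1n\ln\frac{1}{1-r} \;+\; \frac{1}{n(1-r)}\Big(\delta_m + \sqrt{\tfrac12\,\delta_m}\Big). $$

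Finally I would conclude. Given $\epsilon>0$, choose $\delta>0$ so small that $\frac{1}{n(1-r)}\big(\delta+\sqrt{\delta/2}\big)<\epsilon$; on the event $\{\delta_m\le\delta\}$ the displayed bound gives $d_n^{\KL}<\frac1n\ln\frac{1}{1-r}+\epsilon$, so $\Pr\big(d_n^{\KL}\ge\frac1n\ln\frac{1}{1-r}+\epsilon\big)\le\Pr(\delta_m>\delta)\to 0$ by asymptotic linearity. The main obstacle is the stability estimate: relative entropy is asymmetric, the two distortions share the reference $\mu_m$ but differ in the second argument, and the learned PMF $\phi_m$ carries no pointwise lower bound. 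The argument must therefore be routed entirely through $\psi_m$—the only distribution obeying $\psi_m\ge(1-r)\mu_m$—with Pinsker's inequality absorbing the gap between $\E_{\mu_{1:n}}[\ln(\psi_{1:n}/\phi_{1:n})]$ and $D(\psi_{1:n}\mid\mid\phi_{1:n})$.
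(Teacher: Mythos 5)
Your proof is correct, and its overall skeleton matches the paper's: both arguments compare the distortion computed from the learned PMF $\phi_m=\typeDH^{p,(Y_m,Z_m)}$ against the distortion that the exact convex combination $\psi_m=(1-r)\typeDH^{p,Y_m}+r\typeDH^{p,Z_m}$ would induce, bound the latter by $\frac{1}{n}\ln\frac{1}{1-r}$ via the pointwise ratio bound $\typeDH^{p,Y_m}/\psi_m\le\frac{1}{1-r}$, and then invoke asymptotic linearity (instantiated at $\hJointD^*,\mJointD^*$) to control the gap. Where you genuinely differ is in how the gap is controlled. The paper bounds $n\,d_n^{\KL}$ by the full $N$-coordinate divergence $D\left(\typeDH^{p,Y_m} \mid\mid \typeDH^{p,(Y_m,Z_m)}\right)$ (Lemma \ref{le:probabilistic_distortion}) and then applies an asymptotic continuity statement, Lemma \ref{le:KL}, whose proof rests on Lemma \ref{le:abs_KL_convergence} --- an elaborate event-based argument showing that $D(\mu_m\mid\mid\nu_m)\to 0$ in probability forces the absolute-value sum $\sum_\omega \mu_m(\omega)\left|\log\left(\mu_m(\omega)/\nu_m(\omega)\right)\right|$ to vanish in probability; this yields convergence but no rate. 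You replace that machinery with a deterministic, quantitative stability estimate: splitting $\sum_\omega \mu_{1:n}\ln\left(\psi_{1:n}/\phi_{1:n}\right)$ by the sign of the logarithm, bounding $\mu_{1:n}\le\psi_{1:n}/(1-r)$ on the positive part, and controlling the negative part of $D(\psi_{1:n}\mid\mid\phi_{1:n})$ by $\ln t\le t-1$ plus Pinsker, which gives the explicit bound $d_n^{\KL}\le\frac{1}{n}\ln\frac{1}{1-r}+\frac{1}{n(1-r)}\bigl(\delta_m+\sqrt{\delta_m/2}\bigr)$ with $\delta_m=D(\psi_m\mid\mid\phi_m)$. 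This buys two things: an explicit modulus of continuity rather than bare convergence, and the complete elimination of the paper's most technical lemma --- since you only need a one-sided bound, the sign-cancellation difficulties that Lemma \ref{le:abs_KL_convergence} wrestles with never arise (indeed, your $\ln t\le t-1$ plus Pinsker device would shorten the paper's own proof of that lemma to a few lines, since its quantity $\tau_m$ satisfies $|\tau_m|\le\frac{1}{2}\|\mu_m-\nu_m\|_1$). What the paper's heavier route buys is reusability: Lemmas \ref{le:abs_KL_convergence} and \ref{le:KL} are stated for arbitrary sequences of random PMFs, and companion lemmas in the same style are recycled in the proof of Theorem \ref{th:consistent}. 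One small inaccuracy in your write-up: the ideal bound is not ``the only step that uses the convex-combination structure,'' since your stability estimate also relies on $\mu_{1:n}\le\psi_{1:n}/(1-r)$; this does not affect correctness.
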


\begin{corollary}\label{co:asymptotic_distortion}
Fix the number of products $N$ and a set $\jointDSet$ of joint PMFs over $\oS^N \times S^N$.
Let $p$ be a CF algorithm asymptotically linear with respect to $\jointDSet$.
Then, for all $\hJointD^*, \mJointD^* \in \jointDSet$, $r \in [0,1)$, $\nu \in \sigma_N$,
$n \in \Z_+$, and $\epsilon > 0$,
\begin{equation*}
\lim_{m \rightarrow \infty} \Pr \left( d_n^{\RMS}(p,\nu, Y_{m}, Z_{m}) \ge \sqrt{\frac{1}{2n} \ln \frac{1}{1-r}} + \epsilon \right) = 0,
\end{equation*}
where, for each $m$, $Y_{m} = (y^1,\ldots,y^{m-l}) \in S^{N \times {(m-l)}}$,
$Z_{m} = (z^1,\ldots,z^l) \in S^{N \times l}$, $l \sim \mbox{Binomial}(m,r)$,
and $y^1,\ldots,y^{m-l} \sim \hRatingsD^*$ and $z^1,\ldots,z^l \sim \mRatingsD^*$ are i.i.d. sequences.
\end{corollary}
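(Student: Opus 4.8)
The plan is to deduce this RMS bound directly from the KL bound in Theorem~\ref{th:asymptotic_distortion} together with the deterministic, pointwise inequality $d_n^{\RMS}(p,\nu,Y,Z) \le \sqrt{\tfrac{1}{2} d_n^{\KL}(p,\nu,Y,Z)}$ supplied by Proposition~\ref{pr:error_bound}. The key observation is that, for each fixed realization of the training data $(Y_{m},Z_{m})$, the quantities $d_n^{\RMS}(p,\nu,Y_{m},Z_{m})$ and $d_n^{\KL}(p,\nu,Y_{m},Z_{m})$ are random variables on one and the same probability space (the randomness being the i.i.d.\ sampling of $(Y_{m},Z_{m})$), and they are linked by the Proposition's inequality sample-by-sample. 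This lets me convert a tail event for RMS distortion into a tail event for KL distortion and then invoke Theorem~\ref{th:asymptotic_distortion}.

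Concretely, I would write $a = \frac{1}{n}\ln\frac{1}{1-r}$, so that the target RMS bound is $\sqrt{a/2} + \epsilon$. The first step is to pick $\delta > 0$ so that the square-root inequality forces an event containment. Since $d_n^{\RMS} \le \sqrt{d_n^{\KL}/2}$ holds pointwise, the event $\{d_n^{\RMS} \ge \sqrt{a/2} + \epsilon\}$ entails $\sqrt{d_n^{\KL}/2} \ge \sqrt{a/2} + \epsilon$; squaring (both sides nonnegative) and rearranging yields $d_n^{\KL} \ge a + \delta$ with $\delta = 2\epsilon\sqrt{2a} + 2\epsilon^2 > 0$. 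Hence, for every $m$,
\begin{equation*}
\left\{ d_n^{\RMS}(p,\nu,Y_{m},Z_{m}) \ge \sqrt{a/2} + \epsilon \right\} \subseteq \left\{ d_n^{\KL}(p,\nu,Y_{m},Z_{m}) \ge a + \delta \right\},
\end{equation*}
so the probability of the former is at most the probability of the latter.

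The second step is to take probabilities and send $m \to \infty$. Because $\delta > 0$, Theorem~\ref{th:asymptotic_distortion} applied with the fixed $\hJointD^*, \mJointD^*, r, \nu, n$ and with its free parameter replaced by $\delta$ guarantees $\Pr\!\left(d_n^{\KL}(p,\nu,Y_{m},Z_{m}) \ge a + \delta\right) \to 0$. The event containment then forces $\Pr\!\left(d_n^{\RMS}(p,\nu,Y_{m},Z_{m}) \ge \sqrt{a/2} + \epsilon\right) \to 0$, which is exactly the claim.

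This argument is essentially routine once Theorem~\ref{th:asymptotic_distortion} and Proposition~\ref{pr:error_bound} are in hand, so I do not anticipate a substantive obstacle. The only point needing care is the translation of the additive RMS slack $\epsilon$ into a strictly positive additive KL slack $\delta$: one must check that passing through the nonlinearity $\sqrt{\cdot}$ does not collapse the gap. This holds because $\sqrt{\cdot}$ is continuous and strictly increasing, so a fixed positive margin above $\sqrt{a/2}$ always corresponds to a fixed positive margin above $a$; the explicit $\delta = 2\epsilon\sqrt{2a}+2\epsilon^2$ makes this quantitative and remains strictly positive even in the degenerate case $r=0$, where $a=0$.
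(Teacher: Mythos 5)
Your proof is correct and follows exactly the paper's route: the paper derives Corollary~\ref{co:asymptotic_distortion} from Theorem~\ref{th:asymptotic_distortion} together with the pointwise inequality of Proposition~\ref{pr:error_bound}, which is precisely your argument. Your explicit computation of the KL slack $\delta = 2\epsilon\sqrt{2a}+2\epsilon^2 > 0$ simply spells out the routine event-containment step that the paper leaves implicit.
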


\noindent Theorem \ref{th:asymptotic_distortion} is proved in Appendix \ref{app:asymptotically_linear} and
Corollary \ref{co:asymptotic_distortion} follows from Theorem \ref{co:asymptotic_distortion}
and Proposition \ref{pr:error_bound}.
These results state that for any CF algorithm $p$ asymptotically linear with respect to $\jointDSet$
and any fixed PMFs $\hRatingsD^*$, $\mRatingsD^* \in \jointDSet$,
as honest users and manipulators sample more data from them, with high probability,
the distortion bounds for linear CF algorithms in Theorem \ref{th:distortion} and Corollary \ref{co:error}
will also apply to $p$ and in particular, distortion will vanish as $n$ grows.

The intuition behind these results is similar to that for Theorem \ref{th:distortion}. In particular,
given sufficient data, the learned PMF $\typeDH^{p,(Y_m,Z_m)}$ should closely approximate $(1-r) \hTypeD^* + r \mTypeD^*$.
If $\hTypeD^*$ and $\mTypeD^*$ are similar, then $\typeDH^{p,(Y_m,Z_m)}$
should be close to $\hTypeD^*$ and distortion should be close to zero.
On the other hand, if $\hTypeD^*$ are $\mTypeD^*$ are significantly different,
as an active user provides more ratings, it will be increasingly clear that they are sampled from $\hTypeD^*$
rather than $\mTypeD^*$, and distortion will diminish.

We now study a class of asymptotically linear CF algorithms, which
converge to the true PMF of user types under certain assumptions about the training data.
For starters, we say a set $\jointDSet$ of PMFs over $\oS^N \times S^N$ is {\it identifiable}
if all distinct PMFs $\jointD, \aJointD \in \jointDSet$
have distinct ratings marginals $\ratingsD$ and $\aRatingsD$.
Given an identifiable set $\jointDSet$,
we say that a probabilistic CF algorithm $p$ is {\it consistent with respect to $\jointDSet$}
if for all $\jointD \in \jointDSet$ and $\epsilon > 0$,
$$\lim_{m \rightarrow \infty}
\Pr \left( D\left( \typeDH^{p,W_m} \mid\mid \typeD \right) \ge \epsilon \right) = 0,$$
where for each $m$, $W_m=(w_1,\ldots,w_m) \in S^{N \times m}$ is generated independently
and in particular, $w_1,\ldots,w_m \sim \ratingsD$ is an i.i.d. sequence. This definition is
meant to capture algorithms that converge to $\ratingsD$ and recover its unique corresponding joint PMF
$\jointD \in \jointDSet$ and type marginal $\typeD$ as the $\ratingsD-$sampled training data grows.
The following theorem states the setting in which a consistent CF algorithm is asymptotically linear.
\begin{theorem}\label{th:consistent}
Any probabilistic CF algorithm consistent with respect to an identifiable and convex set
$\jointDSet$ is asymptotically linear with respect to $\jointDSet$.
\end{theorem}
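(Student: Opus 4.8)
The plan is to exhibit a single distribution to which both the convex combination $(1-r)\typeDH^{p,U_m}+r\typeDH^{p,V_m}$ and the pooled estimate $\typeDH^{p,(U_m,V_m)}$ converge, and then argue that the KL divergence between them vanishes. Fix $\jointD,\aJointD\in\jointDSet$, $r\in[0,1]$, and $\epsilon>0$; the cases $r\in\{0,1\}$ are immediate since the convex combination and the pooled data then coincide, so assume $r\in(0,1)$. The key structural observation is that, because $\jointDSet$ is convex, the mixture $\rho:=(1-r)\jointD+r\aJointD$ again lies in $\jointDSet$ — this is the one place convexity is used, and it is exactly what lets us invoke consistency at the mixture (identifiability, meanwhile, is what makes consistency well-posed). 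Since marginalization is linear, its ratings and type marginals are $\rho_S=(1-r)\ratingsD+r\aRatingsD$ and $\rho_{\oS}=(1-r)\typeD+r\aTypeD$.

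First I would record three consistency limits. The pooled data $(U_m,V_m)$ is, as an unordered collection, distributed exactly as $m$ i.i.d.\ draws from $\rho_S$ (splitting a mixture sample by a Bernoulli($r$) coin is the same as first drawing $l\sim\mbox{Binomial}(m,r)$ and then sampling $m-l$ points from $\ratingsD$ and $l$ from $\aRatingsD$); since $\typeDH^{p,W}$ is invariant to permutations of the columns of $W$, consistency applied to $\rho\in\jointDSet$ yields $D(\typeDH^{p,(U_m,V_m)}\mid\mid\rho_{\oS})\to 0$ in probability. For $U_m$ and $V_m$ I would apply consistency to $\jointD$ and $\aJointD$, handling the random sample sizes by conditioning on $l$: since $m-l\to\infty$ and $l\to\infty$ in probability for $r\in(0,1)$, the per-size convergence guaranteed by consistency aggregates to $D(\typeDH^{p,U_m}\mid\mid\typeD)\to 0$ and $D(\typeDH^{p,V_m}\mid\mid\aTypeD)\to 0$ in probability. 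Joint convexity of KL divergence then gives
$$D\left((1-r)\typeDH^{p,U_m}+r\typeDH^{p,V_m} \mid\mid \rho_{\oS}\right)\le (1-r)D(\typeDH^{p,U_m}\mid\mid\typeD)+rD(\typeDH^{p,V_m}\mid\mid\aTypeD)\to 0.$$

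Writing $P_m:=(1-r)\typeDH^{p,U_m}+r\typeDH^{p,V_m}$ and $C_m:=\typeDH^{p,(U_m,V_m)}$, the remaining task is to pass from $D(P_m\mid\mid\rho_{\oS})\to 0$ and $D(C_m\mid\mid\rho_{\oS})\to 0$ to $D(P_m\mid\mid C_m)\to 0$. Here I would exploit finiteness of the alphabet $\oS^N$. Letting $T$ denote the (finite) support of $\rho_{\oS}$, the identity
$$D(P_m\mid\mid C_m)=D(P_m\mid\mid\rho_{\oS})-\sum_{u\in T}P_m(u)\ln\frac{C_m(u)}{\rho_{\oS}(u)}$$
holds once $D(P_m\mid\mid\rho_{\oS})<\infty$, which forces $P_m$ to be supported within $T$. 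The first term vanishes by the previous step. For the second term, $D(C_m\mid\mid\rho_{\oS})\to 0$ forces $C_m(u)\to\rho_{\oS}(u)\ge\min_{u\in T}\rho_{\oS}(u)>0$ for each of the finitely many $u\in T$, so every ratio $C_m(u)/\rho_{\oS}(u)\to 1$ and, since $P_m(u)\le 1$, the sum is bounded by $\max_{u\in T}|\ln(C_m(u)/\rho_{\oS}(u))|\to 0$. Converting these pointwise statements into the required in-probability convergence is routine bookkeeping over the high-probability events carrying the three consistency limits.

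The hard part is this last step, because KL divergence obeys no triangle inequality and is not continuous in its second argument in general: two sequences converging to the same limit need not have vanishing mutual divergence. What rescues us is precisely the finite alphabet together with the strict positivity of $\rho_{\oS}$ on its support, which bounds $C_m$ away from zero there and thereby controls the dangerous $\ln(1/C_m(u))$ terms. I would also take care to confirm that the permutation-invariance used to identify $(U_m,V_m)$ with i.i.d.\ mixture data is legitimate in this model, and that the random-size consistency argument is uniform enough for the $l$-conditioning to go through; both points are minor but should be stated explicitly.
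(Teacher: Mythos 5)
Your proposal is correct, and its skeleton coincides with the paper's own proof: use convexity of $\jointDSet$ to place the mixture $(1-r)\jointD + r\aJointD$ in $\jointDSet$, invoke consistency three times (at $\jointD$, at $\aJointD$, and at the mixture for the pooled data), show that the convex combination of the two estimates converges in KL to the mixture's type marginal, and finally convert two KL convergences to a common limit into vanishing divergence between the two sequences themselves. Where you genuinely differ is in how the two analytic steps are executed. For the convex-combination step, the paper proves a dedicated lemma (Lemma \ref{le:convex_KL_convergence}) via a compactness and uniform-continuity argument (Lemma \ref{le:compact}); you instead invoke joint convexity of KL divergence, which yields the same conclusion in one line and is cleaner. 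For the final transfer step, the paper again routes through compactness (Lemma \ref{le:KL_convergence}); writing $P_m$ for the convex combination and $C_m$ for the pooled estimate as you do, you use the identity $D(P_m \mid\mid C_m) = D(P_m \mid\mid \rho_{\oS}) - \sum_{u} P_m(u)\ln\left(C_m(u)/\rho_{\oS}(u)\right)$ together with Pinsker's inequality and strict positivity of $\rho_{\oS}$ on its finite support --- the same ingredients the paper's lemma relies on implicitly (absolute continuity forcing support containment, masses bounded away from zero), but made explicit and elementary. Your route buys transparency about exactly where finiteness of $\oS^N$ and positivity on the support enter; the paper's buys reusable general-purpose lemmas. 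You also flag two points the paper passes over silently: the identification of the ordered pooled data $(U_m,V_m)$ with an i.i.d.\ sample from the mixture (which needs exchangeability, i.e.\ that the relevant events depend on $W$ only through its unordered columns, or an explicit permutation-invariance hypothesis on $W \mapsto \typeDH^{p,W}$), and the random sample sizes $m-l$ and $l$ (handled by conditioning on $l$ and noting both diverge in probability for $r \in (0,1)$). Both are genuine, if minor, gaps in the paper's own write-up, so making them explicit is to your credit.
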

\noindent The preceding result, proved in Appendix 
\ref{app:asymptotically_linear}, together with the definition of consistent algorithms
and Theorem \ref{th:asymptotic_distortion}
imply that if data are sampled i.i.d. from some PMF $\jointD$ in an
identifiable and convex set $\jointDSet$, then a consistent algorithm with respect to $\jointDSet$
would provide guarantees on both prediction accuracy and robustness to manipulation as training data grows.
In practice, even if it is unclear whether the identifiability and convexity conditions hold, as a starting point,
one might still apply a consistent CF algorithm, with the hope that it will deliver
reasonable accuracy and robustness.
In Section \ref{se:empirical},
we will empirically evaluate a consistent CF algorithm called the naive Bayes algorithm.

\subsection{Nearest Neighbor Algorithms}\label{su:NN}

Nearest neighbor algorithms, widely used in commercial CF systems \cite{Bennett, Linden, Ryan}, generally come in two classes.
The first class predicts a user's ratings based on those provided by similar users, referred to as neighbors.
The second class makes predictions on a product based on ratings that the user has provided on similar products,
which can also be viewed as neighbors.
In this section, we study a simple NN algorithm of the first class and the extent to which its
predictions can be distorted by manipulators.  We show that the bounds of the previous section
do not apply to this NN algorithm, and unlike the case of linear CF algorithms,
distortion does not generally diminish as the active user inspects and rates products.
Though our analysis focuses on a particular NN algorithm,
the resulting insights apply more broadly and in particular, to NN algorithms of the second class as well.

We study the case of binary ratings.  NN algorithms identify and weight
neighbors using a similarity measure.  We will consider a similarity measure that increases by one for each
pair of consistent ratings and decreases by one for each pair of inconsistent ratings:
$$s(x,y) = |\{1 \le n \le N: x_n = y_n \neq\ ?\}| - |\{1 \le n \le N :\ ? \neq x_n \neq y_n \neq\ ?\}|,$$
for any pair of ratings vectors $x,y \in S^N$.

We consider an NN algorithm that predicts the future rating of product $n$ for
a user with ratings vector $x$ by carrying out the following steps.  First, the algorithm
identifies the subset of the training data samples that offer ratings for product $n$.
If this subset is empty, the NN algorithm optimistically predicts a rating of $1$.
Otherwise, from among these ratings vectors, the ones most similar to $x$ are identified.
We denote the resulting set of neighbors, which should be a singleton unless there is a tie,
by ${\cal N}(n,x,W)$.  Finally, an average of their ratings for product $n$
forms the prediction:
$$\tx_{n,x,W} = \frac{\sum_{w \in {\cal N}(n,x,W)} w_n}{|{\cal N}(n,x,W)|}.$$
Our observations extend to other more complicated similarity metrics
and neighbors selection methods.  However, we focus on this particular case in order to keep our analysis clean.

We now consider a simple setting that facilitates analysis of RMS distortion in our NN algorithm.
We are interested in how RMS distortion changes as the number of ratings $n$ provided by an active user
grows.  Since $n$ cannot exceed the number of products $N$, we will define an ensemble of
models indexed by $N$.
To facilitate our construction, we will only consider even $N$.

To keep things simple, we restrict attention to a situation where honest users agree
on the ratings of all products.  In particular, there is a single user type $\ox^{\rm odd}$
which rates odd-indexed products $1$ and even-indexed products $0$.  The user type
PMF $\hTypeD^*$ assigns all probability to this vector.  Each honest ratings vector $y^m$
is generated by sampling a random set of odd numbers between $1$ and $N-1$,
then for each sample $k$, replacing components $k$ and $k+1$ of $\ox^{\rm odd}$
with question marks.  We assume that the honest ratings $Y$ of training data is such that
each set of odd numbers between $1$ and $N-1$ is sampled exactly once.
That is, each element of $Y$ corresponds to an element of the set $\{(1,0),(?,?)\}^{N/2}$.
As such, there are $2^{N/2}$ honest ratings vectors.

Recalling the setting that we use for assessing distortion,
we now consider an active user who inspects products in the ordering $\nu = (1, \ldots,N)$,
rating each based on the prediction of the NN algorithm.
It is easy to see that when there are no manipulators, the NN algorithm perfectly predicts
$\tx_{k,x^{k-1},Y} = \ox^{\rm odd}_k$, and therefore, after the user inspects $k$
products, his ratings history $x^k$ has
$x^k_j = \ox^{\rm odd}_j$ for $j \leq k$ and $x^k_j =\ ?$ for $j > k$.

We assume that manipulators produce one half of the training data.  For each honest ratings vector
$y^m$, manipulators produce a ratings vector $z^m$ which agrees with $y^m$ on all products
rated by $y^m$.  However, question marks in $y^m$ are replaced by $1$ for even indices and
$0$ for odd indices. That is, each $z_m$ corresponds to an element of the set $\{(1,0),(0,1)\}^{N/2}$.

Suppose $k$ is even.
Given $x^k$, the NN algorithm predicts what the active user's rating will be for
product $k+1$.  To do this, it identifies neighbors ${\cal N}(k+1,x^k,(Y,Z))$, which
includes the following subsets of the training data:
\begin{itemize}
\item A set $Y_1$ which consists of
honest ratings vectors $y^m$ where $y^m_j \neq\ ?$ for $j \leq k+1$.
\item A set $Z_1$ that, for each $y^m \in Y_1$, includes the corresponding manipulated
vector $z^m \in Z$.
\item A set $Z_2$ which consists of each manipulated ratings vector $z^m$ such that
the corresponding honest ratings vector $y^m$ has $y^m_j \neq\ ?$ for $j \leq k$
and $y^m_{k+1} =\ ?$.
\end{itemize}
Note that each of these sets is of cardinality $2^{(N - k)/2 - 1}$.
Vectors in $Y_1$ and $Z_1$ correctly rate product $k+1$ as $1$,
whereas vectors in $Z_2$ incorrectly rate it as $0$.  As a consequence, the prediction for product $k+1$ is
$\tx_{k+1,x^k,(Y,Z)} = 2/3$ and the resulting squared error is
$$\left(\ox^{\rm odd}_{k+1} - \tx_{k+1,x,(Y,Z)}\right)^2 = \frac{1}{9}.$$

The preceding argument applies for all even $k$.  For odd $k$, it is easy to show that the NN algorithm correctly
predicts $\ox^{\rm odd}_{k+1} = 0$.  It follows that the RMS distortion for even $n$ is
$$d_n^{\RMS}(p,\nu, Y, Z) = \sqrt{\frac{1}{n} \sum_{k=1}^n \E\left[\left(
\ox^{\rm odd }_k - \tx_{k,x^{k-1},(Y,Z)}\right)^2 \right]}
= \frac{1}{3 \sqrt{2}}.$$

The preceding example shows that the RMS distortion of an NN algorithm for $r = 1/2$ does not decrease
as $n$ grows.
This happens because manipulated data are strategically generated to be sufficiently
similar to honest data so that no matter how many ratings an active user provides,
manipulated ratings vectors will make up a fixed fraction of the neighbors
and consequently induce a significant amount of distortion.

In contrast, Corollary \ref{co:error} establishes that linear
CF algorithms exhibit a more graceful behavior, with RMS distortion vanishing as $n$ increases.
This is not to say it is impossible to design an NN algorithm that exhibits a more desirable behavior
when applied to our example.  However,
it is difficult to know for sure whether a given variation
will behave gracefully in all relevant situations.

\subsection{Discussion}
We now provide an intuitive explanation for why linear CF algorithms
should be robust to manipulation relative to NN algorithms.
First note that robustness depends on how a CF algorithm learns from its mistakes.
In particular, a robust algorithm should notice as it observes differences
between its predictions and an active user's ratings that certain things
learned from the data set are hurting rather than improving its predictions.

Recall that a linear CF algorithm $p$ generates based on the training set $(Y,Z)$
a PMF $\typeDH^{p,(Y,Z)}$ that is a convex
combination of $\typeDH^{p,Y}$ and $\typeDH^{p,Z}$, which are PMFs that the algorithm would generate
based on $Y$ and $Z$, respectively.
As an active user rates more products,
it will be increasingly clear by probabilistic inference
that his ratings $x$ are sampled from $\typeDH^{p,Y}$. In effect, inaccurate predictions induced
by $Z$ will increase the weight on $\typeDH^{p,Y}$ in the conditional PMF of ratings $\typeDH^{p,(Y,Z)}(\cdot|x)$
conditioned on observed ratings $x$. And this makes future predictions more accurate.

In an NN algorithm, on the other hand, inaccurate predictions do not
generally improve further predictions.  In particular,
manipulated ratings vectors that contribute to inaccuracies
may remain in the set of neighbors while honest ratings vectors may be eliminated from it.
In the example in Section \ref{su:NN}, for instance,
manipulated data are generated so that no matter how long an active user's ratings
history is, each honest ratings vector selected as a neighbor
has a manipulated counterpart that is as similar, and hence also selected as a neighbor.
Consequently, as an active user provides more
ratings, the numbers of honest and manipulated neighbors both decrease and stay equal.
As a result, inaccurate predictions do not decrease future distortion.

\section{Empirical Study}\label{se:empirical}

In this section, we present our empirical findings on
the manipulation robustness of NN, linear, and asymptotically linear CF algorithms.
We first introduce the data set that we worked with and then describe the methods
we used to evaluate robustness.

\subsection{Data Set}
We obtained a set of movie ratings provided by users,
made publicly available by Netflix's recommendation system.
Each rating is an integer between $1$ and $5$,
which we normalized to be in $\{0,0.25,0.5,0.75,1\}$ so that the analysis and results in our paper
apply directly.
We randomly sampled from the data set
$5000$ users, who have provided $200000$ ratings of $500$ movies.
We then randomly chose $4000$ of these users and for the purpose of our experiments,
treated them as honest users and their ratings as a training set $Y$.
We used the ratings of the other $1000$ users as a test set, which we refer to as $X$.
We then generated three separate sets of $444$, $1714$, and $4000$ manipulated ratings vectors, respectively.
Each set, which we refer to as $Z$ for simplicity of discussion, is generated
to promote $50\%$ of the movies by using a technique
reported to be effective in the literature \cite{Burke2005a, Lam, Mehta2008, Mobasher2005, Mobasher2006, OMahony, Sandvig, Zhang}.
Specifically, we randomly sampled $250$ of the $500$ movies in $Y$ and let each manipulated ratings vector in each $Z$
assign the highest ratings to these movies, and assign a random rating to each of the other movies, sampled from the movie's
empirical marginal PMF of ratings in $Y$. We then replaced a random subset of ratings in $Z$ with question marks so that
its fraction of question marks matches that in $Y$.
Manipulated ratings vectors generated this way are meant to be similar to honest ratings
vectors except on movies to promote.

\subsection{Evaluation Methods}\label{su:evaluation}
To test the robustness of each CF algorithm $p$, we treated ratings in $X$ as ratings that an active user
would provide and let $p$
predict them. Specifically, we fixed $n$ and for each ratings vector
$x \in X$, identified $n$ random products that it has assigned ratings to and randomly permuted them to form an ordering
$\nu^x=(\nu_1^x,\ldots,\nu_n^x)$. For each $k \le n$,
let $x^{k-1}$ be a ratings vector that agrees with $x$ on products $\nu_1^x,\ldots,\nu_{k-1}^x$ and
assigns question marks to the other products. An algorithm $p$ is then used to generate a scalar prediction
$\tx_{\nu_k^x,x^{k-1},Y}$ for the rating of product $\nu_k^x$ based on $x^{k-1}$ and the honest data set $Y$.
Similarly, a prediction based on a training set $(Y,Z)$
corrupted by manipulated ratings is denoted by $\tx_{\nu_k^x,x^{k-1},(Y,Z)}$.
To assess influence due to manipulation, for each $n$, we computed the following quantity, which we will refer to as
{\it empirical RMS distortion}:
$$\hd_n(p,\nu^X,X,Y,Z) = \sqrt{ \frac{1}{|X|} \sum_{x \in X} \frac{1}{n} \sum_{k=1}^n \left(\tx_{\nu_k^x,x^{k-1},Y} - \tx_{\nu_k^x,x^{k-1},(Y,Z)}\right)^2}.$$
Here, $\nu^X = \{(\nu_1^x,\ldots,\nu_n^x) : x \in X\}$.
The empirical RMS distortion measures changes of predictions for products
rated by active users. It is similar to the RMS
distortion $d_n^{\RMS}(p,\nu,Y,Z)$ that we defined earlier, with one difference:
whereas $d_n^{\RMS}(p,\nu,Y,Z)$ samples each $\ox_{\nu_k}$
from the PMF $p_{\nu_k,x^{k-1},Y}$ that the algorithm generates based on $Y$, $\hd_n(p,\nu^X,X,Y,Z)$
uses elements of $X$ as samples.
We used empirical RMS distortion rather than RMS distortion to assess algorithms in
our empirical study because computing RMS distortion would take too long, requiring a
running time exponential in the number of products $n$ rated by an active user.
Further, if a CF algorithm generates a nearly correct distribution in the
absence of manipulation, its empirical RMS distortion will be close to its RMS distortion.

One might also wonder whether high robustness of CF algorithms stems from high prediction accuracy or
comes at the expense of it.
To better understand the relationship between these two performance measures,
we also computed the following RMS error for each CF algorithm, which we will refer to as
{\it empirical RMS prediction error}:
$$\hcE_n(p,\nu^X,X,Y) = \sqrt{\frac{1}{|X|} \sum_{x \in X} \frac{1}{n} \sum_{k=1}^n \left(x_{\nu_k^x} - \tx_{\nu_k^x,x^{k-1},Y} \right)^2}.$$
This quantity computes the RMS error of predictions for ratings in $X$ when the
algorithm uses $Y$ as training data.

For algorithms that we tested, we tuned some of their parameters by cross validation.
This is a technique that selects parameter values based on the performance of the corresponding algorithm
on out-of-sample data
in order to estimate their performance on future data.
Specifically, we randomly sampled $20\%$ of the users in $Y$. We treated their ratings
as a validation set $V$ and generated predictions based on the remaining ratings $Y \char92 V$.
Consider a parameter $\gamma$ that we tuned for an algorithm.
For each value $\gamma'$ in a range $\Gamma$, we set $\gamma=\gamma'$,
used the corresponding algorithm to predict ratings in $V$
based on ratings in $Y \char92 V$, and computed the empirical RMS prediction
error $\hcE_n(p,\nu^V,V,Y \char92 V)$. Finally, we selected a parameter value $\gamma^*$ that
results in a minimal error.
Similarly, when using $(Y,Z)$ as the training set, we sampled the validation set $V$
from $(Y,Z)$ and for each $\gamma' \in \Gamma$,
computed the empirical RMS prediction error $\hcE_n(p,\nu^V,V,(Y,Z) \char92 V)$ and selected a $\gamma^*$.
Note that we chose to optimize for prediction accuracy rather than robustness in cross validation
because we wanted the algorithms to maintain reasonable accuracy and wanted to avoid tuning
them to be robust for specific manipulation techniques.

Overall, for each algorithm $p$, we generated multiple samples of $X$, $Y$, $Z$, and $\nu^X$, and averaged their resultant
$\hd_n(p,\nu^X,X,Y,Z)$ and $\hcE_n(p,\nu^X,X,Y)$ across samples
to obtain reliable estimates. To summarize with our notation,
$\oS=\{0,0.25,0.5,0.75,1\}$, $N=500$, $(M,r) \in \{(4444,0.1), (5714,0.3), (8000,0.5)\}$, and $1 \le n \le 40$.
We tested three CF algorithms: a linear CF algorithm called kernel density estimation,
an asymptotically linear CF algorithm called naive Bayes, and an NN algorithm called $k$ nearest neighbor.
We now present them in detail.

\subsection{Kernel Density Estimation Algorithms}\label{su:kde}

Kernel density estimation (KDE) algorithms smooth the training data and use their resultant
distribution to predict future ratings.
For an in-depth treatment of KDE algorithms, see \cite{Hastie}.
In our context, we say that a probabilistic CF algorithm $p$ is a KDE algorithm
with kernels $\{\Kernel_w: w \in S^N\}$ if for any $W \in S^{N \times M}$,
$$\typeDH^{p,W} = \frac{1}{M} \sum_{w \in W} \Kernel_w,$$
where each $\Kernel_w$ is a PMF over $\oS^N$ parameterized by a ratings vector $w$.
It turns out that any KDE algorithm is a linear CF algorithm and any linear CF algorithm is a KDE algorithm.
We will establish this in Proposition \ref{pr:KDE_linear} in Appendix \ref{app:kde_nb}.

In our experiments, we considered a KDE algorithm with kernels $\{\Kernel_w\}$ such that for each type $\ox \in \oS^N$,
$$\Kernel_{w}(\ox) = \prod_{n=1}^N \kernel_{w_n}(\ox_n),$$
where for each $s \in S$, $\kernel_s$ is a PMF over $\oS$ defined as follows.
For $s \not= ?$, $\kernel_s$ is the unique PMF that satisfies
$\kernel_s(\os) / \kernel_s(s) = \exp(- |\os - s| / \beta)$
for all $\os \in \oS$. For $s = ?$,
$\kernel_s(\os) = 1 / |\oS|$ for all $\os \in \oS$.
That is, $\kernel_s$ assigns the highest probability
to $s$ and exponentially lower probabilities to values different from $s$ if $s \not= ?$,
and assigns uniform probability to all values if $s = ?$.
It is easy to see that each $\Kernel_w$ thus defined is a PMF, and
it assigns high probability to types similar to $w$ and low probability to others.
The constant $\beta>0$ tunes the shape of $\kernel_s$, which we set to be $0.15$ in our experiments.

To predict the rating of product $\nu_n$ for a user with past ratings $x^{n-1}$,
our KDE algorithm generates a PMF $p_{\nu_n,x^{n-1},W}$, which is the conditional PMF of
$\ox_{\nu_n}$ conditioned on $x^{n-1}$ with respect to the joint PMF $\typeDH^{p,W}$, given by
$$p_{\nu_n, x^{n-1}, W}(\os) = \typeDH^{p,W} \left(\ox_{\nu_n}=\os | x^{n-1} \right)
= \frac{\sum_{w \in W} \prod_{k=1}^{n-1}\kernel_{w_{\nu_k}}(x_{\nu_k}^{n-1}) \kernel_{w_{\nu_n}}(\os) }
{\sum_{w \in W} \prod_{k=1}^{n-1}\kernel_{w_{\nu_k}}(x_{\nu_k}^{n-1}) },$$
for each $\os \in \oS$. The corresponding scalar prediction is the expectation
taken with respect to $p_{\nu_n,x^{n-1},W}$:
$$\tx_{\nu_n,x^{n-1},W} = \sum_{\os \in \oS} \os \, p_{\nu_n, x^{n-1}, W}(\os).$$

\subsection{Naive Bayes Algorithm}\label{su:nb}
A naive Bayes (NB) algorithm assumes that the true distribution of data
is a convex combination of distinct distributions in each of which features of the data are conditionally independent.
It aims to learn from training data
the weights of the combination and feature marginals within each distribution.
For a formal analysis of the algorithm and its applications to other problem settings, see \cite{Cheeseman, Domingos, John}.
We now describe a particular version of the algorithm that we used
and discuss the context in which it is consistent and asymptotically linear.

Our NB algorithm assumes that data are sampled from a joint PMF $\nbJointD$ over $\oS^N \times S^N$
such that for each $(\ow, w) \in \oS^N \times S^N$ with $\ow \rightarrow w$,
\begin{equation}\label{eq:nb}
\nbJointD(\ow,w) = \left( q^{\| w \|_?} (1-q)^{N - \| w \|_?} \right)
\left( \sum_{l=1}^L \eta_l \prod_{n=1}^N \theta_{l,n}^{(\ow_n)} \right),
\end{equation}
where $q \in [0,1)$, $L \in \Z_+$, $\eta \in T_L$, and $\theta_{l,n} \in T_{|\oS|}$ for all $l,n$.
Here, we write $\ow \rightarrow w$ for $(\ow, w)$ if for each $n$, either $w_n=\ow_n$ or $w_n=?$.
We let $\| w \|_?$ denote $|\{n:w_{n}=?\}|$.
For any $k$, we define simplex
$T_k = \{(t_1,\ldots,t_k): t_j \ge 0, \forall \, 1 \le j \le k. \sum_{j=1}^k t_j = 1 \}$.
We also let $\theta=\{\theta_{l,n}, 1 \le l \le L, 1 \le n \le N\}$.

To understand $\nbJointD$, let us consider the following generative process of ratings vectors.
Let $\nbTypeDSet = \{\nbTypeD^1,\ldots,\nbTypeD^L\}$ be $L$ PMFs over $\oS^N$ where each $\nbTypeD^l$ satisfies
$$\nbTypeD^l(\ow) = \prod_{n=1}^N \theta_{l,n}^{(\ow_n)}$$
for all $\ow \in \oS^N$. That is, each $\ow_n$ is independently distributed and is equal
to $\os \in \oS$ with probability $\theta_{l,n}^{(\os)}$. A type $\ow$ is generated by first selecting
a PMF from $\nbTypeDSet$ where each $\nbTypeD^l$ is chosen with probability $\eta_l$ and then sampling from
that PMF. A ratings vector $w$ is then generated by randomly replacing each rating $\ow_n$ by
a question mark with probability $q$, independent of the value $\ow_n$ and whether other ratings
are replaced by question marks.

The algorithm also assumes a geometric prior for $L$ and Dirichlet priors for $\eta$, $\theta$, and $q$.
Hence, the posterior probability density function (PDF) of $(L,\eta,\theta,q)$ conditioned on training data $W$ is given by
\begin{equation}\label{eq:posterior_prob}
f(L,\eta,\theta,q|W) = c f(L, \eta, \theta,q) \Pr(W | L,\eta,\theta,q),
\end{equation}
where $c$ is a normalizing constant and prior PDF
$$f(L,\eta,\theta,q) = p_{L}^{\tau}(L) f_{\eta}^L(\eta) \prod_{l,n}f_{\theta}(\theta_{l,n}) f_{q}(q), \mbox{ with}$$
$$p_{L}^{\tau}(L) = c_L e^{-\tau L},$$
$$f_{\eta}^L(\eta) = c_{\eta}^L \prod_{l=1}^L \eta_l,$$
$$f_{\theta}(\theta_{l,n}) = c_{\theta} \prod_{\os \in \oS} \theta_{l,n}^{(\os)}, \, \forall\, l,n, \mbox{ and}$$
$$f_q(q) = c_q q (1 - q),$$
and data likelihood
$$\Pr(W | L,\eta,\theta,q) =
\prod_{w \in W} \left( \left( q^{\| w \|_?} (1-q)^{N - \| w \|_?} \right)
\left( \sum_{l=1}^L \eta_l \prod_{n:w_n\not=?} \theta_{l,n}^{(w_n)} \right) \right).$$
Here, subscripts $L$, $\eta$, $\theta$, and $q$ of the functions $p_L^{\tau}$, $f_{\eta}^{L}$, $f_{\theta}$, and $f_q$
denote the parameters that the distributions are over. The superscript $\tau$ in $p_L^{\tau}$ denotes dependence on
parameter $\tau$, which controls the shape of the geometric PMF.
The superscript $L$ in $f_{\eta}^L$ denotes dependence on $L$.
$c_L$, $c_{\eta}^L$, $c_{\theta}$, and $c_q$ are normalizing constants.

The algorithm maximizes the posterior PDF over parameters
by using the expectation-maximization algorithm \cite{Dempster} and obtains
$$(\hL, \heta, \htheta, \hq) \in \argmax_{(L, \eta, \theta, q)} f(L, \eta, \theta, q | W).$$
We denote by $\typeDH^{p,W}$ the PMF over $\oS^N$ implied by $(\hL, \heta, \htheta)$, which
the algorithm uses for predictions.

In particular, a prediction for the rating of product $\nu_n$ for a user with past ratings $x^{n-1}$ is given by the PMF
$$p_{\nu_n, x^{n-1}, W}(\os) = \typeDH^{p,W} \left(\ox_{\nu_n}=\os | x^{n-1} \right)
= \frac{\sum_{l=1}^{\hL} \heta_l \prod_{k=1}^{n-1} {\htheta}_{l,\nu_k}^{(x_{\nu_k}^{n-1})} {\htheta}_{l,\nu_n}^{(\os)}}
{\sum_{l=1}^{\hL} {\heta}_l \prod_{k=1}^{n-1} {\htheta}_{l,\nu_k}^{(x_{\nu_k}^{n-1})}},$$
for each $\os \in \oS$. The corresponding scalar prediction is
$$\tx_{\nu_n,x^{n-1},W} = \sum_{\os \in \oS} \os \, p_{\nu_n, x^{n-1}, W}(\os).$$

In our experiments, we tuned bandwidth $\tau$ by cross validation over the range
$\Gamma=\{1,10,$
$100,1000,10000,100000\}$ and settled at $\tau=10000$.

We now discuss the context in which the NB algorithm is consistent and asymptotically linear.
For any $a \in [0,1)$, We let $\jointDSet^{a}$ be the set of all joint PMFs over $\oS^N \times S^N$
of the form in (\ref{eq:nb}) where $q$ is fixed to be $a$.
We establish in Proposition \ref{pr:nb_consistent} in Appendix \ref{app:kde_nb}
that for any $a$, $\jointDSet^a$ is identifiable and convex,
and the NB algorithm is consistent with respect to it.
Then, by Theorems \ref{th:asymptotic_distortion} and \ref{th:consistent},
the algorithm is asymptotically linear with respect to $\jointDSet^a$ and our distortion bounds apply.
Let $\typeDSet^a = \{\typeD: \jointD \in \jointDSet^a\}$
be the set of marginals over $\oS^N$ of PMFs in $\jointDSet^a$.
We note that for any $a$, $\typeDSet^a$ contains all PMFs over $\oS^N$.
This implies that for any PMFs $\hTypeD^*$ and $\mTypeD^*$ over $\oS^N$,
if honest and manipulated data are generated by first sampling types from these PMFs and then
independently replacing each rating by a question mark with the same probability $a$,
then the NB algorithm will be asymptotically robust as the sample size grows.
In our experiments, although the condition regarding replacement by question marks may not hold,
we still apply the NB algorithm with the hope that it will deliver reasonable robustness.

\subsection{$k$ Nearest Neighbor Algorithm}
A class of NN algorithms called $k$ nearest neighbor ($k$NN) algorithms is
frequently used as a performance benchmark in prior work \cite{Burke2005a, Lam, Zhang}. The version that we tested
works as follows.

To predict the rating of product $\nu_n$ by a user with past ratings $x^{n-1}$ where $n \ge 3$,
the algorithm identifies a set of neighbors ${\cal N}(\nu_n,x^{n-1},W)$ to be $k$ ratings vectors $w \in W$ such that
$w_{\nu_n} \not= ?$ and score highest with $x^{n-1}$ on the following similarity measure:
$$s(w, x^{n-1}) = \frac{\sum_{1\le i\le n-1: w_{\nu_i} \not=?} (w_{\nu_i} - \hw)(x_{\nu_i}^{n-1} - \hx^{n-1})}
{\sqrt{\sum_{1 \le i \le N: w_i \not=?} (w_i - \hw)^2} \sqrt{\sum_{1 \le i \le n-1} (x_{\nu_i}^{n-1} - \hx^{n-1})^2}},$$
where average ratings are given by
$$\hw = \frac{\sum_{1 \le i \le N: w_i \not=?} w_i}{|\{1 \le i \le N: w_i \not=?\}|},$$
$$\hx^{n-1} = \frac{\sum_{1 \le i \le n-1} x_{\nu_i}^{n-1}}{n-1}.$$
Note that $s$ here resembles the notion of a sample correlation coefficient.
Its numerator is the covariance between non-question-mark components of $w$ and $x^{n-1}$.
The denominator is the product of the standard deviation of non-question-mark components of $w$ and the same quantity
for $x^{n-1}$.
The algorithm then generates the following scalar prediction:
$$\tx_{\nu_n,x^{n-1},W} = \min \left\{ \osMax, \max \left\{ \osMin,
\hx^{n-1} + \frac{\sum_{w \in {\cal N}(\nu_n,x^{n-1},W)} s(w, x^{n-1}) (w_{\nu_n} - \hw)}{\sum_{w \in {\cal N}(\nu_n,x^{n-1},W)} |s(w, x^{n-1})|} \right\} \right\},$$
where $\osMax = \max \{\os: \os \in \oS\}$ and $\osMin=\min\{\os: \os \in \oS\}$.
To arrive at this quantity, for each neighbor, the difference between its rating $w_{\nu_n}$
for product $\nu_n$ and its average rating $\hw$ is first computed.
A weighted sum of these differences is then computed, where the weights are normalized similarity
measures. The user's historical ratings average $\hx^{n-1}$ is then added to the sum. The total is used as the prediction,
unless it falls outside $[\osMin,\osMax]$, in which case either $\osMin$ or $\osMax$ is used, whichever is closer.

For a user with ratings history $x^{n-1}$ where $n \le 2$, $s(\cdot,x^{n-1})$ is not well-defined.
In this case, the algorithm uses the average rating of product $\nu_n$ in the training data to generate the prediction:
$$\tx_{\nu_n,x^{n-1},W} = \min \left\{ \osMax, \max \left\{ \osMin,
\frac{\sum_{\{w:w \in W,w_{\nu_n} \not= ?\}} w_{\nu_n}}{|\{w:w \in W, w_{\nu_n} \not= ?\}|} \right\} \right\}.$$

In our experiments, we tuned the number of neighbors $k$ by cross validation over the range
$\Gamma=\{1,2,\ldots,40\}$ and settled at $k=10$.

Note that even though the $k$NN algorithm generates scalar predictions, it still fits our definition of
CF algorithms because it is possible to come up with PMFs whose corresponding expectations
equal the predictions $\tx_{\nu_n,x^{n-1},W}$. We do not explicitly define such a PMF, however,
because it is not necessary for computing the empirical RMS distortion in our experiments.

\subsection{Results}\label{su:results}

Figure \ref{fig:distortions} shows the empirical RMS distortions for the three algorithms
that we tested, with different fractions of manipulated data.
Our results suggest that in practice, NB and KDE algorithms are significantly more robust than $k$NN.
In particular, when a user's ratings history is short, $k$NN and NB both incur higher empirical RMS distortions
than KDE. This difference arises because while $k$NN and NB ignore question marks, KDE uses them
and as a result, tempers its predictions. To gain some intuition, let
us consider the following problem instance where ratings are binary: the set of honest ratings $Y$ consists of
$K$ vectors whose entries are all $1$s and as many vectors whose entries are all question marks.
The set of manipulated ratings $Z$ consists of $K$ vectors whose entries are all $0$s and as many
vectors whose entries are all question marks.
To predict the first rating $\ox_{\nu_1}$ of an active user, $k$NN would yield a prediction of $1$ and $1/2$ based
on $Y$ and $(Y,Z)$, respectively, incurring an RMS distortion of $1/2$.
KDE would yield a prediction close to $3/4$ based on $Y$ and a prediction of
$1/2$ based on $(Y,Z)$, incurring an RMS distortion of $1/4$, significantly
less than that of $k$NN. Clearly, the presence of question marks smooths KDE's predictions and
keeps its distortion low.

In Figure \ref{fig:distortions}, as more ratings are provided, distortions incurred by all three algorithms decrease.
When a user's ratings history is long, NB and KDE incur distortions
significantly lower than that of $k$NN. Note that distortions of NB and KDE always stay below the bound in
Corollary \ref{co:error}.
The curves for $k$NN are flat for $n \le 2$ because the algorithm provides the same predictions
for the first two ratings $\ox_{\nu_1}$ and $\ox_{\nu_2}$ of an active user. Note that as fraction of manipulated data $r$
increases, distortions incurred by all three algorithms increase as well.


\begin{figure}[htpb!]
  \centering
  \subfloat[$r=0.1$]{
  \begin{tikzpicture}[x=0.11in,y=8.1in]
    \node[coordinate] at (1,0) (origin) {};
    \node[coordinate] at (40,0) (x) {};
    \node[coordinate] at (1,0.25) (y) {};
    \draw[red,thick] plot file {data/bnd_0.1.table};
    \draw[brown,thick] plot file {data/kNN_distortions_0.1.table};
    \draw[blue,thick] plot file {data/NB_distortions_0.1.table};
    \draw[green,thick] plot file {data/KDE_distortions_0.1.table};
    \draw (1,0) node[left,yshift=2pt] {$0$};
    \foreach \y in {0.05,0.1,0.15,0.2,0.25}
    \draw (1,\y) node[left] {$\y$};
    \draw (1,0.25) node[yshift=14pt] {$\hd_n(p, \nu^X, X, Y, Z)$};
    \foreach \y in {0.05,0.1,0.15,0.2,0.25}
    {
      \draw[shift={(1,\y -| y)}] (5pt,0pt) -- (0pt,0pt);
      \draw[shift={(1,\y -| x)}] (-5pt,0pt) -- (0pt,0pt);
    }
    \foreach \y in {0.05,0.1,0.15,0.2,0.25}
    {
      \foreach \z in {0.01,0.02,0.03,0.04} {
        \draw[shift={(1,\y -| y)},shift={(0,-0.05)},shift={(0,\z)}]
        (2.5pt,0pt) -- (0pt,0pt);
        \draw[shift={(1,\y -| x)},shift={(0,-0.05)},shift={(0,\z)}]
        (-2.5pt,0pt) -- (0pt,0pt);
      }
    }
    \draw (1,0) node[below,xshift=2pt] {$1$};
    \foreach \x in {5, 10, 15, 20, 25, 30, 35, 40}
    \draw (\x,0) node[below] {$\x$};
    \draw (40,0) node[below,xshift=15pt] { $n$};
    \foreach \x in {5, 10, 15, 20, 25, 30, 35, 40}
    \draw[shift={(\x,0)}] (0pt,5pt) -- (0pt,0pt);
    \draw[thick] (origin) -- (x)
    -- (y -| x) -- (y) -- (origin);
    \draw (3,0.13) node[right] {\small Bound};
    \draw (3,0.086) node[right] {\small $k$NN};
    \draw (3,0.049) node[right] {\small NB};
    \draw (3,0.029) node[right] {\small KDE};
  \end{tikzpicture}
  }

  \subfloat[$r=0.3$]{
  \begin{tikzpicture}[x=0.11in,y=8.1in]
    \node[coordinate] at (1,0) (origin) {};
    \node[coordinate] at (40,0) (x) {};
    \node[coordinate] at (1,0.25) (y) {};
    \draw[red,thick] plot file {data/bnd_0.3.table};
    \draw[brown,thick] plot file {data/kNN_distortions_0.3.table};
    \draw[blue,thick] plot file {data/NB_distortions_0.3.table};
    \draw[green,thick] plot file {data/KDE_distortions_0.3.table};
    \draw (1,0) node[left,yshift=2pt] {$0$};
    \foreach \y in {0.05,0.1,0.15,0.2,0.25}
    \draw (1,\y) node[left] {$\y$};
    \draw (1,0.25) node[yshift=14pt] {$\hd_n(p, \nu^X, X, Y, Z)$};
    \foreach \y in {0.05,0.1,0.15,0.2,0.25}
    {
      \draw[shift={(1,\y -| y)}] (5pt,0pt) -- (0pt,0pt);
      \draw[shift={(1,\y -| x)}] (-5pt,0pt) -- (0pt,0pt);
    }
    \foreach \y in {0.05,0.1,0.15,0.2,0.25}
    {
      \foreach \z in {0.01,0.02,0.03,0.04} {
        \draw[shift={(1,\y -| y)},shift={(0,-0.05)},shift={(0,\z)}]
        (2.5pt,0pt) -- (0pt,0pt);
        \draw[shift={(1,\y -| x)},shift={(0,-0.05)},shift={(0,\z)}]
        (-2.5pt,0pt) -- (0pt,0pt);
      }
    }
    \draw (1,0) node[below,xshift=2pt] {$1$};
    \foreach \x in {5, 10, 15, 20, 25, 30, 35, 40}
    \draw (\x,0) node[below] {$\x$};
    \draw (40,0) node[below,xshift=15pt] { $n$};
    \foreach \x in {5, 10, 15, 20, 25, 30, 35, 40}
    \draw[shift={(\x,0)}] (0pt,5pt) -- (0pt,0pt);
    \draw[thick] (origin) -- (x)
    -- (y -| x) -- (y) -- (origin);
    \draw (7,0.162) node[right] {\small Bound};
    \draw (7,0.132) node[right] {\small $k$NN};
    \draw (7,0.096) node[right] {\small NB};
    \draw (7,0.051) node[right] {\small KDE};
  \end{tikzpicture}
  }

  \subfloat[$r=0.5$]{
  \begin{tikzpicture}[x=0.11in,y=8.1in]
    \node[coordinate] at (1,0) (origin) {};
    \node[coordinate] at (40,0) (x) {};
    \node[coordinate] at (1,0.25) (y) {};
    \draw[red,thick] plot file {data/bnd_0.5.table};
    \draw[brown,thick] plot file {data/kNN_distortions_0.5.table};
    \draw[blue,thick] plot file {data/NB_distortions_0.5.table};
    \draw[green,thick] plot file {data/KDE_distortions_0.5.table};
    \draw (1,0) node[left,yshift=2pt] {$0$};
    \foreach \y in {0.05,0.1,0.15,0.2,0.25}
    \draw (1,\y) node[left] {$\y$};
    \draw (1,0.25) node[yshift=14pt] {$\hd_n(p, \nu^X, X, Y, Z)$};
    \foreach \y in {0.05,0.1,0.15,0.2,0.25}
    {
      \draw[shift={(1,\y -| y)}] (5pt,0pt) -- (0pt,0pt);
      \draw[shift={(1,\y -| x)}] (-5pt,0pt) -- (0pt,0pt);
    }
    \foreach \y in {0.05,0.1,0.15,0.2,0.25}
    {
      \foreach \z in {0.01,0.02,0.03,0.04} {
        \draw[shift={(1,\y -| y)},shift={(0,-0.05)},shift={(0,\z)}]
        (2.5pt,0pt) -- (0pt,0pt);
        \draw[shift={(1,\y -| x)},shift={(0,-0.05)},shift={(0,\z)}]
        (-2.5pt,0pt) -- (0pt,0pt);
      }
    }
    \draw (1,0) node[below,xshift=2pt] {$1$};
    \foreach \x in {5, 10, 15, 20, 25, 30, 35, 40}
    \draw (\x,0) node[below] {$\x$};
    \draw (40,0) node[below,xshift=15pt] { $n$};
    \foreach \x in {5, 10, 15, 20, 25, 30, 35, 40}
    \draw[shift={(\x,0)}] (0pt,5pt) -- (0pt,0pt);
    \draw[thick] (origin) -- (x)
    -- (y -| x) -- (y) -- (origin);
    \draw (7,0.23) node[right] {\small Bound};
    \draw (7,0.16) node[right] {\small $k$NN};
    \draw (7,0.12) node[right] {\small NB};
    \draw (7,0.08) node[right] {\small KDE};
  \end{tikzpicture}
  }
  \caption{Empirical RMS distortion as a function of $n$, for different $r$.\label{fig:distortions}}
\end{figure}
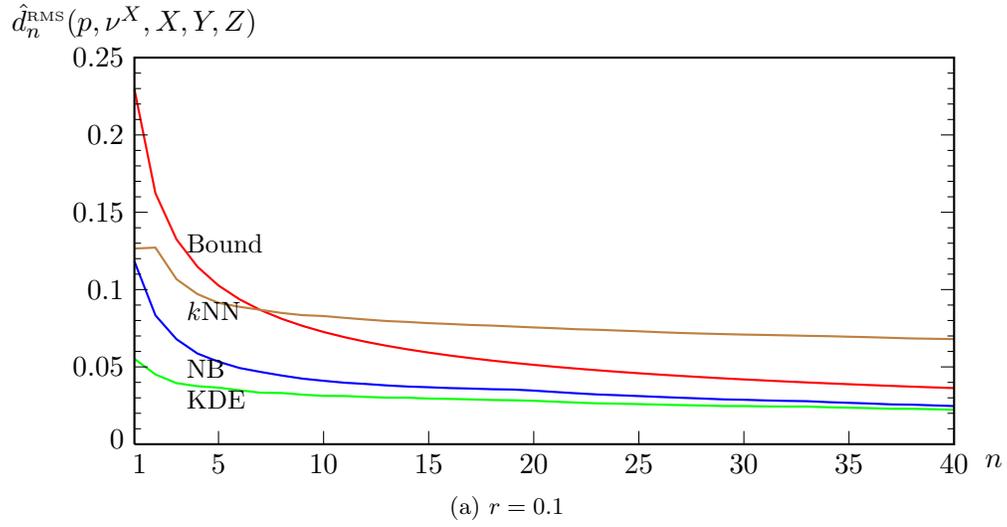
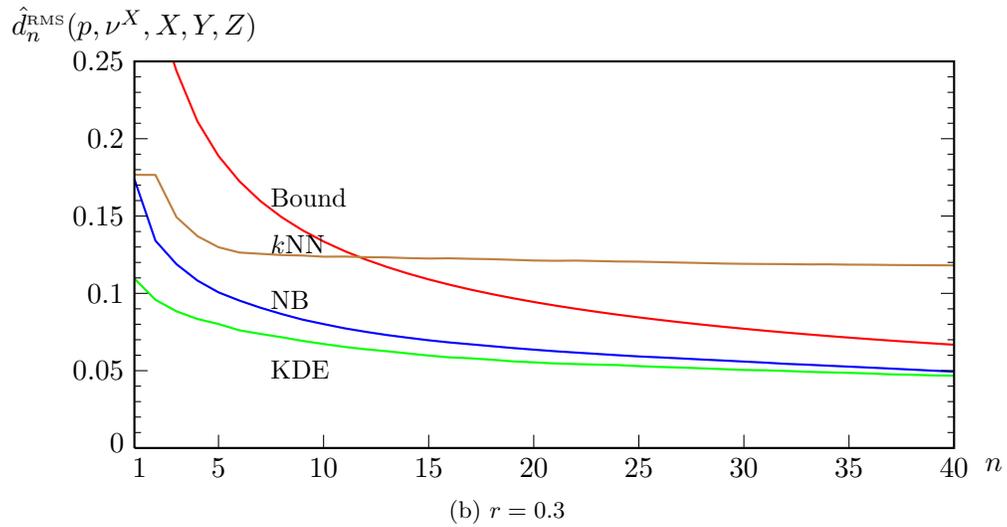
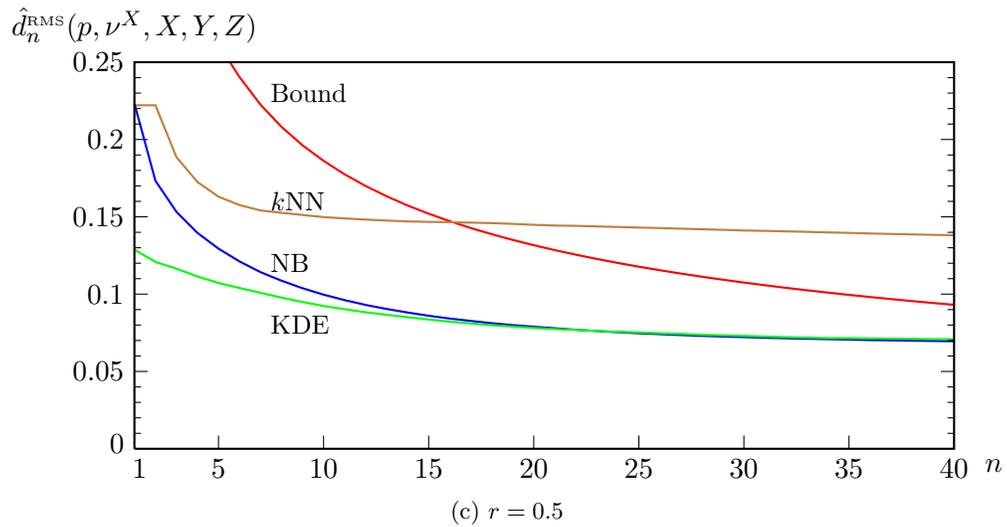


Figure \ref{fig:predictions} displays the empirical RMS prediction errors of the three algorithms.
When $n$ is large, their errors all decrease and in particular, NB offers the lowest error and $k$NN, the highest.
$k$NN sees a spike around $n=3$ because the algorithm
switches its prediction method there: it generates predictions by using average ratings of
all users for $n \le 2$ and generates predictions by using average ratings of neighbors for $n \ge 3$.


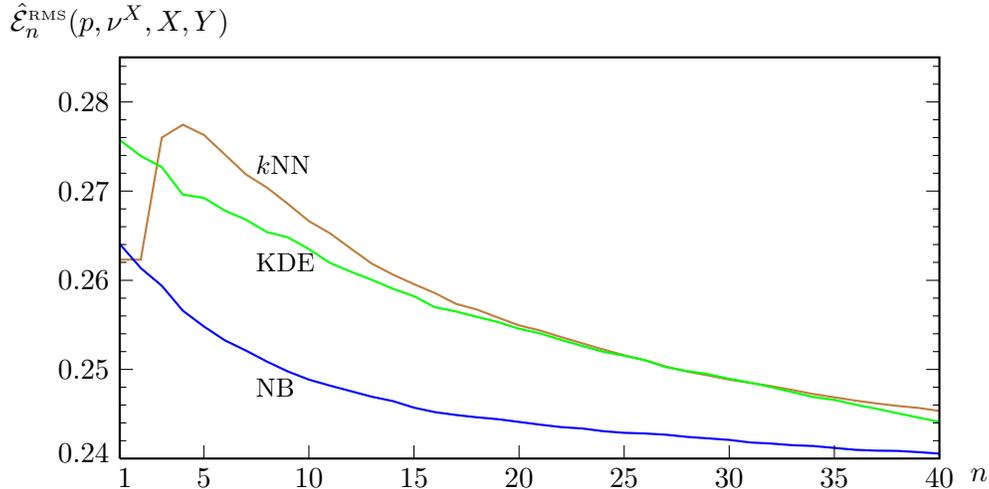
\begin{figure}[htpb!]
  \centering
  \begin{tikzpicture}[x=0.11in,y=46.7in]
    \node[coordinate] at (1,.24) (origin) {};
    \node[coordinate] at (40,.24) (x) {};
    \node[coordinate] at (1,.285) (y) {};
    \draw[brown,thick] plot file {data/kNN_preds.table};
    \draw[blue,thick] plot file {data/NB_preds.table};
    \draw[green,thick] plot file {data/KDE_preds.table};
    \draw (1,.24) node[left,yshift=2pt] {$0.24$};
    \foreach \y in {0.25,0.26,0.27,0.28}
    \draw (1,\y) node[left] {$\y$};
    \draw (1,.285) node[yshift=14pt] {$\hcE_n(p,\nu^X,X,Y)$};
    \foreach \y in {0.25,0.26,0.27,0.28}
    {
      \draw[shift={(1,\y -| y)}] (5pt,0pt) -- (0pt,0pt);
      \draw[shift={(1,\y -| x)}] (-5pt,0pt) -- (0pt,0pt);
    }
    \foreach \y in {0.25,0.26,0.27,0.28}
    {
      \foreach \z in {0.002,0.004,0.006,0.008} {
        \draw[shift={(1,\y -| y)},shift={(0,-0.01)},shift={(0,\z)}]
        (2.5pt,0pt) -- (0pt,0pt);
        \draw[shift={(1,\y -| x)},shift={(0,-0.01)},shift={(0,\z)}]
        (-2.5pt,0pt) -- (0pt,0pt);
      }
    }
    \foreach \y in {0.29}
    {
      \foreach \z in {0.002,0.004} {
        \draw[shift={(1,\y -| y)},shift={(0,-0.01)},shift={(0,\z)}]
        (2.5pt,0pt) -- (0pt,0pt);
        \draw[shift={(1,\y -| x)},shift={(0,-0.01)},shift={(0,\z)}]
        (-2.5pt,0pt) -- (0pt,0pt);
      }
    }
    \draw (1,.24) node[below,xshift=2pt] {$1$};
    \foreach \x in {5, 10, 15, 20, 25, 30, 35, 40}
    \draw (\x,.24) node[below] {$\x$};
    \draw (40,.24) node[below,xshift=15pt] { $n$};
    \foreach \x in {5, 10, 15, 20, 25, 30, 35, 40}
    \draw[shift={(\x,.24)}] (0pt,5pt) -- (0pt,0pt);
    \draw[thick] (origin) -- (x)
    -- (y -| x) -- (y) -- (origin);
    \draw (7,0.273) node[right] {\small $k$NN};
    \draw (7,0.248) node[right] {\small NB};
    \draw (7,0.262) node[right] {\small KDE};
  \end{tikzpicture}
  \caption{Empirical RMS prediction error as a function of $n$.\label{fig:predictions}}
\end{figure}

To get a better sense of our results, we note that Netflix announced that its proprietary algorithm achieves
an empirical RMS prediction error, normalized to our scale, of $0.238$ on a large test set,
and will award one million dollars to
anyone that improves it to $0.214$ \cite{Netflix}.
One might wonder why a decrease of $0.024$ may have such a large impact on
recommendation quality. We suspect that due to the large number of movies,
many of them are given similar predicted ratings. As a result, a small improvement
in prediction accuracy may tease apart these movies and identify the most desirable ones.

Compared to Netflix's benchmark and target prediction errors, our results are reasonable
but not competitive. This is because we did not focus on optimizing the prediction accuracy of the algorithms.
If our objective was to achieve the highest possible accuracy while maintaining
reasonable robustness, one option we could try is to
fine-tune our robust algorithms to be accurate.
For example, for KDE algorithms, we could work to identify more effective kernels.
For NB algorithms, we could choose
different priors or use methods other than expectation-maximization to find the model parameters.
We could probably also design other robust linear and asymptotically linear CF algorithms that achieve higher accuracy
as well.
Overall, we are not suggesting that in practice, the specific algorithms that we presented should be directly implemented.
Instead, one should either use them as starting points or take the insights that they yield into consideration
when designing accurate and robust CF systems.

\section{Conclusion}

Our analytical and empirical work suggests that
linear and asymptotically linear
algorithms can be more robust to manipulation than commonly used nearest neighbor algorithms.
Our results also suggest that it is possible to design algorithms that achieve accuracy alongside robustness.
As such, recommendation systems of Internet commerce sites may improve their robustness to manipulation
by adopting the approaches that we describe. They may also use the bounds on distortion that we establish
as a guide on how many ratings each user should provide to a recommendation system before its predictions
can be trusted.

The simple setting in our work serves as a context for the initial development of our idea,
and can be extended in multiple ways.
One direction is to study the robustness of collaborative filtering algorithms
as measured by alternative metrics. One metric could be, for instance, a user's utility loss
due to manipulation. Another extension is to design algorithms
that provide non-asymptotic guarantees on both prediction accuracy and robustness.

The framework that we establish also facilitates studying the effectiveness of alternative
techniques to abate influence by manipulators. For instance, given a scheme that incentivizes
users to inspect and rate products, one could analyze how honest users and manipulators would
behave, and then use our distortion metrics to assess the robustness of the scheme to manipulation.

It is also worth mentioning that many commercial recommendation systems
build on multiple sources of information, not just collaborative filtering \cite{Adomavicius}.
For example, as discussed in \cite{Balabanovic}, recommendations should also be guided by
features of the products being recommended. An added benefit of the
approaches that we present is that they facilitate coherent fusion of multiple sources of information.

\section*{Acknowledgments}

The authors thank Christina Aperjis, Tom Cover, Paul Cuff, Amir Dembo, Persi Diaconis,
Vivek Farias, John Gill, Ramesh Johari, Yi-Hao Kao, Yi Lu, Taesup Moon, Beomsoo Park, and
Assaf Zeevi for helpful suggestions.
This research was supported in part by the National Science Foundation through grant IIS-0428868.

{\small
\singlespacing
\bibliography{cf}
}

\appendix

\section{Proofs}

\subsection{Relationships Among Distortion Measures}\label{app:distortion_measures}
Propositions \ref{pr:error_bound} and \ref{pr:binary_bound} state relationships between KL, RMS, and
binary prediction distortions. Lemmas \ref{le:mean_bound} and \ref{le:binary_bound} help prove them.

\begin{lemma}\label{le:mean_bound}
Consider two PMFs $p$ and $q$ with support on the same finite set $U \subset [0,1]$. Let $u$ denote
a dummy variable. It holds that
$$\left| \E_p[u] - \E_q[u] \right| \le \frac{1}{2} \| p - q \|_1.$$
\end{lemma}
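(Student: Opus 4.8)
The plan is to write both expectations explicitly as sums over $U$ and bound the difference directly. First I would observe that
$$\E_p[u] - \E_q[u] = \sum_{u \in U} u\,(p(u) - q(u)),$$
so the quantity of interest is a weighted sum of the signed differences $p(u)-q(u)$, with weights $u \in [0,1]$.

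The key step is to exploit that $p$ and $q$ are both PMFs, so that $\sum_{u \in U}(p(u)-q(u)) = 0$. This lets me subtract any constant multiple of this vanishing sum from the weights without changing the value; I would recenter the weights at the midpoint $1/2$ of the interval $[0,1]$, writing
$$\sum_{u \in U} u\,(p(u)-q(u)) = \sum_{u \in U} \left(u - \tfrac{1}{2}\right)(p(u)-q(u)).$$
Applying the triangle inequality together with the bound $|u - 1/2| \le 1/2$, valid for every $u \in [0,1]$, then yields
$$\left| \E_p[u] - \E_q[u] \right| \le \sum_{u \in U} \left|u - \tfrac{1}{2}\right|\,|p(u)-q(u)| \le \frac{1}{2} \sum_{u \in U} |p(u)-q(u)| = \frac{1}{2}\,\|p-q\|_1,$$
which is exactly the claimed inequality.

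There is essentially no hard part here; the only point requiring care is the recentering, which is what makes the constant $1/2$ sharp. Bounding $\sum_{u} u\,|p(u)-q(u)|$ directly would only give the weaker estimate $\|p-q\|_1$, since $u \le 1$. Centering at $1/2$ minimizes $\max_{u \in [0,1]} |u - c|$ over choices of $c$, and this is precisely what produces the factor of $1/2$. An equivalent route I could take instead is to split $U$ into the subsets on which $p-q$ is positive and negative, noting that each carries exactly half of the total variation $\|p-q\|_1$ and that the weights lie in $[0,1]$; the recentering argument is simply the most economical way to package this observation.
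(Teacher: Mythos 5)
Your proof is correct, and it takes a genuinely different route from the paper's. The paper orders the differences $p_i - q_i$ so that the positive ones come first, locates the sign change, and uses the identity $\sum_{i=1}^n |p_i - q_i| = \sum_{i=n+1}^N |p_i - q_i|$ (each side being $\tfrac{1}{2}\|p-q\|_1$); it then bounds $\left|\E_p[u]-\E_q[u]\right|$ by the maximum of the magnitudes of the positive-part and negative-part contributions, each of which is at most half the $\ell_1$ norm since $u_i \in [0,1]$. You instead exploit the same cancellation $\sum_u (p(u)-q(u)) = 0$ by recentering the weights at $1/2$ and applying the plain triangle inequality with $|u - \tfrac{1}{2}| \le \tfrac{1}{2}$. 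The two arguments rest on the same fact, but your packaging is shorter and makes the origin of the constant transparent: $1/2$ is the half-width of the interval containing $U$, so your argument generalizes verbatim to $U \subset [a,b]$ with constant $(b-a)/2$, and it avoids the sorting and case bookkeeping. The paper's version, by contrast, makes explicit the total-variation structure (positive and negative parts carrying equal mass), which is the observation you yourself note as the ``equivalent route'' in your closing remark --- that alternative is, in fact, essentially the paper's proof.
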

\begin{proof}
Let $U=\{u_1,\ldots,u_N\}$ and correspondingly, let $p_i=p(u_i)$ and $q_i=q(u_i)$, for $1 \le i \le N$.
Without loss of generality, let $p_1 - q_1 \ge p_2 - q_2 \ge \cdots \ge p_N - q_N$.
There exists $n$ such that $p_n - q_n \ge 0 \ge p_{n+1} - q_{n+1}$. Hence,
$\sum_{i=1}^n |p_i - q_i| = \sum_{i=n + 1}^N |p_i - q_i|$.
We then have
\begin{eqnarray*}
&& \left| \E_p[u] - \E_q[u] \right| = \left| \sum_{i=1}^N u_i (p_i - q_i) \right| = \left| \sum_{i=1}^n u_i (p_i - q_i) + \sum_{i=n+1}^N u_i (p_i - q_i) \right| \\
&\le& \max \left\{ \left| \sum_{i=1}^n u_i (p_i - q_i) \right|, \left| \sum_{i=n+1}^N u_i(p_i-q_i) \right| \right\}
\le \max \left\{ \sum_{i=1}^n \left| p_i - q_i \right|, \sum_{i=n+1}^N \left| p_i-q_i \right| \right\} \\
&=& \frac{1}{2} \sum_{i=1}^N \left| p_i - q_i \right|.
\end{eqnarray*}
\end{proof}

\begin{proposition}\label{pr:error_bound}
Fix the number of products $N$ and let $p$ be a CF algorithm.
Then, for all $M$, $r \in \{0,1/M,\ldots,(M-1)/M\}$, $Y \in S^{N \times (1-r) M}$, $Z \in S^{N \times r M}$, and $\nu \in \sigma_N$,
$$d_n^{\RMS}(p, \nu, Y,Z) \leq \sqrt{\frac{1}{2} d_n^{\KL}(p, \nu, Y,Z)}.$$
\end{proposition}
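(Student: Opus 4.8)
The plan is to establish the bound term by term, for each fixed $k$ and each realization of the history $x^{k-1}$, and then average. Writing $p \equiv p_{\nu_k,x^{k-1},Y}$ and $q \equiv p_{\nu_k,x^{k-1},(Y,Z)}$ for brevity, the scalar predictions $\tx_{\nu_k,x^{k-1},Y}$ and $\tx_{\nu_k,x^{k-1},(Y,Z)}$ are by definition the means $\E_p[u]$ and $\E_q[u]$ of these two PMFs over $\oS \subset [0,1]$. So the goal reduces to controlling $(\E_p[u] - \E_q[u])^2$ by the divergence $D(p \,\|\, q)$, pointwise in $x^{k-1}$.

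First I would apply Lemma \ref{le:mean_bound}, which gives $|\E_p[u] - \E_q[u]| \le \frac{1}{2}\|p-q\|_1$; squaring yields $(\E_p[u]-\E_q[u])^2 \le \frac{1}{4}\|p-q\|_1^2$. Next I would invoke Pinsker's inequality in the form $\|p-q\|_1^2 \le 2\, D(p \,\|\, q)$ (equivalently, the total-variation distance $\frac{1}{2}\|p-q\|_1$ is at most $\sqrt{\frac{1}{2}D(p\,\|\,q)}$). Chaining the two inequalities produces the clean pointwise bound
$$\left(\tx_{\nu_k,x^{k-1},Y} - \tx_{\nu_k,x^{k-1},(Y,Z)}\right)^2 \le \frac{1}{2}\, D\!\left(p_{\nu_k,x^{k-1},Y} \,\|\, p_{\nu_k,x^{k-1},(Y,Z)}\right),$$
valid for every value of $x^{k-1}$.

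Because this inequality holds pointwise and both sides are nonnegative, I would then take expectations over $x^{k-1}$ (distributed as the algorithm predicts from $Y$), using monotonicity of expectation to preserve the ordering. Summing over $k=1,\ldots,n$, dividing by $n$, and recognizing the right-hand side as $\frac{1}{2}\, d_n^{\KL}(p,\nu,Y,Z)$ gives
$$\frac{1}{n}\sum_{k=1}^n \E\!\left[\left(\tx_{\nu_k,x^{k-1},Y} - \tx_{\nu_k,x^{k-1},(Y,Z)}\right)^2\right] \le \frac{1}{2}\, d_n^{\KL}(p,\nu,Y,Z).$$
Taking square roots of both sides—noting the left side is exactly $\bigl(d_n^{\RMS}(p,\nu,Y,Z)\bigr)^2$ and that $\sqrt{\cdot}$ is monotone—yields the claim.

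The argument is essentially a two-step chaining of known inequalities, so there is no serious obstacle; the only ingredient demanding care is Pinsker's inequality, which is not proved elsewhere in the excerpt and must be cited as a standard result (or supplied with a short self-contained proof). One should also confirm that the $\E_p[u]$, $\E_q[u]$ of Lemma \ref{le:mean_bound} genuinely match the scalar predictions $\tx$: this holds by definition, since each $\tx$ is the expectation of $\ox_{\nu_k}$ under the corresponding PMF and $\oS \subset [0,1]$ places us squarely in the hypothesis of the lemma.
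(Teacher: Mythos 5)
Your proposal is correct and follows essentially the same route as the paper's proof: both apply Lemma \ref{le:mean_bound} to bound the gap between scalar predictions by half the $\ell_1$ distance, then invoke Pinsker's inequality to pass to the KL divergence, and finally average over $k$ and $x^{k-1}$ inside the square root. The only cosmetic difference is that you state the chained bound pointwise before taking expectations, whereas the paper carries the inequalities directly through the displayed expression defining $d_n^{\RMS}$.
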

\begin{proof}
Recall that $\tx_{\nu_k,x^{k-1},Y}$ and $\tx_{\nu_k,x^{k-1},(Y,Z)}$ denote the expected ratings of product $\nu_k$
with respect to PMFs $p_{\nu_k,x^{k-1},Y}$ and $p_{\nu_k,x^{k-1},(Y,Z)}$, respectively. We have
\begin{eqnarray*}
&& d_n^{\RMS}(p, \nu, Y,Z) \\
&=& \sqrt{\frac{1}{n} \sum_{k=1}^n \E\left[\left(\tx_{\nu_k,x^{k-1},Y} - \tx_{\nu_k,x^{k-1},(Y,Z)}\right)^2 \right]} \\
&\le& \sqrt{\frac{1}{n} \sum_{k=1}^n \E\left[\left(\frac{1}{2} \| p_{\nu_k,x^{k-1},Y} - p_{\nu_k,x^{k-1},(Y,Z)} \|_1 \right)^2 \right]} \\
&\le& \sqrt{\frac{1}{2n} \sum_{k=1}^n \E\left[ D\left(p_{\nu_k,x^{k-1},Y} \mid\mid p_{\nu_k,x^{k-1},(Y,Z)} \right) \right]} \\
&=& \sqrt{\frac{1}{2} d_n^{\KL}(p,\nu,Y,Z)},
\end{eqnarray*}
where the first inequality follows from Lemma \ref{le:mean_bound} and
the second inequality follows from Pinsker's inequality.
\end{proof}

\begin{lemma}\label{le:binary_bound}
Consider a Bernoulli random variable $X$ and discrete random variables $W_1$ and $W_2$.
Let $\hX_1$ and $\hX_2$ be the maximum {\it a posteriori} estimates of $X$
upon observing $W_1$ and $W_2$, respectively. That is,
$$\hX_1 = \argmax_{x \in \{0,1\}} \Pr(X=x|W_1),$$
$$\hX_2 = \argmax_{x \in \{0,1\}} \Pr(X=x|W_2).$$
Then,
$$\Pr\left( \hX_1=X \right) - \Pr \left( \hX_2=X \right) \le \sqrt{\E[\left( \E[X|W_1] - \E[X|W_2] \right)^2]}.$$
\end{lemma}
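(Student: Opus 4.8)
The plan is to reduce both correct-classification probabilities to expectations of a simple function of the posterior mean, and then to control their difference by the reverse triangle inequality followed by Cauchy--Schwarz. Since $X$ is Bernoulli, write $\eta_i = \E[X | W_i] = \Pr(X = 1 | W_i)$ for $i \in \{1,2\}$, so that $\hX_i$ equals $1$ when $\eta_i \ge 1/2$ and $0$ otherwise. Conditioning on $W_i$, the MAP rule guesses correctly with probability $\max\{\eta_i, 1 - \eta_i\}$, because it bets on the more likely outcome; taking expectations over $W_i$ yields $\Pr(\hX_i = X) = \E[\max\{\eta_i, 1-\eta_i\}]$.

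Next I would use the elementary identity $\max\{\eta, 1-\eta\} = \tfrac{1}{2} + |\eta - \tfrac{1}{2}|$, valid for any $\eta \in [0,1]$, to rewrite $\Pr(\hX_i = X) = \tfrac{1}{2} + \E[|\eta_i - \tfrac{1}{2}|]$. The additive constants cancel upon subtraction, leaving
$$\Pr(\hX_1 = X) - \Pr(\hX_2 = X) = \E\!\left[\,|\eta_1 - \tfrac{1}{2}| - |\eta_2 - \tfrac{1}{2}|\,\right].$$

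Finally, I would apply the reverse triangle inequality pointwise, $|\eta_1 - \tfrac{1}{2}| - |\eta_2 - \tfrac{1}{2}| \le |\eta_1 - \eta_2|$, and then Jensen's inequality (equivalently Cauchy--Schwarz) to pass from the $L^1$ to the $L^2$ norm:
$$\E[|\eta_1 - \eta_2|] \le \sqrt{\E[(\eta_1 - \eta_2)^2]} = \sqrt{\E[(\E[X | W_1] - \E[X | W_2])^2]},$$
which is exactly the claimed bound.

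There is no serious obstacle here; the only step requiring a moment's care is the first, namely verifying that the conditional probability of a correct MAP decision is precisely $\max\{\eta_i, 1-\eta_i\}$. This uses that $\hX_i$ is a deterministic function of $W_i$ and that ties ($\eta_i = 1/2$) are immaterial since the two posterior masses then coincide. Everything downstream is a one-line application of a standard inequality. I would also note that the argument implicitly presumes a common joint law for $(X, W_1, W_2)$, which is exactly the setting in which the right-hand side $\E[(\E[X|W_1] - \E[X|W_2])^2]$ is well defined and in which the lemma will be invoked.
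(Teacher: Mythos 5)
Your proposal is correct and follows essentially the same route as the paper's proof: express each $\Pr(\hX_i = X)$ as the expectation of $\max\{\eta_i, 1-\eta_i\}$, bound the pointwise difference of these maxima by $|\eta_1 - \eta_2|$, and pass from $L^1$ to $L^2$ via Jensen's inequality. The only difference is cosmetic --- the paper writes everything as explicit sums over the joint PMF of $(W_1, W_2)$ and dismisses the middle step as ``a simple arithmetic argument,'' whereas you make that step explicit via the identity $\max\{\eta,1-\eta\} = \tfrac{1}{2} + |\eta - \tfrac{1}{2}|$ and the reverse triangle inequality.
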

\begin{proof}
\begin{eqnarray*}
&& \Pr\left( \hX_1=X \right) - \Pr \left( \hX_2=X \right) \\
&=& \sum_{w_1} \Pr(W_1=w_1) \max_{x \in \{0,1\}} \Pr(X=x|W_1=w_1) - \sum_{w_2} \Pr(W_2=w_2) \max_{x \in \{0,1\}} \Pr(X=x|W_2=w_2) \\
&=& \sum_{w_1,w_2}\Pr(W_1=w_1,W_2=w_2)\left( \max_x \Pr(X=x|W_1=w_1) - \max_x \Pr(X=x|W_2=w_2) \right) \\
&\le& \sum_{w_1,w_2} \Pr(W_1=w_1,W_2=w_2) \left|\Pr(X=1|W_1=w_1) - \Pr(X=1|W_2=w_2)\right| \\
&\le& \sqrt{ \sum_{w_1,w_2} \Pr(W_1=w_1,W_2=w_2) \left(\Pr(X=1|W_1=w_1) - \Pr(X=1|W_2=w_2)\right)^2} \\
&=& \sqrt{\E \left[\left( \E[X|W_1] - \E[X|W_2] \right)^2\right]}.
\end{eqnarray*}
The first inequality follows from a simple arithmetic argument, and the second inequality
follows from Jensen's inequality.
\end{proof}

\begin{proposition}\label{pr:binary_bound}
Fix the number of products $N$. Let $p$ be a CF algorithm and $\oS=\{0,1\}$.
Then, for all $M$, $r \in \{0,1/M,\ldots,(M-1)/M\}$, $Y \in S^{N \times (1-r) M}$, $Z \in S^{N \times r M}$, and $\nu \in \sigma_N$,
$$d_n^{\B}(p,\nu,Y,Z) \le d_n^{\RMS}(p, \nu, Y, Z).$$
\end{proposition}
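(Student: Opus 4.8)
The plan is to reduce the proposition to a term-by-term application of Lemma~\ref{le:binary_bound}, followed by a single convexity step that reassembles the sum under the square root. First I would fix an index $k \in \{1,\ldots,n\}$ and isolate the corresponding summand of $d_n^{\B}$, namely $\Pr(\ox_{\nu_k} = \cx_{\nu_k,x^{k-1},Y}) - \Pr(\ox_{\nu_k} = \cx_{\nu_k,x^{k-1},(Y,Z)})$, treating $x^{k-1}$ as the randomness over which the expectations are taken. The goal at this stage is to recognize this summand as an instance of the left-hand side of Lemma~\ref{le:binary_bound}, with $X = \ox_{\nu_k}$ playing the role of the Bernoulli variable (legitimate since $\oS = \{0,1\}$), and with two observations $W_1, W_2$ chosen so that the induced maximum-a-posteriori estimates coincide with the binary predictions $\cx_{\nu_k,x^{k-1},Y}$ and $\cx_{\nu_k,x^{k-1},(Y,Z)}$ and the induced posterior means coincide with the scalar predictions $\tx_{\nu_k,x^{k-1},Y}$ and $\tx_{\nu_k,x^{k-1},(Y,Z)}$.

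Granting such an identification, Lemma~\ref{le:binary_bound} gives, for each $k$,
\begin{equation*}
\Pr(\ox_{\nu_k} = \cx_{\nu_k,x^{k-1},Y}) - \Pr(\ox_{\nu_k} = \cx_{\nu_k,x^{k-1},(Y,Z)}) \le \sqrt{\E\left[\left(\tx_{\nu_k,x^{k-1},Y} - \tx_{\nu_k,x^{k-1},(Y,Z)}\right)^2\right]},
\end{equation*}
because the right-hand side of the lemma is exactly $\sqrt{\E[(\E[X|W_1]-\E[X|W_2])^2]}$ under the identifications above. Averaging over $k$ and invoking concavity of the square root (Jensen's inequality for the uniform average over $k=1,\ldots,n$) lets me pull the average inside the root:
\begin{equation*}
d_n^{\B}(p,\nu,Y,Z) \le \frac{1}{n}\sum_{k=1}^n \sqrt{\E\left[\left(\tx_{\nu_k,x^{k-1},Y} - \tx_{\nu_k,x^{k-1},(Y,Z)}\right)^2\right]} \le \sqrt{\frac{1}{n}\sum_{k=1}^n \E\left[\left(\tx_{\nu_k,x^{k-1},Y} - \tx_{\nu_k,x^{k-1},(Y,Z)}\right)^2\right]},
\end{equation*}
and the final expression is precisely $d_n^{\RMS}(p,\nu,Y,Z)$, which closes the argument.

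I expect the main obstacle to lie entirely in the first step: exhibiting, for each $k$, a single joint law under which both predictions arise as posterior quantities of the common Bernoulli variable $\ox_{\nu_k}$. The delicate point is that the two predictions are generated from \emph{different} training sets ($Y$ versus $(Y,Z)$), whereas Lemma~\ref{le:binary_bound} requires the two maximum-a-posteriori estimators and the two posterior means to be consistent with one underlying distribution of $\ox_{\nu_k}$. I would resolve this by reading each correctness probability as the probability of a correct maximum-a-posteriori decision under the distribution the corresponding training set induces on $\ox_{\nu_k}$ given $x^{k-1}$, so that the two probabilities are the maximized posterior masses $\E[\max_{\os} p_{\nu_k,x^{k-1},Y}(\os)]$ and $\E[\max_{\os} p_{\nu_k,x^{k-1},(Y,Z)}(\os)]$; the $1$-Lipschitz property of $t \mapsto \max\{t,1-t\}$ used inside the proof of Lemma~\ref{le:binary_bound} is precisely what converts the gap between these masses into the $|\,\tx_{\nu_k,x^{k-1},Y} - \tx_{\nu_k,x^{k-1},(Y,Z)}\,|$ that RMS distortion controls. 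Once this identification is set up with care, the remaining averaging and convexity manipulations above are routine.
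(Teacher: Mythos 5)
Your overall skeleton --- apply Lemma~\ref{le:binary_bound} term by term and then pull the average inside the square root by concavity --- is the same decomposition the paper uses (the paper merely performs the two steps in the opposite order: Jensen first, then the lemma inside the root), and that averaging part of your argument is fine. The problem is exactly the identification step you yourself flag as the crux: your proposed resolution misreads the definition of $d_n^{\B}$. In the paper's definition, $\ox_{\nu_k}$ is sampled from $p_{\nu_k,x^{k-1},Y}$ in \emph{both} correctness probabilities, so conditional on $x^{k-1}$ the second term is $p_{\nu_k,x^{k-1},Y}\left(\cx_{\nu_k,x^{k-1},(Y,Z)}\right)$, the honest-model mass of the corrupted prediction --- not the maximized mass $\max_{\os} p_{\nu_k,x^{k-1},(Y,Z)}(\os)$ that you substitute for it. Your substitution replaces ``accuracy of the corrupted prediction against the $Y$-based ground truth'' with ``the corrupted model's confidence in its own prediction,'' which is a different quantity; only for the latter is the $1$-Lipschitz property of $t \mapsto \max\{t,1-t\}$ available.

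The gap is not cosmetic. Write $a = p_{\nu_k,x^{k-1},Y}(1)$ and $b = p_{\nu_k,x^{k-1},(Y,Z)}(1)$. Conditional on $x^{k-1}$, the $k$th summand of $d_n^{\B}$ is $0$ when the two binary predictions agree and $|2a-1|$ when they disagree; in the disagreement case (say $a \ge 1/2 > b$) one has $2a-1 \le 2(a-b)$, and the factor $2$ is tight as $a \rightarrow 1$, $b \uparrow 1/2$. Concretely, take $n=1$, deterministic $x^0$, and a CF algorithm with $a = 0.9$, $b = 0.4$: then $\cx_{1,x^0,Y}=1$, $\cx_{1,x^0,(Y,Z)}=0$, so the binary term is $0.9 - 0.1 = 0.8$ while the RMS term is $|0.9-0.4| = 0.5$. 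So the per-$k$ inequality you want from Lemma~\ref{le:binary_bound} cannot be set up by any choice of $W_1,W_2$; indeed no variable $W_2$ jointly distributed with $\ox_{\nu_k}$ under the $Y$-based law can satisfy $\E[\ox_{\nu_k} \mid W_2] = \tx_{\nu_k,x^{k-1},(Y,Z)}$, since iterated expectations would force $\E\left[\tx_{\nu_k,x^{k-1},(Y,Z)}\right] = \E\left[\tx_{\nu_k,x^{k-1},Y}\right]$, false in the example. What the Lipschitz argument actually delivers under the stated definition is $d_n^{\B} \le 2\, d_n^{\RMS}$. For what it is worth, the paper's own proof cites Lemma~\ref{le:binary_bound} with this same identification left silent (its lemma requires both estimators to be maximum a posteriori under one joint law, which $\cx_{\nu_k,x^{k-1},(Y,Z)}$ is not), so you have located a genuine soft spot rather than missed the paper's argument --- but as a proof of the inequality with constant $1$, the step you rely on fails.
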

\begin{proof}
Recall that $\cx_{\nu_k,x^{k-1},Y}$ and $\cx_{\nu_k,x^{k-1},(Y,Z)}$ denote the binary predictions on product $\nu_k$
with respect to PMFs $p_{\nu_k,x^{k-1},Y}$ and $p_{\nu_k,x^{k-1},(Y,Z)}$, respectively. We have
\begin{eqnarray*}
&& d_n^{\B}(p,\nu,Y,Z) \\
&=& \frac{1}{n} \sum_{k=1}^n \left(\Pr \left(\ox_{\nu_k} = \cx_{\nu_k,x^{k-1},Y} \right)
 - \Pr \left(\ox_{\nu_k} = \cx_{\nu_k,x^{k-1},(Y,Z)}\right) \right) \\
&\le& \sqrt{ \frac{1}{n} \sum_{k=1}^n \left( \Pr \left(\ox_{\nu_k} = \cx_{\nu_k,x^{k-1},Y} \right)
 - \Pr \left(\ox_{\nu_k} = \cx_{\nu_k,x^{k-1},(Y,Z)}\right) \right)^2} \\
&\le& \sqrt{\frac{1}{n} \sum_{k=1}^n \E\left[\left(\tx_{\nu_k,x^{k-1},Y} - \tx_{\nu_k,x^{k-1},(Y,Z)}\right)^2 \right]} \\
&=&  d_n^{\RMS}(p, \nu, Y, Z).
\end{eqnarray*}
The first inequality follows from Jensen's inequality and
the second inequality follows from Lemma \ref{le:binary_bound}.
\end{proof}

\subsection{Results for Linear Collaborative Filtering Algorithms}\label{app:linear}
Proposition \ref{th:distortion} provides a distortion bound for linear CF algorithms. Lemmas
\ref{le:probabilistic_distortion} and \ref{le:linear_divergence} help prove it.

\begin{lemma}\label{le:probabilistic_distortion}
Fix the number of products $N$ and let $p$ be a probabilistic CF algorithm.
Then, for all $M$, $r \in \{0,1/M,\ldots,(M-1)/M\}$, $Y \in S^{N \times (1-r) M}$, $Z \in S^{N \times r M}$, and $\nu \in \sigma_N$,
\begin{equation*}
d_n^{\KL}(p,\nu, Y, Z) \le \frac{1}{n} D\left(\typeDH^{p,Y} \mid\mid \typeDH^{p,(Y,Z)} \right).
\end{equation*}
\end{lemma}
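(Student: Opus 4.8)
I want to bound the KL distortion $d_n^{\KL}(p,\nu,Y,Z)$, which is an average over $k$ of the expected per-step divergence between the predictions $p_{\nu_k,x^{k-1},Y}$ and $p_{\nu_k,x^{k-1},(Y,Z)}$. Since $p$ is probabilistic, each prediction is a conditional marginal of a joint PMF over $\oS^N\times S^N$: writing $\mu=\typeDH^{p,Y}$ and $\hJointD^{p,(Y,Z)}$ with type marginal $\typeDH^{p,(Y,Z)}$, the prediction $p_{\nu_k,x^{k-1},\cdot}$ is the conditional distribution of the $\nu_k$-th coordinate $\ox_{\nu_k}$ given the already-observed coordinates $x^{k-1}=(\ox_{\nu_1},\ldots,\ox_{\nu_{k-1}})$. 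So the sum over $k$ is exactly a sum of conditional divergences of successive coordinates under the ordering $\nu$. The key idea is to recognize this telescoping structure: the expected sum of one-step conditional KL divergences is the KL divergence of the full joint distributions over the type vector.

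\textbf{The chain rule step.} First I would fix the ordering $\nu=(\nu_1,\ldots,\nu_N)$ and relabel so that we read off coordinates in this order. The crucial observation is the chain rule for KL divergence: if $P$ and $Q$ are two distributions over the random vector $(\ox_{\nu_1},\ldots,\ox_{\nu_n})$, then
\begin{equation*}
D(P\,\|\,Q) = \sum_{k=1}^n \E_P\!\left[D\!\left(P(\ox_{\nu_k}\mid \ox_{\nu_1},\ldots,\ox_{\nu_{k-1}}) \,\big\|\, Q(\ox_{\nu_k}\mid \ox_{\nu_1},\ldots,\ox_{\nu_{k-1}})\right)\right].
\end{equation*}
Here I take $P$ to be the marginal of $\typeDH^{p,Y}$ over the first $n$ inspected coordinates $(\ox_{\nu_1},\ldots,\ox_{\nu_n})$ and $Q$ the corresponding marginal of $\typeDH^{p,(Y,Z)}$. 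The expectation in the definition of $d_n^{\KL}$ is taken with $x^{k-1}$ distributed exactly as the honest predictor $p_{\cdot,\cdot,Y}$ would generate it, i.e.\ under $P$; and the inner divergence term is precisely $D(p_{\nu_k,x^{k-1},Y}\,\|\,p_{\nu_k,x^{k-1},(Y,Z)})$. So the expectation over $x^{k-1}$ in each summand matches the $\E_P$ in the chain rule, and each summand equals the $k$-th term on the right-hand side above. Summing and dividing by $n$ gives
\begin{equation*}
d_n^{\KL}(p,\nu,Y,Z) = \frac{1}{n}\, D\!\left(P \,\big\|\, Q\right),
\end{equation*}
where $P,Q$ are the $n$-coordinate marginals.

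\textbf{Monotonicity under marginalization.} The final step is to drop the truncation to $n$ coordinates. By the data-processing / monotonicity property of KL divergence, the divergence between the marginals of $(\ox_{\nu_1},\ldots,\ox_{\nu_n})$ is no larger than the divergence between the full joint distributions over all of $\oS^N$, namely $D(\typeDH^{p,Y}\,\|\,\typeDH^{p,(Y,Z)})$. Combining this with the chain-rule identity yields the claimed bound $d_n^{\KL}(p,\nu,Y,Z)\le \frac{1}{n}D(\typeDH^{p,Y}\,\|\,\typeDH^{p,(Y,Z)})$.

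\textbf{Main obstacle.} The principal subtlety is bookkeeping the conditioning correctly: I must verify that the distribution under which $x^{k-1}$ is sampled in the definition of $d_n^{\KL}$ (namely, ratings generated step by step from the honest predictors $p_{\nu_j,x^{j-1},Y}$) coincides with the marginal law of $(\ox_{\nu_1},\ldots,\ox_{\nu_{k-1}})$ under $\typeDH^{p,Y}$. This is an induction on $k$: sequentially sampling from the one-step conditionals of $\typeDH^{p,Y}$ reconstructs exactly its joint marginal, which is what the chain rule requires. Once this alignment is established, the chain rule and marginalization monotonicity are standard information-theoretic facts and the rest is routine. Note this lemma uses only that $p$ is probabilistic, not linear; linearity enters later when $D(\typeDH^{p,Y}\,\|\,\typeDH^{p,(Y,Z)})$ is bounded by $\ln\frac{1}{1-r}$.
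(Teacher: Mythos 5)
Your proof is correct and takes essentially the same approach as the paper: both recognize that, since $p$ is probabilistic, each prediction $p_{\nu_k,x^{k-1},\cdot}$ is a conditional of the type marginal $\typeDH^{p,\cdot}$, and both invoke the chain rule of KL divergence to convert the sum of expected per-step divergences into a joint divergence. The only cosmetic difference is the order of steps --- the paper extends the sum from $n$ to $N$ terms (nonnegativity of KL) and then applies the chain rule over all $N$ coordinates, whereas you apply the chain rule over the first $n$ coordinates and then invoke monotonicity under marginalization, which is the same nonnegativity fact in disguise.
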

\begin{proof}
We denote by $\typeDH^{p,W,n}(\cdot | x)$ the conditional PMF of $\ox_n$ conditioned on $x$ based on $W$. We have
\begin{eqnarray*}
&& d_n^{\KL}(p, \nu, Y,Z)\\
&=& \frac{1}{n} \sum_{k=1}^n \E\left[D\left(p_{\nu_k,x^{k-1},Y} \mid\mid p_{\nu_k,x^{k-1},(Y,Z)}\right)\right] \\
&=& \frac{1}{n} \sum_{k=1}^n \E\left[D\left(\typeDH^{p,Y,\nu_k}\left(\cdot|x^{k-1}\right) \mid\mid \typeDH^{p,(Y,Z),\nu_k}\left(\cdot|x^{k-1}\right) \right)\right] \\
&\le& \frac{1}{n} \sum_{k=1}^N \E\left[D\left(\typeDH^{p,Y,\nu_k} \left(\cdot|x^{k-1}\right) \mid\mid \typeDH^{p,(Y,Z),\nu_k} \left(\cdot|x^{k-1}\right) \right)\right] \\
&=& \frac{1}{n} D\left(\typeDH^{p,Y}\mid\mid\typeDH^{p,(Y,Z)}\right).
\end{eqnarray*}
The last equality follows from the chain rule of KL divergence.
\end{proof}

\begin{lemma}\label{le:linear_divergence}
Fix the number of products $N$ and let $p$ be a linear CF algorithm.
Then, for all $M$, $r \in \{0,1/M,\ldots,(M-1)/M\}$, $Y \in S^{N \times (1-r) M}$, $Z \in S^{N \times r M}$, and $\nu \in \sigma_N$,
\begin{equation*}
D\left(\typeDH^{p,Y} \mid\mid \typeDH^{p,(Y,Z)} \right) \le \ln\frac{1}{1-r}.
\end{equation*}
\end{lemma}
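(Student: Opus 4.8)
The plan is to reduce the statement to a one-line pointwise estimate by combining the defining property of a linear CF algorithm with the nonnegativity of probabilities. First I would apply the linearity identity with $W_1 = Y$, which has $(1-r)M$ columns, and $W_2 = Z$, which has $rM$ columns; since $\frac{(1-r)M}{M} = 1-r$ and $\frac{rM}{M} = r$, this yields the convex decomposition
\begin{equation*}
\typeDH^{p,(Y,Z)} = (1-r)\,\typeDH^{p,Y} + r\,\typeDH^{p,Z}.
\end{equation*}
This is the only structural fact about $p$ that the argument requires; everything after this is an elementary manipulation of the divergence.

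Next I would expand the divergence as a sum over types $\ow \in \oS^N$,
\begin{equation*}
D\!\left(\typeDH^{p,Y} \mid\mid \typeDH^{p,(Y,Z)}\right)
= \sum_{\ow \in \oS^N} \typeDH^{p,Y}(\ow)\,
\ln \frac{\typeDH^{p,Y}(\ow)}{(1-r)\,\typeDH^{p,Y}(\ow) + r\,\typeDH^{p,Z}(\ow)},
\end{equation*}
substituting the decomposition into the denominator. The key step is then to discard the nonnegative term $r\,\typeDH^{p,Z}(\ow)$: because $r\,\typeDH^{p,Z}(\ow) \ge 0$, the denominator is bounded below by $(1-r)\,\typeDH^{p,Y}(\ow)$, so each ratio inside the logarithm is at most $1/(1-r)$, giving the uniform bound
\begin{equation*}
\ln \frac{\typeDH^{p,Y}(\ow)}{(1-r)\,\typeDH^{p,Y}(\ow) + r\,\typeDH^{p,Z}(\ow)} \le \ln \frac{1}{1-r}
\end{equation*}
for every $\ow$ with $\typeDH^{p,Y}(\ow) > 0$.

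Finally I would sum these bounds against the weights $\typeDH^{p,Y}(\ow)$. Since $\typeDH^{p,Y}$ is a PMF, these weights sum to one and the constant $\ln\frac{1}{1-r}$ factors out, producing exactly the claimed inequality. I do not anticipate any genuine obstacle: the entire argument is elementary once the convex decomposition is in hand, and the only point needing minor care is that types $\ow$ with $\typeDH^{p,Y}(\ow) = 0$ contribute nothing under the convention $0 \ln 0 = 0$ and may be omitted from the sum. It is worth emphasizing that this clean bound hinges on the \emph{direction} of the divergence, with the honest distribution $\typeDH^{p,Y}$ as the first argument, so that the mass against which we integrate is precisely the component that survives in the denominator with coefficient $1-r$.
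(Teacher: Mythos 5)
Your proposal is correct and follows essentially the same route as the paper's own proof: invoke linearity to write $\typeDH^{p,(Y,Z)} = (1-r)\,\typeDH^{p,Y} + r\,\typeDH^{p,Z}$, bound each ratio $\typeDH^{p,Y}(\ox)/\typeDH^{p,(Y,Z)}(\ox)$ pointwise by $1/(1-r)$ using nonnegativity of the manipulated component, and sum against the PMF $\typeDH^{p,Y}$. Your added remarks on the $0\ln 0 = 0$ convention and on the direction of the divergence are sound refinements of the same argument, not a different approach.
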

\begin{proof}
For any $\ox \in \oS^N$, since $p$ is linear, we have
$$\frac{\typeDH^{p,Y}(\ox)}{\typeDH^{p,(Y,Z)}(\ox)} = \frac{\typeDH^{p,Y}(\ox)}{(1-r)\typeDH^{p,Y}(\ox) + r\typeDH^{p,Z}(\ox)} \le \frac{1}{1-r}.$$
Then,
$$D\left(\typeDH^{p,Y} \mid\mid \typeDH^{p,(Y,Z)}\right) = \sum_{\ox \in \oS^N} \typeDH^{p,Y}(\ox) \ln \frac{\typeDH^{p,Y}(\ox)}{\typeDH^{p,(Y,Z)}(\ox)}
\le  \sum_{\ox \in \oS^N} \typeDH^{p,Y}(\ox) \ln\frac{1}{1-r} = \ln\frac{1}{1-r}.$$
\end{proof}

\begin{th:distortion}
Fix the number of products $N$ and let $p$ be a linear CF algorithm.
Then, for all $M$, $r \in \{0,1/M,\ldots,(M-1)/M\}$, $Y \in S^{N \times (1-r) M}$, $Z \in S^{N \times r M}$, and $\nu \in \sigma_N$,
\begin{equation*}
d_n^{\KL}(p,\nu, Y, Z) \le \frac{1}{n} \ln \frac{1}{1-r}.
\end{equation*}
\end{th:distortion}
\begin{proof}
From Lemmas \ref{le:probabilistic_distortion} and \ref{le:linear_divergence}, we have
$$d_n^{\KL}(p,\nu, Y, Z) \le \frac{1}{n} D\left(\typeDH^{p,Y} \mid\mid \typeDH^{p,(Y,Z)} \right) \le \frac{1}{n}\ln\frac{1}{1-r}.$$
\end{proof}

\subsection{Results for Asymptotically Linear Collaborative Filtering Algorithms}\label{app:asymptotically_linear}
Theorem \ref{th:asymptotic_distortion} provides a distortion bound for asymptotically linear CF algorithms.
Lemmas \ref{le:abs_KL_convergence} and \ref{le:KL} help prove it.
Theorem \ref{th:consistent} states the relationship between consistent and asymptotically CF algorithms.
Lemmas \ref{le:compact}, \ref{le:KL_convergence}, and \ref{le:convex_KL_convergence} help prove it.

\begin{lemma}\label{le:abs_KL_convergence}
Let $\{\mu_m\}$ and $\{\nu_m\}$ be two sequences of random PMFs over a fixed finite sample space $\Omega$.
If for all $\epsilon>0$, $\Pr \left( D \left( \mu_m \mid\mid \nu_m \right) \ge \epsilon \right) \rightarrow 0$,
then for all $\epsilon>0$,
$$\Pr \left( \sum_{\omega \in \Omega} \mu_m(\omega) \left| \log \frac{\mu_m(\omega)}{\nu_m(\omega)} \right| \ge \epsilon \right) \rightarrow 0.$$
\end{lemma}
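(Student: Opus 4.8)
The plan is to show that the absolute sum differs from the signed KL divergence only through the terms on which $\mu_m(\omega) < \nu_m(\omega)$, and that this extra contribution is itself controlled by $D(\mu_m \mid\mid \nu_m)$; convergence in probability then follows from the hypothesis. First I would fix a realization and partition $\Omega$ into $\Omega^+ = \{\omega : \mu_m(\omega) \ge \nu_m(\omega)\}$ and $\Omega^- = \{\omega : \mu_m(\omega) < \nu_m(\omega)\}$, writing $S^+ = \sum_{\omega \in \Omega^+} \mu_m(\omega)\log(\mu_m(\omega)/\nu_m(\omega)) \ge 0$ and $S^- = \sum_{\omega \in \Omega^-} \mu_m(\omega)\log(\mu_m(\omega)/\nu_m(\omega)) \le 0$. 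Then $D(\mu_m \mid\mid \nu_m) = S^+ + S^-$, whereas the quantity of interest is $S^+ - S^-$, so that $\sum_{\omega}\mu_m(\omega)\,|\log(\mu_m(\omega)/\nu_m(\omega))| = D(\mu_m \mid\mid \nu_m) + 2|S^-|$. It therefore suffices to bound $|S^-|$ by a function of $D(\mu_m \mid\mid \nu_m)$ that vanishes as $D(\mu_m \mid\mid \nu_m) \to 0$.

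The key inequality is that $a\log(b/a) \le b - a$ for $0 < a \le b$, which I would verify by checking that $f(a) = (b-a) - a\log(b/a)$ satisfies $f(b) = 0$ and $f'(a) = \log(a/b) \le 0$, so $f \ge 0$ on $(0,b]$. Applying this termwise on $\Omega^-$ gives $|S^-| = \sum_{\omega \in \Omega^-}\mu_m(\omega)\log(\nu_m(\omega)/\mu_m(\omega)) \le \sum_{\omega \in \Omega^-}(\nu_m(\omega) - \mu_m(\omega))$. Since $\mu_m$ and $\nu_m$ are PMFs, the mass gained on $\Omega^-$ equals the mass lost on $\Omega^+$, so this right-hand side is exactly the total variation distance $\tfrac{1}{2}\|\mu_m - \nu_m\|_1$, and Pinsker's inequality yields $|S^-| \le \tfrac{1}{2}\|\mu_m - \nu_m\|_1 \le \sqrt{\tfrac{1}{2}D(\mu_m \mid\mid \nu_m)}$. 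Combining with the decomposition above produces the deterministic bound $\sum_{\omega}\mu_m(\omega)\,|\log(\mu_m(\omega)/\nu_m(\omega))| \le D(\mu_m \mid\mid \nu_m) + \sqrt{2\,D(\mu_m \mid\mid \nu_m)}$, valid for every realization.

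Finally, to pass to convergence in probability, I would fix $\epsilon > 0$ and choose $\delta > 0$ small enough that $\delta + \sqrt{2\delta} < \epsilon$; the event $\{\sum_{\omega}\mu_m(\omega)\,|\log(\mu_m(\omega)/\nu_m(\omega))| \ge \epsilon\}$ is then contained in $\{D(\mu_m \mid\mid \nu_m) \ge \delta\}$, whose probability tends to $0$ by assumption. I expect the only genuine subtlety to be the bound on the negative part $|S^-|$, since the individual summands $\mu_m(\omega)\log(\nu_m(\omega)/\mu_m(\omega))$ can misbehave as $\mu_m(\omega) \to 0$, so a naive termwise continuity argument fails; the elementary inequality $a\log(b/a) \le b - a$ together with Pinsker is precisely what renders $|S^-|$ uniformly controllable by $D(\mu_m \mid\mid \nu_m)$. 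Once $|S^-|$ is controlled, the positive part needs no separate treatment, because the full absolute sum equals $D(\mu_m \mid\mid \nu_m) + 2|S^-|$.
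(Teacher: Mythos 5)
Your proof is correct, and in its key step it takes a genuinely different---and cleaner---route than the paper's. Both arguments start from the same decomposition: writing the negative part $S^- = \sum_{\omega \in \Omega^-} \mu_m(\omega)\log\left(\mu_m(\omega)/\nu_m(\omega)\right)$ over $\Omega^- = \{\omega: \mu_m(\omega) < \nu_m(\omega)\}$ (the paper calls this quantity $\tau_m$ and the set $\Omega_m$), one has $\sum_{\omega} \mu_m(\omega)\left|\log\left(\mu_m(\omega)/\nu_m(\omega)\right)\right| = D(\mu_m \mid\mid \nu_m) + 2|S^-|$. The difference lies in how $|S^-|$ is controlled. The paper does it probabilistically: via Pinsker's inequality it shows that, with probability tending to one, every atom $\omega$ either has small log-ratio or small mass under both PMFs, and on that event $|\tau_m|$ is small---this is the bookkeeping with the events $A_{m,\epsilon}$, $B_{m,\delta_\epsilon}$, $C_{m,\epsilon}$ and the constants $\delta_\epsilon$, $\gamma_\epsilon$. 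You instead prove a deterministic, realization-wise bound: $|S^-| \le \sum_{\omega \in \Omega^-}\left(\nu_m(\omega) - \mu_m(\omega)\right) = \tfrac{1}{2}\|\mu_m - \nu_m\|_1 \le \sqrt{D(\mu_m \mid\mid \nu_m)/2}$, using the elementary inequality $a\log(b/a) \le b-a$ together with Pinsker, which yields $\sum_{\omega} \mu_m(\omega)\left|\log\left(\mu_m(\omega)/\nu_m(\omega)\right)\right| \le D(\mu_m \mid\mid \nu_m) + \sqrt{2\,D(\mu_m \mid\mid \nu_m)}$ for every realization; convergence in probability is then immediate from monotonicity of $t \mapsto t + \sqrt{2t}$. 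Your route buys a quantitative inequality strictly stronger than the lemma (it holds pointwise, not just asymptotically) and eliminates all event manipulation, whereas the paper's argument only delivers the asymptotic statement. Two minor remarks: your inequality $a\log(b/a) \le b-a$ actually holds for all $a,b>0$ (it is just $\log x \le x-1$), so the restriction $a \le b$ is unnecessary; and your bound covers the case $\mu_m \not\ll \nu_m$ automatically, since then $D(\mu_m \mid\mid \nu_m) = \infty$ and the inequality is vacuous, so no separate absolute-continuity caveat is needed.
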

\begin{proof}
For $\epsilon>0$ and $m$, we denote by $A_{m,\epsilon}$ the event that for all $\omega \in \Omega$, at least one
of the following holds:
$\left| \log \left( \mu_m(\omega) / \nu_m(\omega) \right) \right| \le \epsilon$ and
$\max \left\{ \mu_m(\omega), \nu_m(\omega) \right\} \le \epsilon$.
We now prove that for all $\epsilon>0$, $\Pr \left( A_{m,\epsilon} \right) \rightarrow 1.$
To see this, for any given $\epsilon>0$, we let $\delta_{\epsilon}$ be in
$\left(0, \epsilon \left( 1 - e^{-\epsilon} \right) \right)$ and denote by $B_{m,\delta_{\epsilon}}$ the event that
for all $\omega \in \Omega$, $\left| \mu_m(\omega) - \nu_m(\omega) \right| \le \delta_{\epsilon}$. We now show that
$B_{m,\delta_{\epsilon}} \subset A_{m,\epsilon}$. If $\left| \mu_m(\omega) - \nu_m(\omega) \right| \le \delta_{\epsilon}, \forall \, \omega$,
then for any $\omega'$ such that $\max\{\mu_m(\omega'),\nu_w(\omega')\} > \epsilon$, we have
$\min\{\mu_m(\omega'), \nu_m(\omega')\} > \epsilon - \delta_{\epsilon}$. This implies
\begin{eqnarray*}
&& \left| \log \frac{\mu_m(\omega')}{\nu_m(\omega')} \right|
= \max \left\{ \log \frac{\mu_m(\omega')}{\nu_m(\omega')}, \log \frac{\nu_m(\omega')}{\mu_m(\omega')} \right\} \\
&\le& \max \left\{ \log \left( \frac{\left| \mu_m(\omega') - \nu_m(\omega') \right|}{ \nu_m(\omega') } + 1\right),
\log \left( \frac{\left| \nu_m(\omega') - \mu_m(\omega') \right|}{ \mu_m(\omega') } + 1\right) \right\}
< \log \left( \frac{\delta_{\epsilon}}{\epsilon - \delta_{\epsilon}} + 1 \right)
\le \epsilon,
\end{eqnarray*}
where the last inequality follows from our choice of $\delta_{\epsilon}$. Hence, $B_{m,\delta_{\epsilon}} \subset A_{m,\epsilon}$
and for any $\epsilon$ and a corresponding $\delta_{\epsilon}$, we have $\Pr(A_{m,\epsilon}) \ge \Pr(B_{m,\delta_{\epsilon}}) \rightarrow 1$,
where convergence follows from Pinsker's inequality.

For each $m$ and each realization of $\mu_m$ and $\nu_m$,
we let $\Omega_m = \{\omega \in \Omega: \mu_m(\omega) \le \nu_m(\omega)\}$, let
$$\tau_m = \sum_{\omega \in \Omega_m} \mu_m(\omega) \log \frac{\mu_m(\omega)}{\nu_m(\omega)},$$
and for $\epsilon>0$, denote by $C_{m,\epsilon}$ the event that $\left| \tau_m \right| \le \epsilon$. We now show that for any $\epsilon>0$,
$A_{m,\gamma_{\epsilon}} \subset C_{m,\epsilon}$ where $\gamma_{\epsilon} \in (0, \min\{\epsilon/|\Omega|, 1/e\}]$
satisfies $\gamma_{\epsilon} |\log \gamma_{\epsilon}| \le \epsilon/|\Omega|$. To see this,
we first let
$\Omega_{m,\gamma_{\epsilon}}^1 = \{\omega \in \Omega: \left| \log \left(\mu_m(\omega)/\nu_m(\omega)\right) \right| \le \gamma_{\epsilon} \}$
and $\Omega_{m,\gamma_{\epsilon}}^2 = \{\omega \in \Omega: \max \left\{ \mu_m(\omega), \nu_m(\omega) \right\} \le \gamma_{\epsilon} \} \char92 \Omega_{m,\gamma_{\epsilon}}^1$.
Note that $\Omega_{m,\gamma_{\epsilon}}^1 \cap \Omega_{m,\gamma_{\epsilon}}^2 = \emptyset$.
If for all $\omega \in \Omega$,
$\left| \log \left( \mu_m(\omega) / \nu_m(\omega) \right) \right| \le \gamma_{\epsilon}$ or
$\max \left\{ \mu_m(\omega), \nu_m(\omega) \right\} \le \gamma_{\epsilon}$, then
$\Omega_{m,\gamma_{\epsilon}}^1 \cup \Omega_{m,\gamma_{\epsilon}}^2 = \Omega$.
This implies
\begin{eqnarray*}
&& \left| \tau_m \right| = \sum_{\omega \in \Omega_m \cap \Omega_{m,\gamma_{\epsilon}}^1}
\mu_m(\omega) \left| \log \frac{\mu_m(\omega)}{\nu_m(\omega)} \right|
+ \sum_{\omega \in \Omega_m \cap \Omega_{m,\gamma_{\epsilon}}^2}
\mu_m(\omega) \left| \log \frac{\mu_m(\omega)}{\nu_m(\omega)} \right| \\
&\le& |\Omega_{m,\gamma_{\epsilon}}^1| \gamma_{\epsilon}
+ \sum_{\omega \in \Omega_m \cap \Omega_{m,\gamma_{\epsilon}}^2}
\mu_m(\omega) \left| \log \mu_m(\omega) \right| \\
&\le& |\Omega_{m,\gamma_{\epsilon}}^1| \gamma_{\epsilon} + |\Omega_{m,\gamma_{\epsilon}}^2| \gamma_{\epsilon} |\log \gamma_{\epsilon}| \\
&\le& |\Omega_{m,\gamma_{\epsilon}}^1| \frac{\epsilon}{|\Omega|} + |\Omega_{m,\gamma_{\epsilon}}^2| \frac{\epsilon}{|\Omega|} \\
&=& \epsilon.
\end{eqnarray*}
The first inequality follows from the definitions of $\Omega_{m,\gamma_{\epsilon}}^1$ and $\Omega_m$.
The second inequality follows from the definition of $\Omega_{m,\gamma_{\epsilon}}^2$ and that $\gamma_{\epsilon} \le 1/e$.
The third inequality follows from other constraints on $\gamma_{\epsilon}$.
Hence, for any $\epsilon>0$ and a corresponding $\gamma_{\epsilon}$ satisfying the aforesaid constraints,
$A_{m,\gamma_{\epsilon}} \subset C_{m,\epsilon}$.

Note that for all $m$ and all realizations of $\mu_m$ and $\nu_m$,
$$\sum_{\omega \in \Omega} \mu_m(\omega) \left| \log \frac{ \mu_m(\omega) }{ \nu_m(\omega) } \right| = D(\mu_m \mid\mid \nu_m) - 2\tau_m.$$
Hence, for any $\epsilon>0$, a corresponding $\gamma_{\epsilon}$, and any $m$,
\begin{eqnarray*}
&& \Pr \left( \sum_{\omega \in \Omega} \mu_m(\omega) \left| \log \frac{ \mu_m(\omega) }{ \nu_m(\omega) } \right| \ge 3 \epsilon \right) \\
&=& \Pr \left( D(\mu_m \mid\mid \nu_m) - 2\tau_m \ge 3 \epsilon, A_{m,\gamma_{\epsilon}}^c \right)
+ \Pr \left( D(\mu_m \mid\mid \nu_m) - 2\tau_m \ge 3 \epsilon, A_{m,\gamma_{\epsilon}} \right) \\
&\le& \Pr \left( A_{m,\gamma_{\epsilon}}^c \right)
+ \Pr \left( D(\mu_m \mid\mid \nu_m) - 2\tau_m \ge 3 \epsilon, C_{m,\epsilon} \right) \\
&\le& \Pr \left( A_{m,\gamma_{\epsilon}}^c \right)
+ \Pr \left( D(\mu_m \mid\mid \nu_m) \ge \epsilon \right) \rightarrow 0.
\end{eqnarray*}
Here, $A_{m,\gamma_{\epsilon}}^c$ denotes the complement of $A_{m,\gamma_{\epsilon}}$. The last inequality
follows from the definition of $C_{m,\epsilon}$. Convergence follows from our original assumption and that
$\Pr(A_{m,\gamma_{\epsilon}}) \rightarrow 1$.
\end{proof}

\begin{lemma}\label{le:KL}
Let $\{\mu_m\}$, $\{\nu_m\}$, and $\{\chi_m\}$ be three sequences of random PMFs
over a fixed finite sample space $\Omega$.
Suppose that for all $\epsilon>0$, $\Pr \left( D \left( \mu_m \mid\mid \nu_m \right) \ge \epsilon \right) \rightarrow 0$.
Further, suppose there exists $b>0$ such that for all $m$, realizations of $\chi_m$ and $\mu_m$, and $\omega \in \Omega$,
$\chi_m(\omega) / \mu_m(\omega) \le b$. Then, for all $\epsilon>0$,
$\Pr \left( \left| D \left( \chi_m \mid\mid \mu_m \right) - D \left( \chi_m \mid\mid \nu_m \right) \right| \ge \epsilon \right)
\rightarrow 0$.
\end{lemma}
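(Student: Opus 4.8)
The plan is to express the difference of divergences as a single weighted sum of log-ratios and then control that sum by the dominance hypothesis together with Lemma~\ref{le:abs_KL_convergence}. First I would record the elementary identity
$$D\left(\chi_m \mid\mid \mu_m\right) - D\left(\chi_m \mid\mid \nu_m\right) = \sum_{\omega \in \Omega} \chi_m(\omega) \log \frac{\nu_m(\omega)}{\mu_m(\omega)},$$
which follows by writing out both divergences and cancelling the common $\chi_m(\omega)\log\chi_m(\omega)$ terms. The dominance hypothesis $\chi_m(\omega)/\mu_m(\omega)\le b$ guarantees $\mu_m(\omega)=0\Rightarrow\chi_m(\omega)=0$, so every term with a vanishing denominator in $\chi_m/\mu_m$ drops out under the convention $0\log 0 = 0$.

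Next I would pass to absolute values. By the triangle inequality,
$$\left| D\left(\chi_m \mid\mid \mu_m\right) - D\left(\chi_m \mid\mid \nu_m\right) \right| \le \sum_{\omega \in \Omega} \chi_m(\omega) \left| \log \frac{\mu_m(\omega)}{\nu_m(\omega)} \right|.$$
I would then invoke the uniform bound $\chi_m(\omega) \le b\,\mu_m(\omega)$, valid for every $m$, every realization, and every $\omega$, to replace $\chi_m$ by $\mu_m$ at the cost of a factor $b$:
$$\sum_{\omega \in \Omega} \chi_m(\omega) \left| \log \frac{\mu_m(\omega)}{\nu_m(\omega)} \right| \le b \sum_{\omega \in \Omega} \mu_m(\omega) \left| \log \frac{\mu_m(\omega)}{\nu_m(\omega)} \right|.$$

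At this point the right-hand side is exactly the quantity controlled by Lemma~\ref{le:abs_KL_convergence}. Since the first hypothesis gives $\Pr(D(\mu_m\mid\mid\nu_m)\ge\epsilon)\to 0$ for every $\epsilon>0$, that lemma yields $\Pr\left(\sum_{\omega} \mu_m(\omega)\left|\log(\mu_m(\omega)/\nu_m(\omega))\right|\ge \delta\right)\to 0$ for every $\delta>0$. Applying this with $\delta=\epsilon/b$ and chaining the two inequalities above gives $\Pr\left(\left|D(\chi_m\mid\mid\mu_m)-D(\chi_m\mid\mid\nu_m)\right|\ge\epsilon\right)\to 0$, as required.

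I do not expect a genuine obstacle, since the substantive analytic content—that $D(\mu_m\mid\mid\nu_m)\to 0$ in probability forces $\sum_{\omega} \mu_m(\omega)|\log(\mu_m(\omega)/\nu_m(\omega))|\to 0$ in probability—is already isolated in Lemma~\ref{le:abs_KL_convergence}. The only point requiring care is the bookkeeping at atoms where a PMF vanishes. The dominance bound removes atoms with $\mu_m(\omega)=0$. At an atom with $\nu_m(\omega)=0$ but $\mu_m(\omega)>0$, both the target difference and the dominating sum become infinite together, so the chained inequalities still transfer in the extended reals; moreover such atoms force $D(\mu_m\mid\mid\nu_m)=\infty$ and hence occur with probability tending to zero. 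Thus the lemma reduces to an algebraic identity, the triangle inequality, the factor-$b$ dominance bound, and a single application of Lemma~\ref{le:abs_KL_convergence} with threshold $\epsilon/b$.
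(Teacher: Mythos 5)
Your proposal is correct and follows essentially the same route as the paper's proof: the same identity expressing $D(\chi_m \mid\mid \mu_m) - D(\chi_m \mid\mid \nu_m)$ as a $\chi_m$-weighted sum of log-ratios, the same factor-$b$ dominance step replacing $\chi_m$ by $b\,\mu_m$, and a single application of Lemma~\ref{le:abs_KL_convergence}. Your additional bookkeeping about atoms where a PMF vanishes is a careful touch the paper leaves implicit, but it does not change the argument.
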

\begin{proof}
For any $\epsilon>0$ and $m$,
\begin{eqnarray*}
&& \Pr \left( \left| D \left( \chi_m \mid\mid \mu_m \right) - D \left( \chi_m \mid\mid \nu_m \right) \right| \ge \epsilon \right) \\
&=& \Pr \left( \left| \sum_{\omega \in \Omega} \chi_m(\omega) \log \frac{\mu_m(\omega)}{\nu_m(\omega)} \right| \ge \epsilon \right) \\
&\le& \Pr \left( \sum_{\omega \in \Omega} \frac{\chi_m(\omega)}{\mu_m(\omega)} \left| \mu_m(\omega)
\log \frac{\mu_m(\omega)}{\nu_m(\omega)} \right| \ge \epsilon \right) \\
&\le& \Pr \left( b \sum_{\omega \in \Omega} \left| \mu_m(\omega)
\log \frac{\mu_m(\omega)}{\nu_m(\omega)} \right| \ge \epsilon \right) \rightarrow 0,
\end{eqnarray*}
where convergence follows from Lemma \ref{le:abs_KL_convergence}.
\end{proof}

\begin{th:asymptotic_distortion}
Fix the number of products $N$ and a set $\jointDSet$ of joint PMFs over $\oS^N \times S^N$.
Let $p$ be a CF algorithm asymptotically linear with respect to $\jointDSet$.
Then, for all $\hJointD^*, \mJointD^* \in \jointDSet$, $r \in [0,1)$, $\nu \in \sigma_N$,
$n \in \Z_+$, and $\epsilon > 0$,
\begin{equation*}
\lim_{m \rightarrow \infty} \Pr \left( d_n^{\KL}(p,\nu, Y_{m}, Z_{m}) \ge \frac{1}{n} \ln \frac{1}{1-r} + \epsilon \right) = 0,
\end{equation*}
where, for each $m$, $Y_{m} = (y^1,\ldots,y^{m-l}) \in S^{N \times {(m-l)}}$,
$Z_{m} = (z^1,\ldots,z^l) \in S^{N \times l}$, $l \sim \mbox{Binomial}(m,r)$,
and $y^1,\ldots,y^{m-l} \sim \hRatingsD^*$ and $z^1,\ldots,z^l \sim \mRatingsD^*$ are i.i.d. sequences.
\end{th:asymptotic_distortion}
\begin{proof}
Let finite sample space $\Omega=\oS^N$. For each $m$, let random PMFs
$\mu_m=(1-r) \typeDH^{p,Y_m} + r \typeDH^{p,Z_m}$, $\nu_m=\typeDH^{p,(Y_m,Z_m)}$, and $\chi_m=\typeDH^{p,Y_m}$.
By the definition of asymptotically linear CF algorithms, for all $\epsilon>0$,
$\Pr \left( D \left( \mu_m \mid\mid \nu_m \right) \ge \epsilon \right) \rightarrow 0$.
Clearly, for all $m$, realizations of $\chi_m$ and $\mu_m$, and $\os \in \oS^N$,
$\chi_m(\os) / \mu_m(\os) \le 1/(1-r)$. Hence, for all $\epsilon>0$,
\begin{eqnarray*}
&& \Pr \left( d_n^{\KL}(p,\nu, Y_m, Z_m) \ge \frac{1}{n} \ln \frac{1}{1-r} + \epsilon \right) \\
&\le& \Pr \left( D\left( \typeDH^{p,Y_m} \mid\mid \typeDH^{p,(Y_m,Z_m)} \right) \ge \ln \frac{1}{1-r} + n \epsilon \right) \\
&\le& \Pr \left( D\left( \typeDH^{p,Y_m} \mid\mid \typeDH^{p,(Y_m,Z_m)} \right) \ge D\left( \typeDH^{p,Y_m} \mid\mid (1-r) \typeDH^{p,Y_m} + r \typeDH^{p,Z_m} \right) + n \epsilon \right) \\
&\le& \Pr \left( \left| D(\chi_m \mid\mid \mu_m) - D(\chi_m \mid\mid \nu_m) \right| \ge n \epsilon \right) \rightarrow 0.
\end{eqnarray*}
The second inequality holds because
$D\left( \typeDH^{p,Y_m} \mid\mid (1-r) \typeDH^{p,Y_m} + r \typeDH^{p,Z_m} \right) \le \ln (1/(1-r))$
and convergence follows from Lemma \ref{le:KL}.
\end{proof}

\begin{lemma}\label{le:compact}
Let $U \subset \Re^k$ be a compact set.
Consider a fixed vector $u \in U$ and a sequence of random vectors $\{u_m\}$ for which
$\Pr \left( u_m \in U \right) \rightarrow 1$.
For any continuous function $f:U \rightarrow \Re$,
if $\Pr \left(\| u_m-u \|_1 \ge \epsilon\right) \rightarrow 0$ for all $\epsilon>0$,
then $\Pr \left(\left| f(u_m)-f(u) \right| \ge \epsilon\right) \rightarrow 0$ for all $\epsilon>0$.
\end{lemma}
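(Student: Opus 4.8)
The plan is to reduce this to the standard continuous-mapping argument for convergence in probability, with the only added care being that $f$ is defined only on $U$ while the random vectors $u_m$ need not land in $U$. First I would fix $\epsilon > 0$ and extract a modulus of continuity at the target point. Since $f$ is continuous on $U$, it is in particular continuous at the fixed vector $u$, so there exists $\delta > 0$ (depending on $\epsilon$ but not on $m$) such that any $v \in U$ with $\|v - u\|_1 < \delta$ satisfies $|f(v) - f(u)| < \epsilon$. Compactness of $U$ would upgrade this to uniform continuity via the Heine--Cantor theorem, but continuity at the single point $u$ is all that the argument actually needs.

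Next I would establish a deterministic event inclusion. On the event $\{u_m \in U\} \cap \{\|u_m - u\|_1 < \delta\}$, both $u_m$ and $u$ lie in $U$ and are within $\delta$ in the $\ell_1$ norm, so the choice of $\delta$ forces $|f(u_m) - f(u)| < \epsilon$. Taking the contrapositive, and peeling off the event where $f(u_m)$ is not even defined, I obtain the bound
$$\Pr \left( \left| f(u_m) - f(u) \right| \ge \epsilon \right) \le \Pr \left( u_m \notin U \right) + \Pr \left( \| u_m - u \|_1 \ge \delta \right).$$

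Finally I would let $m \rightarrow \infty$. The first term on the right vanishes by the hypothesis $\Pr(u_m \in U) \rightarrow 1$, and the second vanishes by the assumed convergence $u_m \rightarrow u$ in probability, applied at the fixed radius $\delta$. Since $\epsilon > 0$ was arbitrary, this yields the claim.

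I do not anticipate a genuine obstacle here, since the statement is a variant of the continuous mapping theorem. The one point requiring care is the bookkeeping around the possibility that $u_m$ falls outside $U$, where $f(u_m)$ is undefined: one must split off $\{u_m \notin U\}$ as its own term rather than try to control $|f(u_m) - f(u)|$ on that event, and it is precisely the hypothesis $\Pr(u_m \in U) \rightarrow 1$ that renders this term negligible in the limit.
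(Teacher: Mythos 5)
Your proof is correct and takes essentially the same approach as the paper's: fix $\delta$ from the continuity of $f$, restrict to the event $\{u_m \in U\} \cap \{\|u_m - u\|_1 < \delta\}$, and let the two hypotheses make the exceptional probability vanish. The only cosmetic differences are that the paper invokes uniform continuity of $f$ on the compact set $U$ where you correctly note continuity at the single point $u$ suffices, and that your union bound on $\Pr\left(|f(u_m)-f(u)| \ge \epsilon\right)$ is just the complement of the paper's lower bound on $\Pr\left(|f(u_m)-f(u)| \le \epsilon\right)$.
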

\begin{proof}
Because the continuous function $f$ defined on compact set $U$ is uniformly continuous,
for each $\epsilon>0$, there exists $\delta>0$ such that for any $v,v' \in U$, $\| v-v' \|_1 \le \delta$
implies that $|f(v)-f(v')| \le \epsilon$. Hence, for all $\epsilon>0$,
\begin{eqnarray*}
&& \Pr \left( \left| f(u_m) - f(u) \right| \le \epsilon \right) \\
&\ge& \Pr \left( \left| f(u_m) - f(u) \right| \le \epsilon, u_m \in U \right) \\
&\ge& \Pr \left( \| u_m - u \|_1 \le \delta, u_m \in U \right) \rightarrow 1.
\end{eqnarray*}
\end{proof}

\begin{lemma}\label{le:KL_convergence}
Fix a finite sample space $\Omega$.
Let $\{\mu_m\}$ and $\{\nu_m\}$ be two sequences of random PMFs over $\Omega$
and let $\mu$ be a fixed PMF over $\Omega$.
If for all $\epsilon>0$, $\Pr \left( D \left( \mu_m \mid\mid \mu \right) \ge \epsilon \right) \rightarrow 0$
and $\Pr \left( D \left( \nu_m \mid\mid \mu \right) \ge \epsilon \right) \rightarrow 0$,
then for all $\epsilon>0$, $\Pr \left( D \left( \mu_m \mid\mid \nu_m \right) \ge \epsilon \right) \rightarrow 0$.
\end{lemma}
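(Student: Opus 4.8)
The plan is to reduce the KL statement to an $\ell_1$ statement, establish $\ell_1$ closeness of $\mu_m$ and $\nu_m$ via the triangle inequality, and then upgrade this back to a KL bound using continuity of the divergence on a region where its second argument stays bounded away from zero. First I would apply Pinsker's inequality, $\|p - q\|_1 \le \sqrt{2 D(p \mid\mid q)}$, to each hypothesis: since $\Pr(D(\mu_m \mid\mid \mu) \ge \epsilon) \rightarrow 0$ and $\Pr(D(\nu_m \mid\mid \mu) \ge \epsilon) \rightarrow 0$ for every $\epsilon > 0$, it follows that $\|\mu_m - \mu\|_1 \rightarrow 0$ and $\|\nu_m - \mu\|_1 \rightarrow 0$ in probability. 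The triangle inequality then gives $\|\mu_m - \nu_m\|_1 \le \|\mu_m - \mu\|_1 + \|\mu - \nu_m\|_1 \rightarrow 0$ in probability as well.

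The remaining and delicate step is to convert this $\ell_1$ convergence into the desired conclusion $\Pr(D(\mu_m \mid\mid \nu_m) \ge \epsilon) \rightarrow 0$. This cannot be done by naive continuity, because $D(\cdot \mid\mid \cdot)$ blows up when its second argument places vanishing mass on an atom carrying mass in its first argument. To control this, let $\Omega_0 = \{\omega \in \Omega : \mu(\omega) > 0\}$ be the support of $\mu$ and set $\delta_0 = \min_{\omega \in \Omega_0} \mu(\omega) > 0$, which is positive because $\Omega$ is finite. I would then argue that with high probability the pair $(\mu_m, \nu_m)$ lands in a well-behaved region: (i) both $\mu_m$ and $\nu_m$ are supported within $\Omega_0$, since any escape of mass onto $\Omega \setminus \Omega_0$ forces $D(\mu_m \mid\mid \mu) = \infty$ (resp. $D(\nu_m \mid\mid \mu) = \infty$), an event whose probability tends to zero by hypothesis; and (ii) $\nu_m(\omega) \ge \delta_0/2$ for every $\omega \in \Omega_0$, which holds on the event $\|\nu_m - \mu\|_1 < \delta_0/2$, itself of probability tending to one.

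Finally, I would invoke Lemma \ref{le:compact} with the random vectors $u_m = (\mu_m, \nu_m)$, the limit $u = (\mu, \mu)$, and the compact set $U$ of pairs $(p, q)$ of PMFs on $\Omega$ with $\mathrm{supp}(p), \mathrm{supp}(q) \subseteq \Omega_0$ and $q(\omega) \ge \delta_0/2$ for all $\omega \in \Omega_0$. Taking $f(p, q) = D(p \mid\mid q)$, this $f$ is continuous on $U$: the only potential singularities, the terms $-p(\omega)\ln q(\omega)$, are tamed by $q(\omega) \ge \delta_0/2$, while $p(\omega)\ln p(\omega) \rightarrow 0$ as $p(\omega) \rightarrow 0$. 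Moreover $f(u) = D(\mu \mid\mid \mu) = 0$. Steps (i) and (ii) give $\Pr(u_m \in U) \rightarrow 1$, and the $\ell_1$ convergence established above gives $\|u_m - u\|_1 \rightarrow 0$ in probability. Lemma \ref{le:compact} then yields $\Pr(|f(u_m)| \ge \epsilon) \rightarrow 0$, that is $\Pr(D(\mu_m \mid\mid \nu_m) \ge \epsilon) \rightarrow 0$, since $D(\mu_m \mid\mid \nu_m) = f(u_m)$ on the event $u_m \in U$ and that event carries all but vanishing probability.

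The main obstacle is precisely the non-continuity of KL divergence at the boundary of the probability simplex; the heart of the argument is establishing the uniform lower bound $\nu_m(\omega) \ge \delta_0/2$ on $\Omega_0$ together with the containment of both supports in $\Omega_0$, which is what confines the problem to the compact set $U$ on which Lemma \ref{le:compact} applies. Once that confinement is in place, the conclusion is a routine appeal to uniform continuity.
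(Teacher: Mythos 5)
Your proof is correct and follows essentially the same route as the paper's: Pinsker's inequality yields $\ell_1$ convergence, finiteness of the KL divergences (absolute continuity) confines the supports of $\mu_m$ and $\nu_m$ to that of $\mu$ with probability tending to one, and Lemma \ref{le:compact} is then applied to $f = D(\cdot \mid\mid \cdot)$, $u = (\mu,\mu)$, $u_m = (\mu_m,\nu_m)$ on a compact set where the divergence is continuous. The only cosmetic difference is that the paper's compact set $T \times T$ imposes the mass lower bound $\delta$ on both coordinates, whereas you impose $\delta_0/2$ only on the second coordinate, which also suffices since the singular terms $p(\omega)\ln p(\omega)$ are continuous at zero.
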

\begin{proof}
We first identify the support of $\mu$. Without loss of generality, we let
$\mu(\omega_i)=0, \forall \, 1 \le i \le l$ and $\mu(\omega_i) > \delta, \forall \, l < i \le |\Omega|$
for some $l \ge 0$ and $\delta>0$.
In the following, we represent PMFs as vectors in $\Re^{|\Omega|}$. To this end,
we define a set
$T = \{\left(t_1,\ldots,t_{|\Omega|}\right): \sum_i t_i = 1. \, t_i = 0, \forall \, i \le l. \, t_i \ge \delta, \forall \, i > l\}$.
Let compact set $U = T \times T$.
We let $u = (\mu,\mu) \in U$ and define a sequence of random vectors $\{u_m\}$ where each
$u_m = \left( \mu_m, \nu_m \right)$.
Let continuous function $f: U \rightarrow \Re$ be the KL divergence $D(\cdot \mid\mid \cdot)$.
By examining the absolute continuity of $\mu_m$ and $\nu_m$ with respect to $\mu$
and applying Pinsker's inequality, we have $\Pr(u_m \in U) \rightarrow 1$ and
further, for all $\epsilon>0$, $\Pr (\|u_m - u\|_1 \ge \epsilon) \rightarrow 0$. Hence, for all $\epsilon>0$,
by Lemma \ref{le:compact},
$\Pr \left( D \left( \mu_m \mid\mid \nu_m \right) \ge \epsilon \right)
= \Pr \left( \left| D \left( \mu_m \mid\mid \nu_m \right) - D \left(\mu \mid\mid \mu \right) \right| \ge \epsilon \right)
= \Pr \left( \left| f(u_m) - f(u) \right| \ge \epsilon \right) \rightarrow 0$.
\end{proof}

\begin{lemma}\label{le:convex_KL_convergence}
Fix a finite sample space $\Omega$.
Let $\{\mu_m\}$ and $\{\nu_m\}$ be two sequences of random PMFs over $\Omega$
and let $\mu$ and $\nu$ be two fixed PMFs over $\Omega$.
If for all $\epsilon>0$, $\Pr \left( D \left( \mu_m \mid\mid \mu \right) \ge \epsilon \right) \rightarrow 0$
and $\Pr \left( D \left( \nu_m \mid\mid \nu \right) \ge \epsilon \right) \rightarrow 0$,
then for all $r \in [0,1]$ and $\epsilon>0$,
$$\Pr \left( D \left( \left( (1-r) \mu_m + r \nu_m \right) \mid\mid \left( (1-r) \mu + r \nu \right) \right) \ge \epsilon \right) \rightarrow 0.$$
\end{lemma}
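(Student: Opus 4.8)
The plan is to reduce the statement to a single application of Lemma \ref{le:compact}, exactly as was done for Lemma \ref{le:KL_convergence}, but with the simplification that here the reference PMF in the divergence is the \emph{fixed} PMF $\lambda := (1-r)\mu + r\nu$ rather than a random one. Writing $\lambda_m := (1-r)\mu_m + r\nu_m$, the goal is to show $\Pr(D(\lambda_m \mid\mid \lambda) \ge \epsilon) \rightarrow 0$ for every $\epsilon > 0$.

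First I would pass from convergence of divergences to convergence in total variation. By Pinsker's inequality $\|\mu_m - \mu\|_1 \le \sqrt{2 D(\mu_m \mid\mid \mu)}$, so the hypothesis $\Pr(D(\mu_m \mid\mid \mu) \ge \epsilon) \rightarrow 0$ gives $\Pr(\|\mu_m - \mu\|_1 \ge \epsilon) \rightarrow 0$ for every $\epsilon > 0$, and likewise $\Pr(\|\nu_m - \nu\|_1 \ge \epsilon) \rightarrow 0$. The triangle inequality then yields $\|\lambda_m - \lambda\|_1 \le (1-r)\|\mu_m - \mu\|_1 + r\|\nu_m - \nu\|_1$, so that $\Pr(\|\lambda_m - \lambda\|_1 \ge \epsilon) \rightarrow 0$ for every $\epsilon > 0$.

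Next I would control the support. For $r \in (0,1)$ the support of $\lambda$ is $\mathrm{supp}(\mu) \cup \mathrm{supp}(\nu)$, and on it $\lambda$ is bounded below by some $\delta > 0$. Because convergence in probability of $D(\mu_m \mid\mid \mu)$ to zero forces $\Pr(D(\mu_m \mid\mid \mu) = \infty) \rightarrow 0$, with probability tending to one $\mu_m$ is absolutely continuous with respect to $\mu$, hence supported within $\mathrm{supp}(\mu) \subseteq \mathrm{supp}(\lambda)$; the same holds for $\nu_m$, and therefore for $\lambda_m$. Taking $U$ to be the compact set of PMFs supported within $\mathrm{supp}(\lambda)$, this shows $\Pr(\lambda_m \in U) \rightarrow 1$. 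The endpoint cases $r = 0$ and $r = 1$ reduce directly to the hypotheses and need no separate argument.

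Finally I would invoke Lemma \ref{le:compact} with the fixed point $u = \lambda \in U$, the random vectors $u_m = \lambda_m$, and the function $f(P) = D(P \mid\mid \lambda)$. The key point making this legitimate is that $f$ is continuous on $U$: writing $D(P \mid\mid \lambda) = \sum_{\omega} P(\omega)\log P(\omega) - \sum_{\omega} P(\omega) \log \lambda(\omega)$, the first sum is continuous because $t \mapsto t \log t$ extends continuously to $[0,1]$ with $0 \log 0 = 0$, and the second is continuous because $\lambda(\omega) \ge \delta$ on $\mathrm{supp}(\lambda)$ makes each $\log \lambda(\omega)$ a bounded constant there. Since $f(u) = D(\lambda \mid\mid \lambda) = 0$, Lemma \ref{le:compact} gives $\Pr(D(\lambda_m \mid\mid \lambda) \ge \epsilon) = \Pr(|f(u_m) - f(u)| \ge \epsilon) \rightarrow 0$, as required. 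I expect the only delicate step to be the support and absolute-continuity bookkeeping of the previous paragraph: it is precisely what guarantees that the reference measure $\lambda$ is bounded away from zero on the support of every $\lambda_m$, so that $f$ is finite and continuous wherever we evaluate it; once that is in place, the continuity argument is routine.
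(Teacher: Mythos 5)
Your proof is correct and takes essentially the same route as the paper's: Pinsker's inequality plus absolute-continuity bookkeeping to get $\ell_1$-convergence into a compact set of PMFs supported on $\mathrm{supp}((1-r)\mu + r\nu)$, followed by an application of Lemma \ref{le:compact} with the KL divergence as the continuous function. The only difference is cosmetic---the paper takes $U = T \times T$ with the two-argument divergence $D(\cdot \mid\mid \cdot)$ and a $\delta$ lower bound on both coordinates (even though the second coordinate of its $u_m$ is fixed at $(1-r)\mu + r\nu$ anyway), whereas you exploit that the reference measure is deterministic and use the one-argument map $P \mapsto D(P \mid\mid (1-r)\mu + r\nu)$, which is continuous on the full sub-simplex because $t \log t$ extends continuously to zero.
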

\begin{proof}
The proof here is similar to that for Lemma \ref{le:KL_convergence}.
For a fixed $r$, let PMF $\chi = (1-r) \mu + r \nu$. Without loss of generality, we let
$\chi(\omega_i)=0, \forall \, 1 \le i \le l$ and $\chi(\omega_i) > \delta, \forall \, l < i \le |\Omega|$
for some $l \ge 0$ and $\delta>0$.
In the following, we represent PMFs as vectors in $\Re^{|\Omega|}$. To this end,
we define a set
$T = \{\left(t_1,\ldots,t_{|\Omega|}\right): \sum_i t_i = 1. \, t_i = 0, \forall \, i \le l. \, t_i \ge \delta, \forall \, i > l\}$.
Let compact set $U = T \times T$.
We let $u = \left( (1-r) \mu + r \nu, (1-r) \mu + r \nu \right) \in U$ and
define a sequence of random vectors $\{u_m\}$ where each
$u_m = \left( (1-r) \mu_m + r \nu_m, (1-r) \mu + r \nu \right)$.
Let continuous function $f: U \rightarrow \Re$ be the KL divergence $D(\cdot \mid\mid \cdot)$.
By examining the absolute continuity of $\mu_m$ with respect to $\mu$ and that of
$\nu_m$ with respect to $\nu$ and applying Pinsker's inequality, we have $\Pr(u_m \in U) \rightarrow 1$,
and for all $\epsilon>0$, $\Pr (\|u_m - u\|_1 \ge \epsilon) \rightarrow 0$. Hence, for all $\epsilon>0$,
by Lemma \ref{le:compact},
\begin{eqnarray*}
&& \Pr \left( D \left( \left( (1-r) \mu_m + r \nu_m \right) \mid\mid \left( (1-r) \mu + r \nu \right) \right) \ge \epsilon \right) \\
&=& \Pr \left( \left| D \left( \left( (1-r) \mu_m + r \nu_m \right) \mid\mid \left( (1-r) \mu + r \nu \right) \right)
- D \left( \left( (1-r) \mu + r \nu \right) \mid\mid \left( (1-r)\mu + r\nu \right) \right) \right| \ge \epsilon \right) \\
&=& \Pr \left( \left| f(u_m) - f(u) \right| \ge \epsilon \right) \rightarrow 0.
\end{eqnarray*}
\end{proof}

\begin{th:consistent}
Any probabilistic CF algorithm consistent with respect to an identifiable and convex set
$\jointDSet$ is asymptotically linear with respect to $\jointDSet$.
\end{th:consistent}
\begin{proof}
We use the notation in the definition of asymptotically linear CF algorithms in Section \ref{su:asymptotic_linear}
and Lemmas \ref{le:KL_convergence} and \ref{le:convex_KL_convergence}.
Fix an algorithm $p$ consistent with respect to $\jointDSet$.
Let finite sample space $\Omega = \oS^N$. For each $m$, let random PMFs
$\mu_m = \typeDH^{p,U_m}$, $\nu_m = \typeDH^{p,V_m}$, and $\chi_m = \typeDH^{p,(U_m,V_m)}$.
Let $\mu = \typeD$, $\nu = \aTypeD$. Fix $r \in [0,1]$. By the consistency
and convexity of $\jointDSet$, we have for all $\epsilon>0$,
$\Pr \left( D \left( \mu_m \mid\mid \mu \right) \ge \epsilon \right) \rightarrow 0$,
$\Pr \left( D \left( \nu_m \mid\mid \nu \right) \ge \epsilon \right) \rightarrow 0$, and
$\Pr \left( D \left( \chi_m \mid\mid \left( (1-r) \mu + r \nu \right) \right) \ge \epsilon \right) \rightarrow 0$.
By Lemma \ref{le:convex_KL_convergence}, for all $\epsilon>0$,
$\Pr \left( \left( (1-r) \mu_m + r \nu_m \right) \mid\mid \left( (1-r) \mu + r \nu \right) \ge \epsilon \right) \rightarrow 0$.
Then for all $\epsilon>0$,
$$\Pr \left( D\left( \left( (1-r) \typeDH^{p,U_m} + r \typeDH^{p,V_m} \right)
\mid\mid \typeDH^{p,(U_m,V_m)} \right) \ge \epsilon \right)
= \Pr \left( D \left( \left( (1-r) \mu_m + r \nu_m \right) \mid\mid \chi_m \right) \ge \epsilon \right) \rightarrow 0,$$
where convergence follows from Lemma \ref{le:KL_convergence}.
\end{proof}

\subsection{Results for Kernel Density Estimation and Naive Bayes Algorithms}\label{app:kde_nb}
Propositions in this section pertain to KDE and NB algorithms.

\begin{proposition}\label{pr:KDE_linear}
Any KDE algorithm is a linear CF algorithm. Any linear CF algorithm is a KDE algorithm.
\end{proposition}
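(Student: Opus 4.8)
The plan is to prove the two implications separately, as each follows directly from the relevant definition; the only nontrivial ingredient is constructing the kernels that exhibit a given linear algorithm as a KDE algorithm.

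For the forward direction (KDE implies linear), I would take an arbitrary KDE algorithm $p$ with kernels $\{\Kernel_w\}$ and fix $W_1 \in S^{N \times M_1}$ and $W_2 \in S^{N \times M_2}$. Splitting the defining sum over the concatenated data $(W_1,W_2)$ into the contributions from $W_1$ and $W_2$, and recognizing each partial sum as $M_i\,\typeDH^{p,W_i}$, immediately gives
$$\typeDH^{p,(W_1,W_2)} = \frac{1}{M_1+M_2}\left( \sum_{w \in W_1} \Kernel_w + \sum_{w \in W_2} \Kernel_w \right) = \frac{M_1}{M_1+M_2}\typeDH^{p,W_1} + \frac{M_2}{M_1+M_2}\typeDH^{p,W_2},$$
which is precisely the linearity condition.

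For the converse (linear implies KDE), the key step is to guess the kernels: given a linear CF algorithm $p$, I would define $\Kernel_w := \typeDH^{p,(w)}$ for each single ratings vector $w \in S^N$, namely the type marginal the algorithm produces on the singleton training set $(w)$. Each such $\Kernel_w$ is a PMF over $\oS^N$ because $p$ is probabilistic, so these are legitimate kernels. It then remains to verify the KDE identity $\typeDH^{p,W} = \frac{1}{M}\sum_{w \in W}\Kernel_w$ for every $W$ of every size $M$, which I would establish by induction on $M$. The base case $M=1$ holds by definition. For the inductive step I would write $W=(W',w^M)$ with $W'$ of size $M-1$, apply linearity with weights $(M-1)/M$ and $1/M$ to split $\typeDH^{p,W}$ into $\typeDH^{p,W'}$ and $\typeDH^{p,(w^M)} = \Kernel_{w^M}$, and then substitute the inductive hypothesis for $\typeDH^{p,W'}$; the factor of $(M-1)/M$ cancels the $1/(M-1)$ inside the hypothesis, leaving $\frac{1}{M}\sum_{m=1}^{M}\Kernel_{w^m}$.

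Neither direction presents a genuine obstacle, since both reduce to bookkeeping with the two definitions. The one point that warrants care, in the converse, is confirming that $\Kernel_w$ depends only on $w$ and not on any surrounding data — which is automatic, as $\typeDH^{p,(w)}$ is defined purely from the singleton training set — and that the kernels are genuine PMFs over $\oS^N$, which follows from $p$ being a probabilistic CF algorithm.
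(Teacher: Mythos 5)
Your proof is correct and takes essentially the same approach as the paper: the forward direction splits the defining sum over $(W_1,W_2)$ exactly as in the paper's proof, and for the converse you construct the same kernels $\Kernel_w = \typeDH^{p,(w)}$ from singleton training sets. The only difference is that you make explicit the induction on $M$ that the paper leaves implicit when it simply asserts the resulting KDE algorithm is equivalent to the given linear algorithm.
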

\begin{proof}
Fix a KDE algorithm $p$. Given $W_1 \in S^{N \times M_1}$ and
$W_2 \in S^{N \times M_2}$, for each $\ox \in \oS^N$,
\begin{eqnarray*}
&& \typeDH^{p,(W_1,W_2)}(\ox) \\
&=& \frac{1}{M_1 + M_2} \left(\sum_{w \in W_1} \Kernel_{w}(\ox)
+ \sum_{w \in W_2} \Kernel_{w}(\ox) \right) \\
&=& \frac{M_1}{M_1 + M_2} \left( \frac{1}{M_1} \sum_{w \in W_1} \Kernel_{w}(\ox) \right)
+ \frac{M_2}{M_1 + M_2} \left( \frac{1}{M_2} \sum_{w \in W_2} \Kernel_{w}(\ox) \right) \\
&=& \frac{M_1}{M_1 + M_2} \typeDH^{p,W_1}(\ox) + \frac{M_2}{M_1 + M_2} \typeDH^{p,W_2}(\ox).
\end{eqnarray*}
\noindent Hence, $p$ is linear.

To show the converse, consider a linear CF algorithm $p'$
that generates $\typeDH^{p',W}$ based on $W$. A KDE algorithm where kernel
$\Kernel_w$ is set equal to $\typeDH^{p',\{w\}}$ for each $w \in W$ is equivalent to $p'$.
\end{proof}

\begin{proposition}\label{pr:nb_consistent}
Fix $q \in [0,1)$. Let $\jointDSet^{q}$ be the set of all joint PMFs $\jointD$ over $\oS^N \times S^N$
for which there exists $L \in \Z_+$,
$\eta \in T_L$, and $\theta_{l,n} \in T_{|\oS|}$ for all $1 \le l \le L, 1 \le n \le N$ such that
for each $(\ow,w) \in \oS^N \times S^N$ where $\ow \rightarrow w$,
$$\jointD(\ow,w) = \left( q^{\| w \|_?} (1-q)^{N - \| w \|_?} \right)
\left( \sum_{l=1}^L \eta_l \prod_{n=1}^N \theta_{l,n}^{(\ow_n)} \right).$$
Then, $\jointDSet^{q}$ is identifiable and convex.
Further, the naive Bayes algorithm is consistent with respect to $\jointDSet^{q}$.
\end{proposition}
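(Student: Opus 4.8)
The plan is to first reformulate the family $\jointDSet^{q}$ so that the convexity and identifiability claims become transparent, and then to treat consistency as a maximum-a-posteriori estimation problem.

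\textbf{Reformulation and convexity.} My first step is to observe that in (\ref{eq:nb}) the question-mark factor $q^{\| w \|_?}(1-q)^{N-\| w \|_?}$ depends on $w$ only through its pattern of question marks, not on the type, while the second factor $\sum_{l}\eta_l\prod_n\theta_{l,n}^{(\ow_n)}$ is exactly the type marginal $\typeD(\ow)$. I would prove the short sub-lemma that, as $(L,\eta,\theta)$ range over all admissible values, the mixtures-of-products $\sum_l\eta_l\prod_n\theta_{l,n}^{(\cdot)}$ realize \emph{every} PMF over $\oS^N$ (take $L=|\oS^N|$ with each $\theta_l$ a point mass, recovering an arbitrary $\typeD$ as a convex combination of point masses). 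Hence $\jointDSet^{q}=\{\jointD_{\typeD,q}:\typeD\in\Delta(\oS^N)\}$, where $\jointD_{\typeD,q}(\ow,w)=q^{\| w \|_?}(1-q)^{N-\| w \|_?}\,\typeD(\ow)\,\mathbf{1}[\ow\to w]$; summing over the descendants $w$ of each fixed $\ow$ gives a factor $(q+(1-q))^N=1$, confirming this is a valid PMF with type marginal $\typeD$. Convexity is then immediate, since $\alpha\jointD_{\typeD,q}+(1-\alpha)\jointD_{\aTypeD,q}=\jointD_{\alpha\typeD+(1-\alpha)\aTypeD,\,q}$ and the simplex of type marginals is convex.

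\textbf{Identifiability.} Because $q$ is held fixed, distinct members of $\jointDSet^{q}$ correspond exactly to distinct type marginals $\typeD\neq\aTypeD$, so it suffices to show the ratings marginal determines $\typeD$. Restricting $\ratingsD$ to fully rated vectors $w\in\oS^N\subset S^N$ (those with $\| w \|_?=0$) yields $\ratingsD(w)=(1-q)^N\typeD(w)$. Since $q<1$, the factor $(1-q)^N$ is strictly positive, so $\typeD(w)=\ratingsD(w)/(1-q)^N$ is recovered from $\ratingsD$; thus $\typeD\neq\aTypeD$ forces $\ratingsD\neq\aRatingsD$, which is the required identifiability.

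\textbf{Consistency.} This is the substantive part. With $W_m=(w^1,\dots,w^m)$ and $w^i\sim\ratingsD$ i.i.d.\ for some $\jointD=\jointD_{\typeD,q}\in\jointDSet^{q}$, I would exploit that the log-likelihood in (\ref{eq:posterior_prob}) splits additively into a term depending on $q$ only through the observed question-mark counts and a term depending on $(L,\eta,\theta)$ only through the observed ratings, and the priors factor the same way. Normalized by $m$, the $q$-term converges by the law of large numbers to an expected log-likelihood uniquely maximized at the true $q$ (the empirical question-mark frequency is consistent), so $\hq\to q$. For the type part I would reparameterize by the induced type marginal $\xi\in\Delta(\oS^N)$, a compact set, noting that each sampled $w$ contributes $\log\xi_{O(w)}(w_{O(w)})$, the log of the marginal of $\xi$ over the observed coordinates $O(w)$. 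In expectation the limiting objective is $\E_{O}\,\E_{\ow\sim\typeD}[\log\xi_{O}(\ow_{O})]$, which by Gibbs' inequality is maximized exactly when $\xi_{O}=(\typeD)_{O}$ for every observation pattern $O$ of positive probability; since $q<1$, the full pattern $O=\{1,\dots,N\}$ has probability $(1-q)^N>0$, pinning down $\xi=\typeD$ uniquely. A uniform law of large numbers over the compact simplex, together with the fact that the geometric prior on $L$ and the Dirichlet priors contribute only $o(m)$ (the $e^{-\tau L}$ penalty keeps $\hL$ from inflating the objective), then forces the maximizing marginal $\typeDH^{p,W_m}$ to converge to $\typeD$, and I would transfer this to the KL statement $D(\typeDH^{p,W_m}\,\|\,\typeD)\to 0$ via the continuity-on-a-compact-set device of Lemma~\ref{le:compact}, applied with $f=D(\cdot\,\|\,\cdot)$ on the support of $\typeD$.

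\textbf{Main obstacle.} The hard part will be this last transfer together with the boundary behaviour of the estimator: the Dirichlet priors push each $\hat\theta_{l,n}$ into the interior, so genuine care is needed to guarantee the asymptotic absolute continuity of $\typeDH^{p,W_m}$ with respect to $\typeD$ that is encoded by the support set $T$ in Lemma~\ref{le:KL_convergence}, so that $D(\typeDH^{p,W_m}\,\|\,\typeD)$ is finite and vanishes rather than being dominated by mass placed off the support of $\typeD$. Establishing the uniform convergence and the uniqueness of the limiting maximizer in the face of the non-identifiable $(\eta,\theta)$-parameterization and the unbounded mixture order $L$ is where the real work lies; collapsing everything onto the identifiable, compact type-marginal coordinate $\xi$ is the device that makes the argument tractable.
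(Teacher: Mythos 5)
Your proofs of identifiability and convexity are correct. Identifiability is argued exactly as in the paper: restricting the ratings marginal to fully rated vectors gives $\ratingsD(\ow)=(1-q)^N\typeD(\ow)$, and dividing by $(1-q)^N>0$ recovers $\typeD$. For convexity you take a genuinely different route: the paper never identifies $\jointDSet^{q}$ with the full simplex; instead, given two members with parameters $(L,\eta,\theta)$ and $(L',\eta',\theta')$, it exhibits their convex combination explicitly as a mixture with $L+L'$ components and weights $(\lambda\eta_1,\ldots,\lambda\eta_L,(1-\lambda)\eta'_1,\ldots,(1-\lambda)\eta'_{L'})$. Your argument --- first showing via point-mass components that the mixtures of products realize every PMF over $\oS^N$, then using linearity of $\typeD\mapsto\jointD_{\typeD,q}$ --- is equally valid, and has the side benefit of establishing the fact, stated separately in Section~\ref{su:nb}, that the type marginals of $\jointDSet^{q}$ exhaust all PMFs over $\oS^N$.

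The consistency part, however, is a program rather than a proof, and the steps you leave open are precisely the ones that carry the mathematical weight. The paper does not attempt a from-scratch M-estimation argument: it invokes Wald's consistency theorem \cite{Wald}, working (as you do) in the identifiable type-marginal coordinate, and the one substantive hypothesis it verifies --- that the likelihood of the data under $\typeDH^{p,W_m}$ is at least a constant $b>0$ times its likelihood under the truth $\typeD^*$, uniformly over $m$ and all realizations --- follows from a short dominance argument: because $(\hL,\heta,\htheta,\hq)$ maximizes the posterior, the likelihood ratio is bounded below by the prior ratio $f(L^*,\eta^*,\theta^*,q)/f(\hL,\heta,\htheta,\hq)$, which is then bounded below by a constant independent of $m$. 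This device is absent from your plan, and your proposed substitutes are exactly the holes you acknowledge. First, the ``uniform law of large numbers over the compact simplex'' is not routine: the summands $\log\xi_{O}(w_{O})$ are unbounded below near the boundary of the simplex, and handling this unboundedness (via integrability of suprema over shrinking neighborhoods together with compactness) is the actual content of Wald's theorem --- so you would in effect be re-proving it. Second, the claim that ``the $e^{-\tau L}$ penalty keeps $\hL$ from inflating the objective'' is unsupported: $L$ is unbounded, the estimate $\hL^{W_m}$ may grow with $m$, and the prior density at the estimate involves normalizing constants ($c_{\eta}^{L}$, and $\hL N$ factors of $c_{\theta}$) that grow with $L$, so ``the prior contributes $o(m)$ at the estimator'' is not self-evident and requires a separate argument controlling $\hL^{W_m}$; the paper's MAP-dominance inequality is the mechanism that converts the prior into a fixed constant without any such control. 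Your final step --- transferring $\|\typeDH^{p,W_m}-\typeD^*\|_1\rightarrow 0$ to the KL statement via continuity on a compact support set, as in Lemma~\ref{le:compact} --- does match the paper's closing move, but without resolving the two gaps above, the argument does not close.
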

\begin{proof}
To show that $\jointDSet^{q}$ is identifiable, consider $\jointD$, $\jointD' \in \jointDSet^{q}$
for which $\ratingsD = \ratingsD'$. We then have for each $(\ow,w) \in \oS^N \times S^N$ where $\ow \rightarrow w$,
\begin{eqnarray*}
&& \jointD(\ow,w) \\
&=& q^{\| w \|_?} (1-q)^{N-\| w \|_?} \typeD(\ow) \\
&=& q^{\| w \|_?} (1-q)^{N-\| w \|_?} \ratingsD(\ow) (1-q)^{-N}\\
&=& q^{\| w \|_?} (1-q)^{N-\| w \|_?} \ratingsD'(\ow) (1-q)^{-N}\\
&=& q^{\| w \|_?} (1-q)^{N-\| w \|_?} \typeD'(\ow) \\
&=& \jointD'(\ow,w).
\end{eqnarray*}
Hence, $\jointD^{q}$ is identifiable.

To show that $\jointDSet^{q}$ is convex,
consider arbitrary PMFs $\jointD$, $\jointD' \in \jointDSet^{q}$.
For each $\lambda \in [0,1]$, their convex combination $\jointD^{\lambda}$ satisfies
\begin{eqnarray*}
&& \jointD^{\lambda}(\ow,w) = \lambda \jointD(\ow,w) + (1-\lambda) \jointD'(\ow,w) \\
&=& q^{\| w \|_?} (1-q)^{N-\| w \|_?}
\left( \sum_{l=1}^L \lambda \eta_l \prod_{n=1}^N \theta_{l,n}^{(\ow_n)} +
\sum_{l=1}^{L'} \lambda {\eta'}_l \prod_{n=1}^N {\theta'}_{l,n}^{(\ow_n)}\right) \\
&=& q^{\| w \|_?} (1-q)^{N-\| w \|_?}
\left( \sum_{l=1}^{L+L'} \heta_l \prod_{n=1}^N \htheta_{l,n}^{(\ow_n)} \right)
\end{eqnarray*}
for each $(\ow,w) \in \oS^N \times S^N$ that $\ow \rightarrow w$,
where $\heta_l = \lambda \eta_l$ for $l \le L$, $\heta_l = (1-\lambda) {\eta'}_{l-L}$ for $l > L$,
and for each $n$, $\htheta_{l,n} = \theta_{l,n}$ for $l \le L$ and $\htheta_{l,n} = \theta_{l-L,n}$ for $l > L$.
Hence, $\jointD^{\lambda} \in \jointDSet^{q}$. $\jointDSet^{q}$ is convex.

We now show that the naive Bayes algorithm $p$ is consistent with respect to $\jointDSet^{q}$
by using results in \cite{Wald}. We will use notation in the definition of consistent CF algorithms
in Section \ref{su:asymptotic_linear}. We also denote by $\typeD^*$ the true PMF over $\oS^N$
and let $\typeDSet = \{\typeD: \jointD \in \jointDSet^q \}$ be the set of all marginals over $\oS^N$
of PMFs in $\jointDSet^q$.
According to Theorem $2$ in \cite{Wald}, if
\begin{enumerate}
\item\label{it:technical} $\oS^N$, $\typeDSet$, and $\typeD^*$ satisfy certain technical conditions
specified in \cite{Wald}, and
\item\label{it:max} There exists a constant $b>0$ such that for all $m$ and all realizations of $\{W_m\}$,
$$\frac{ \prod_{w \in W_m} \typeDH^{p,W_m}(w) }{ \prod_{w \in W_m} \typeD^*(w) } \ge b,$$
\end{enumerate}
then
\begin{equation}\label{eq:oneNormConv}
\| \typeDH^{p,W_m} - \typeD^* \|_1 \rightarrow 0, \, a.s.
\end{equation}
We verify that condition \ref{it:technical} holds in our problem instance and desire to find a
$b$ that satisfies condition \ref{it:max}.
To do so, we denote by $(L^*,\eta^*,\theta^*)$ the parameters corresponding to $\typeD^*$ and
by $(\hL^{W_m}, \heta^{W_m}, \htheta^{W_m})$ the parameters corresponding to $\typeDH^{p,W_m}$.
Recall that we tune the parameter $\tau$ by cross validation over a range $\Gamma$. Let $\tau^*$
be the value that we settle at.
We let
$$b = \left( \min_{\tau \in \Gamma} p_L^{\tau}(L^*) \right) \left( f_{\eta}^L(\eta^*) \prod_{l,n} f_{\theta} (\theta_{l,n}^*) \right).$$
Because $\typeDH^{p,W_m}$ maximizes the posterior PDF,
for all $m$ and all realizations of $\{W_m\}$, we have
$$\frac{ \prod_{w \in W_m} \typeDH^{p,W_m}(w) }{ \prod_{w \in W_m} \typeD^*(w) } \ge
\frac{p_L^{\tau^*}(L^*) f_{\eta}^L(\eta^*) \prod_{l,n} f_{\theta} (\theta_{l,n}^*)}
{p_L^{\tau^*}(\hL^{W_m}) f_{\eta}^L(\heta^{W_m}) \prod_{l,n} f_{\theta} (\htheta_{l,n}^{W_m})} \ge b.$$
Hence, condition \ref{it:max} holds, establishing (\ref{eq:oneNormConv}).
By the continuity of KL divergence and properties of almost sure convergence, for all $\epsilon>0$, we have
$$\Pr \left( D\left( \typeDH^{p,W_m} \mid\mid \typeD^* \right) \ge \epsilon \right) \rightarrow 0,$$
which implies that $p$ is consistent with respect to $\jointDSet^q$.
\end{proof}

\section{Table of Notation}\label{app:notation}
\begin{table}[h!]
\begin{center}
\begin{tabular}{|r|p{14cm}|}
\hline
Notation & Description \\
\hline
$\oS$ & Set of possible rating values. \\
$S$ & Union of $\oS$ and the singleton set containing the question mark. \\
$N$ & Number of products. \\
$Y$ & Set of ratings vectors provided by honest users. \\
$Z$ & Set of ratings vectors provided by manipulators. \\
$W$ & Training set consisting of all ratings vectors. \\
$r$ & Fraction of ratings vectors generated by manipulators. \\
$M$ & Number of ratings vectors in the training set $W$. \\
$n$ & Number of ratings provided by an active user. \\
$x^k$ & Ratings vector in $S^N$ that contains $k$ ratings provided by an active user. \\
$\nu_k$ & The index of a $k$th inspected product by an active user. \\
$p_{\nu_k,x^{k-1},W}$ & {\footnotesize PMF generated on the rating for product $\nu_k$, for a user
with history $x^{k-1}$, based on training set $W$.} \\
$p$ & A CF algorithm. \\
$\sigma_N$ & Set of permutations of $\{1,\ldots,N\}$. \\
$d^{\KL}$ & KL distortion. \\
$d^{\RMS}$ & RMS distortion. \\
$d^{\B}$ & Binary prediction distortion. \\
$\tx_{\nu_k,x^{k-1},W}$ & {\footnotesize Scalar prediction of rating of product $\nu_k$ for a user
with history $x^{k-1}$, based on training set $W$.} \\
$\cx_{\nu_k,x^{k-1},W}$ & {\footnotesize Binary prediction of rating of product $\nu_k$ for a user
with history $x^{k-1}$, based on training set $W$.} \\
$\ow^m$ & $m$th user type in $W$. \\
$w^m$ & $m$th ratings vector in $W$. \\
$\ow_n$ or $\ox_n$ & Rating of an $n$th product based on a user type. \\
$w_n$ or $x_n$ & Observed rating of an $n$th product. \\
$\ow \rightarrow w$ & For each $n$, either $w_n=\ow_n$ or $w_n=?$. \\
$\| w \|_?$ & $|\{n:w_{n}=?\}|$. \\
$\jointD$ & Joint PMF over $\oS^N \times S^N$. \\
$\typeD$ & Marginal PMF over $\oS^N$. \\
$\ratingsD$ & Marginal PMF over $S^N$. \\
$\hJointD^*$ & Joint PMF over $\oS^N \times S^N$ of honest users. \\
$\mJointD^*$ & Joint PMF of $\oS^N \times S^N$ of manipulators. \\
$\jointDSet$ & Set of joint PMFs over $\oS^N \times S^N$ of interest. \\
$\typeDSet$ & Set of marginal PMFs over $\oS^N$ of interest. \\
$\typeDH^{p,W}$ & PMF for $\ox$ generated by algorithm $p$ based on training set $W$. \\
$\typeDH^{p,W}(\cdot | x)$ & {\footnotesize Conditional PMF for $\ox$ conditioned on ratings vector $x \in S^N$,
generated by algorithm $p$ based on training set $W$.} \\
$T_k$ & Simplex $\{(t_1,\ldots,t_k): t_j \ge 0, \forall \, 1 \le j \le k. \sum_{j=1}^k t_j = 1 \}$. \\
$\hd$ & Empirical RMS distortion. \\
$\hcE$ & Empirical RMS prediction error. \\
$X$ & Set of ratings vectors provided by active users. \\
$V$ & Validation set of ratings vectors. \\
$\nu_k^x$ & The index of a $k$th inspected product by an active user with ratings vector $x$. \\
\hline
\end{tabular}
\caption{Table of notation.}
\end{center}
\end{table}
\end{document}